\documentclass[letterpaper]{article} % DO NOT CHANGE THIS
\usepackage[preprint]{aaai2027}  % DO NOT CHANGE THIS
\usepackage[hyphens]{url}  % DO NOT CHANGE THIS
\usepackage{graphicx} % DO NOT CHANGE THIS
\usepackage{natbib}  % DO NOT CHANGE THIS AND DO NOT ADD ANY OPTIONS TO IT
\usepackage{caption} % DO NOT CHANGE THIS AND DO NOT ADD ANY OPTIONS TO IT
\usepackage{algorithm}

\usepackage{amsthm}
\newtheorem{theorem}{Theorem}

\usepackage{newfloat}
\usepackage{listings}
\DeclareCaptionStyle{ruled}{labelfont=normalfont,labelsep=colon,strut=off} % DO NOT CHANGE THIS
\floatstyle{ruled}
\newfloat{listing}{tb}{lst}{}
\floatname{listing}{Listing}

\usepackage{booktabs}
\usepackage[american]{babel}
\usepackage{mathtools}
\usepackage{amssymb}
\usepackage{subcaption}
\usepackage{xcolor}
\usepackage{float}

\usepackage{graphicx}
\usepackage[utf8]{inputenc}
\usepackage{amssymb,amsmath,array}
\usepackage{algorithm}
\usepackage{algpseudocode}
\usepackage{subcaption}
\usepackage{svg}
\usepackage{multirow} 
\usepackage{arydshln}% For dashed lines
\usepackage{xspace}
\usepackage{todonotes}
\usepackage{adjustbox}
\usepackage{colortbl}
\usepackage{xcolor}
\usetikzlibrary{arrows.meta}
\usepackage{siunitx}
\usepackage{xcolor}
\usepackage{array}
\usepackage{caption}
\usepackage{pgfplots}
\pgfplotsset{compat=1.18}
\usepackage{todonotes}
\definecolor{secondaryblue}{HTML}{007eb9}
\definecolor{textgreen}{HTML}{5e8400}
\colorlet{textblue}{secondaryblue}
\usepackage{pgfplots}
\pgfplotsset{compat=1.18}
\usepackage{placeins}

\title{Explaining Image Similarity with Automatically \\Extracted Concept Activation Vectors}
\author{
    Isaac Roberts\textsuperscript{\rm 1}\corresponding,
    Petra Bevandi\'c\textsuperscript{\rm 2},
    Alexander Schulz\textsuperscript{\rm 1},
    Barbara Hammer\textsuperscript{\rm 1} 
}
\affiliations{
    \textsuperscript{\rm 1}Bielefeld University - Faculty of Technology\\
    \textsuperscript{\rm 2}University of Zagreb - Faculty of Electrical Engineering and Computing\\
{iroberts,aschulz,bhammer}@techfak.uni-bielefeld.de, petra.bevandic@fer.hr
}

\newcommand{\compactSubSection}[1]{\vspace{0.05cm}\noindent\textbf{#1}}

\newcommand{\localRel}{e_l}
\newcommand{\groupRel}{e_G}

\begin{document}

\maketitle

\begin{abstract}
Image similarity underlies many computer vision applications, yet it is often unclear why two images receive a high or low similarity score. Existing explainability methods often rely on gradient-based attribution maps to provide local justifications for similarity. These approaches struggle to provide global insights into what specifically drives similarity in regions of an embedding space, such as texture, shape, or color. We introduce a model- and metric-agnostic framework that explains image similarity using Concept Activation Vectors (CAVs) extracted automatically via Sparse Autoencoders (SAEs). Given a pair of images, we perturb their embeddings along discovered concept directions and measure the resulting change in a chosen similarity function, yielding concept importances. For image pairs, we provide localization with concept attribution maps. We extend this procedure to group-level settings, explaining what drives similarity across a cluster of images rather than a single pair, and further, we introduce Exemplar Retrieval, aiming to recover samples with similar reasons contributing to similarity. Our experiments show that our latent perturbations are more faithful to the underlying data distribution than pixel-space baselines, and that concept importances linearly recover the true similarity score. Qualitative results further confirm the usefulness of our methods in understanding a model's individual and group similarity judgments. 

% Qualitative results further show that our method surfaces interpretable, human-recognizable concepts underlying both individual- and group-level similarity judgments. 
\end{abstract}

\section{Introduction}

\begin{figure}[t!]
    \centering
    % First subfigure
    \begin{subfigure}[c]{0.23\textwidth}
        \centering
        \includegraphics[width=0.99\textwidth]{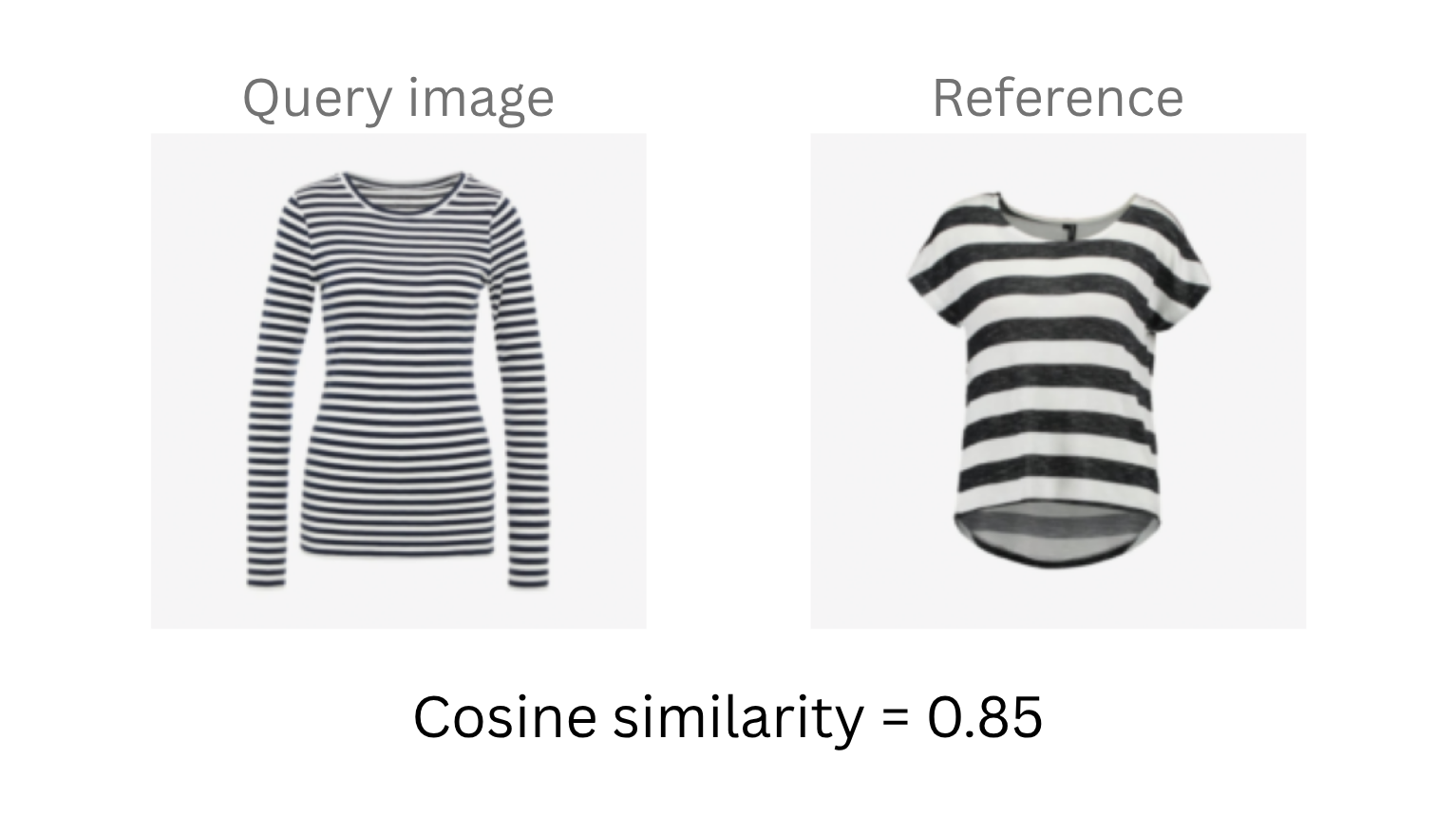}%45
        \caption{Input images}
        %\includegraphics[width=0.4\textwidth]{assets/local_exp_figure/attribute_saliency_map.pdf}
        %\caption{Input images \hspace{1.5cm} Saliency map + attribute}
    \end{subfigure}
    \begin{subfigure}[c]{0.23\textwidth}
    \centering
    \includegraphics[width=0.99\textwidth]{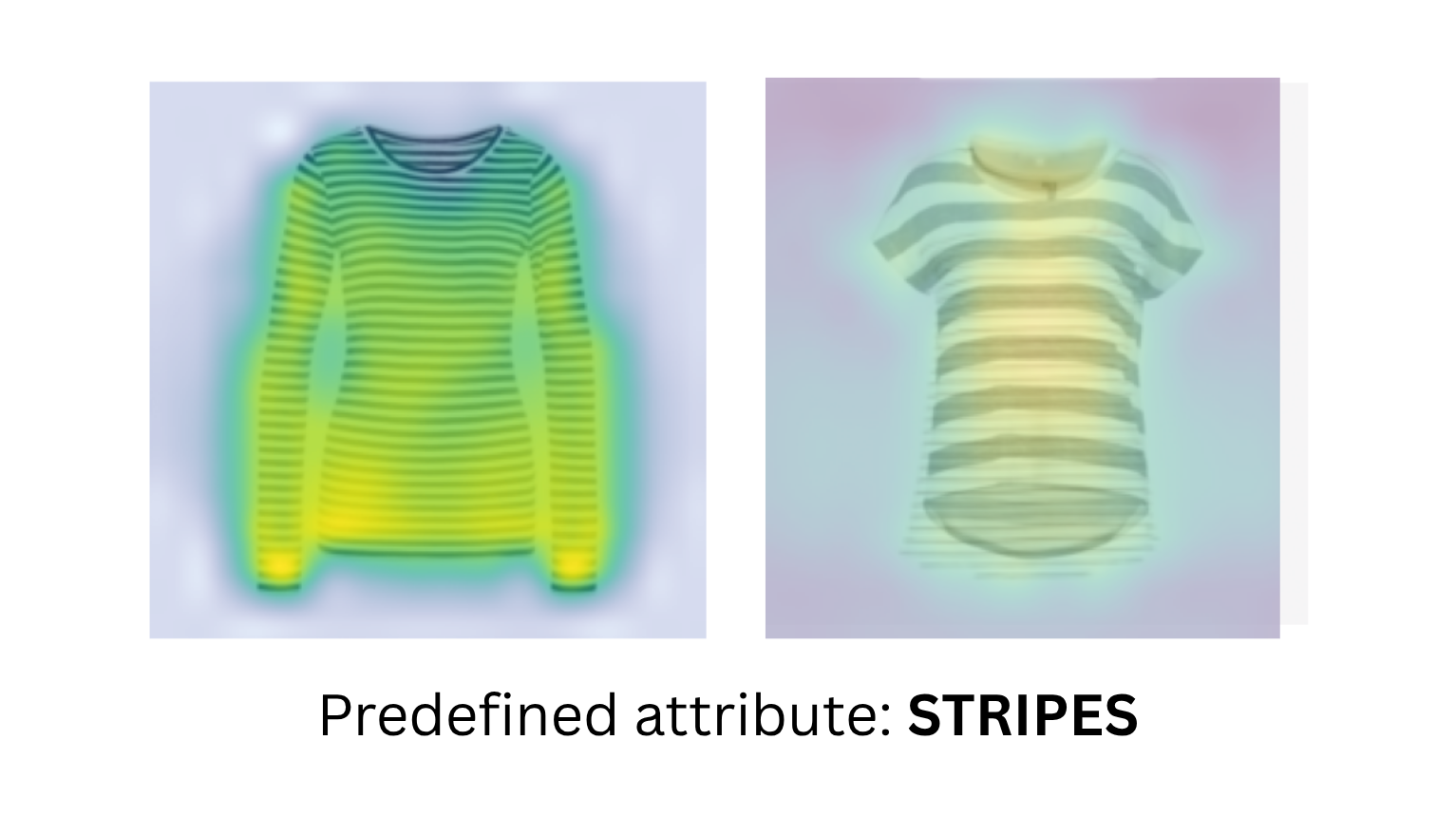}%4
    \caption{Saliency map + attribute}
    \end{subfigure}\\
    \begin{subfigure}[c]{0.48\textwidth}
        \centering
        \includegraphics[width=0.99\textwidth]{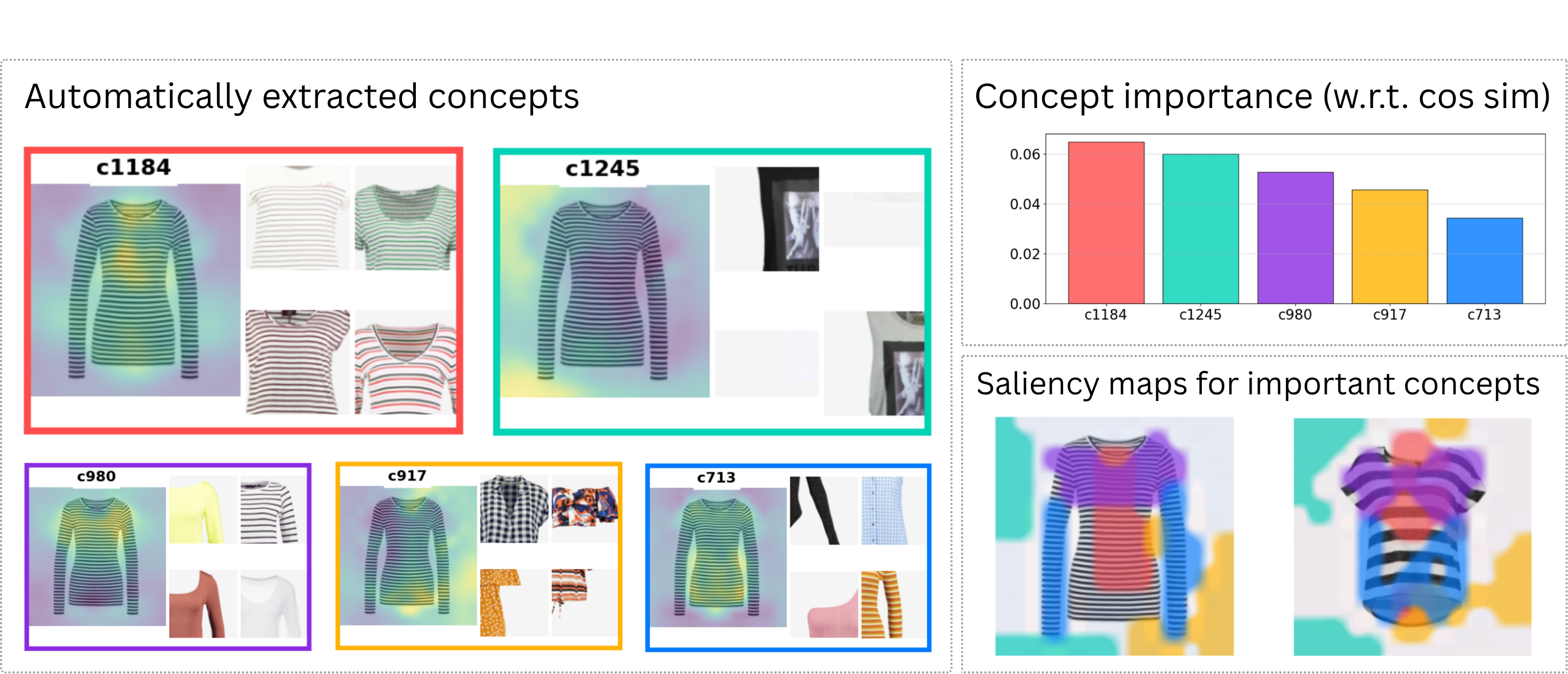}
        \caption{Concept Activation Vector based explanation}
    \end{subfigure}  
    \caption{%We illustrate our approach to explaining image similarity metrics. 
    (a) Similarity between samples is usually measured in pretrained embedding spaces. Suitable scores include euclidian distance or cosine similarity. (b) What exactly gives rise to particular similarity is not always obvious. Prior work~\cite{plummer2020these,chen2023sim2word} typically relies on identifying relevant predefined attributes and building corresponding saliency maps. (c) We propose to extract relevant concepts dynamically given a model and set of images. Semantic meaning of each concept is visualized via image crops that most strongly activate it. We rank the concepts by their impact on a chosen similarity score. We may ground the most important concepts with saliency maps.
    }
    \vspace{-0.3cm}
    \label{fig:sim_example}
\end{figure}

% importance of image similarity
Automated similarity estimation between input instances is a central tool for various AI tasks. In the vision domain, these include image retrieval \cite{radenovic2018revisiting,chen2022deep}, face recognition \cite{sun2014deep,gururaj2024comprehensive},
%zero-shot object detection,
and fashion compatibility \cite{tan2019learning,Hsiao_2018_CVPR},
with numerous further applications in the text domain.
% often done with deep metric learning
Image similarity estimation is often done in the latent space of pretrained deep networks 
%trained with deep metric learning
\cite{mohan2023deep},
where we assume a more structured and semantically
meaningful representation of image content.
% overview on deep metric learning in vision

% prior work explained primarily classification models or models with one input
Though closer inspections of pretrained deep networks may provide some insights into their behavior with respect to similarity,
%and building trust 
standard explainability (xAI) techniques often cannot be directly applied to any given similarity function, which typically require two inputs. 
Classical xAI methods, such as input attribution, were designed primarily for single-input functions such as classification. 

% local vs group explanations
%Although there do exist some extensions to also explain similarity functions
Currently, limited work  extends xAI methods to similarity functions,
\cite{opitz2025interpretable,eberle2020building},
%(see related work for more references),
%the amount of work on this is limited, 
and existing approaches are often restricted to saliency-based explanations. Notable exceptions 
%utilize concepts to 
augment saliency maps with more information
%on what is actually important in the salient regions 
about the observed region
\cite{plummer2020these,chen2023sim2word}. 
Typically, this information is derived from manually defined attributes learned from purposefully labeled data sets. Furthermore, such approaches are confined to local explanations for individual image pairs.

In this work, we propose a novel explainability method for similarity metrics based on pretrained deep models using CAVs. We perform faithful perturbations in the latent space using CAVs and derive importance by their estimated contribution to the similarity function. Our approach can be applied to any pretrained model and similarity metric defined in its embedding space, without predefined concept databases
or additional labeling effort. Finally, we demonstrate that our approach may be extended to group-level explanations, such as explaining clusters in an embedding space and enables actionable insights, which importance is highlighted in recent high-level work \cite{longo2024explainable,moshkovitz2026position}.
%Also, we do not rely on predefined concepts but extract them automatically.
Our contributions are as follows:\begin{enumerate}
    \item A \textbf{novel explainability framework} of image similarity metrics based on automatically extracted CAVs, enabling model- and metric-agnostic pairwise explanations, easily extendable for group analysis.
    \item \textbf{Similarity-based concept attribution} with the importance of each concept derived from the similarity score.
    \item Experiments demonstrating the \textbf{superiority of latent-space perturbations}, \textbf{faithfulness of our explanations}, and actionable \textbf{downstream application(s) usefulness}.
\end{enumerate}

\section{Related work} \label{sec:rel_work}
Explainability in AI has transitioned from a desirable feature to a necessity, primarily to enhance human understanding of model decision-making \cite{burkart2021XAIsurvey}. Methods are generally categorized as either ante-hoc (interpretable by design) or post-hoc (explaining a pre-trained model). Our work focuses on post-hoc, concept-based methods \cite{poeta2023concept,holistic_concepts}. These methods have gained traction because they identify not only where a model focuses, but also what semantic features it is identifying \cite{fel2023craft,poeta2023concept}. Additionally, these features (referred to as concepts) have been shown to be more human-interpretable \cite{fel2023craft,plummer2020these}. Typically, concepts are used to explore node-concept associations (identifying where they are learned) or node-class associations (relating them to output classes). In this work, we extend these ideas to explain similarities in data comparisons. 

Explaining similarity has recently gained momentum \cite{opitz2025interpretable,eberle2020building,lin2021xcos,williford2020explainable,malkiel2022interpreting,10.1145/3726302.3729971, gam1}. For instance, Layer-Wise Relevance Propagation has been adapted to decompose similarity scores \cite{eberle2020building}. In facial recognition, XCos \cite{lin2021xcos} compares patch-wise cosine maps to network attention maps, while XFR \cite{williford2020explainable} estimates pixel contributions to similarity via synthetic in-painting. Integrated Jacobians \cite{moeller2023attribution,moeller2024approximate} extend Integrated Gradients for attributions between the two inputs of the similarity function.

However, few works utilize concepts to explain similarity \cite{plummer2020these,chen2023sim2word}. Relying on saliency maps of informative attributes: SANE \cite{plummer2020these} trains a model to predict the most explanatory saliency map, while Sim2Word \cite{chen2023sim2word} applies input-space masking to observe changes in similarity. The latter, however, is limited to counterfactual examples and perturbations in the raw image space. Furthermore, while these works focus on local, pairwise explanations (why two specific images are similar), our approach provides both local and group explanations. CSIM \cite{csim} extracts concepts with SAEs, enables explainability with vector arithmetic operations, and Human Similarity Steering. Only CSIM uses dictionary learning to extract concepts and additionally assumes they intrinsically explain image similarity. Our work makes the connection between concepts and image similarity concrete by perturbing in the embedding space.

Perturbations through masking are popular due to their model-agnostic nature, but they face significant challenges: perturbed inputs often fall off the data manifold, undermining the reliability of the explanation \cite{ood_exp_paper,nieradzik2025reliable}. Additionally, masking one concept may inadvertently mask another, leading to incorrect attribution \cite{chen2023sim2word}. We propose performing these perturbations within an embedding space and demonstrate that they minimize data distribution impact.

% need to somhow bring in the fact that we can also explain dissimilarity.

% However, we argue that more contributes to the similarity than simply stripes and all of these contributions are quite difficult to capture \textit{a priori} as the attribute labels are captured. 

%\label{subsec:relWork}

%I think this paragraph should be included in some capacity depending on the route we take in the paper. 
% At the moment, we know vector space perturbations perform on par with image space pertubations but not for dissimlar concepts
% We would like to venture into showing that the manifold problem with image space pertubations **m ight can be lessend by perturbing in the vector space,

%is there a way to show that vector space pertubations are more orthogonal in the sense that they do not overlap with others?

\section{Methodology} \label{sec:pipeline}
In this section, we describe how we obtain our novel pairwise and group explanations. We introduce 3 key steps which make up our method: (1) decompose a foundation model's embeddings with dictionary learning to generate concepts, (2) use the concepts to perturb the image embeddings and measure the change in the similarity function for pairs of images, and (3) group the explanations.

\compactSubSection{Fundamentals.} We assume a dataset $X$, an embedding function $g: X \rightarrow \mathbb{R}^d$, and a similarity function $f: \mathbb{R}^d \times \mathbb{R}^d \rightarrow \mathbb{R}$. For any $\mathbf{x} \in X$, the function $g(\mathbf{x})$ yields an activation vector. Let $\mathbf{A} \in \mathbb{R}^{n \times d}$ be the matrix of activations for $n$ samples, where a row $\mathbf{a}_k$ is the embedding of the $k$-th sample.
% Further, we assume that we are operating under the Linear Representation Hypothesis (LRH) \cite{Linear_hypothesis} such that neural network representations encode abstract, interpretable features as linearly accessible.

\compactSubSection{(1) Sparse Dictionary Learning.} We decompose $\mathbf{A}$ into a weight matrix $\mathbf{U} \in \mathbb{R}^{n \times c}$ and a concept dictionary $\mathbf{V} \in \mathbb{R}^{d \times c}$. This decomposition is solved by minimizing the Frobenius norm:
\begin{equation}
(\mathbf{U}, \mathbf{V}) = \arg \min_{\mathbf{U}, \mathbf{V}} \| \mathbf{A} - \mathbf{UV}^\top \|^2_F
\end{equation}\label{eq:dict_learning}
Thereby an individual activation $\mathbf{a}_k$ (a row in $\mathbf{A}$) is approximated by a linear combination of concepts $\mathbf{a}_k \approx \sum_{i=1}^c u_{k,i} \mathbf{v}_i^\top$, where $u_{k,i}$ is the coefficient in the $k$-th row and $i$-th column of $\mathbf{U}$, and $\mathbf{v}_i^\top$ is the $i$-th row of $\mathbf{V}^\top$. We utilize different SAE formulations for this decomposition: Top-K \cite{gao2025scaling}, JumpReLU \cite{rajamanoharan2024jumping}, and Vanilla SAE \cite{bricken2023monosemanticity}.

%need to show somehow in experiment why we use this version of the explanation and not double sided at the same time, or single sided. 

\compactSubSection{(2) Perturb the Embedding.} To explain the similarity between two activations $\mathbf{a}_i$ and $\mathbf{a}_j$, we measure the importance of concept $c_k$ by masking via a perturbation strategy:
%define a perturbation strategy that masks a specific concept $c_k$. The importance of concept $c_k$ is defined as:
% \begin{equation}
% \Delta c_i = \left( f(\mathbf{a}_1, \mathbf{a}_2) - f(\mathbf{a}_1/c_i, \mathbf{a}_2) \right) + \left( f(\mathbf{a}_1, \mathbf{a}_2) - f(\mathbf{a}_1, \mathbf{a}_2/c_i) \right)
% \end{equation}

%\begin{equation}
%\Delta c_i = \underbrace{\left( f(\mathbf{a}_1, \mathbf{a}_2) - f(\mathbf{a}_1/c_i, \mathbf{a}_2) \right)}_{\text{component 1}} + \underbrace{\left( f(\mathbf{a}_1, \mathbf{a}_2) - f(\mathbf{a}_1, \mathbf{a}_2/c_i) \right)}_{\text{component 2}}
%\Delta c_k = \left( f(\mathbf{a}_i, \mathbf{a}_j) - f(\mathbf{a}_i/c_k, \mathbf{a}_j) \right) + \left( f(\mathbf{a}_i, \mathbf{a}_j) - f(\mathbf{a}_i, \mathbf{a}_j/c_k) \right)
%\label{eq:exp_computation}
%\end{equation}
\begin{align}
    \localRel^{k}(\mathbf{a}_i, \mathbf{a}_j) =& \left( f(\mathbf{a}_i, \mathbf{a}_j) - f(\mathbf{a}_i/c_k, \mathbf{a}_j) \right) \nonumber \\ 
    +& \left( f(\mathbf{a}_i, \mathbf{a}_j) - f(\mathbf{a}_i, \mathbf{a}_j/c_k) \right)
    \label{eq:exp_computation}
\end{align}
where the masked activation $\mathbf{a}_i/c_k$ is calculated by removing the contribution of the $k$-th concept:
\begin{equation}
%\mathbf{a}_k/c_i = \mathbf{a}_k - u_{k,i} \mathbf{v}_i^\top
\mathbf{a}_i/c_k = \mathbf{a}_i - u_{i,k} \mathbf{v}_k^\top
\label{eq:perturbation}
\end{equation}
and $\localRel^{k}(\mathbf{a}_i, \mathbf{a}_j)$ is the $k$-th entry of the local explanation vector $\localRel(\mathbf{a}_i, \mathbf{a}_j) \in \mathbb{R}^{c}$ for $c$ concepts, 
%By repeating this process for all $c$ concepts, we obtain a vector of importance scores 
characterizing the shared and distinct attributes of the image pair. We note that by performing the perturbations for both inputs, we obtain a \textit{symmetric} explanation such that $\localRel(\mathbf{a}_i, \mathbf{a}_j) = \localRel(\mathbf{a}_j, \mathbf{a}_i)$. This effectively strikes a balance with previous literature \cite{plummer2020these,chen2023sim2word}, while also allowing the explanation to express the respective similarity contributions from both inputs.

\compactSubSection{(3) From Local to Group} Using the local explanations obtained above, we define an explanation for a group $G$:

\begin{equation}
\groupRel = \frac{1}{\binom{|G|}{2}} \sum_{\substack{i, j \in G \\ i < j}} \localRel(\mathbf{a}_i, \mathbf{a}_j) \label{eq:group_eq}
\end{equation}
Groups can be obtained in a multitude of ways, depending on the downstream task and the underlying data. For instance, in a retrieval setting, one could fix $i$ from equation \ref{eq:group_eq} 
as the query and compute $\groupRel$ over the data points of interest, such as a group of reference images.
If a semantic grouping exists, e.g., from labels, distance-based rankings, or prior human knowledge, these constitute natural groupings from which pairs can be formed and their explanations computed to understand their similarities and dissimilarities. 

% Note about how authors from \cite{plummer2020these} talk about one-sided versus two-sided explanations.
\begin{figure}[t]
    \centering
    
    \begin{subfigure}[b]{0.15\textwidth}
        \centering
        \includegraphics[width=\textwidth]{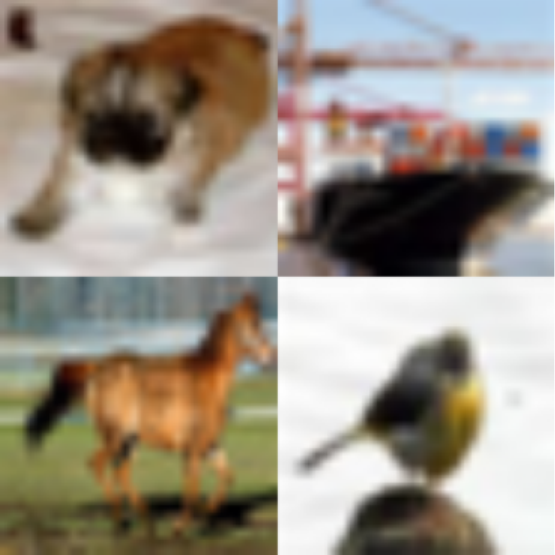}
        \caption{Original}
        \label{fig:pert_orig}
    \end{subfigure}
    \begin{subfigure}[b]{0.15\textwidth}
        \centering
        \includegraphics[width=\textwidth]{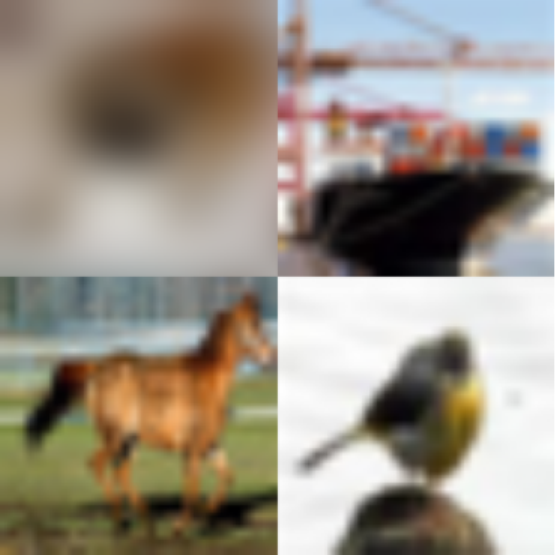}
        \caption{H. Blur}
        \label{fig:pert_blur}
    \end{subfigure}
    \begin{subfigure}[b]{0.15\textwidth}
        \centering
        \includegraphics[width=\textwidth]{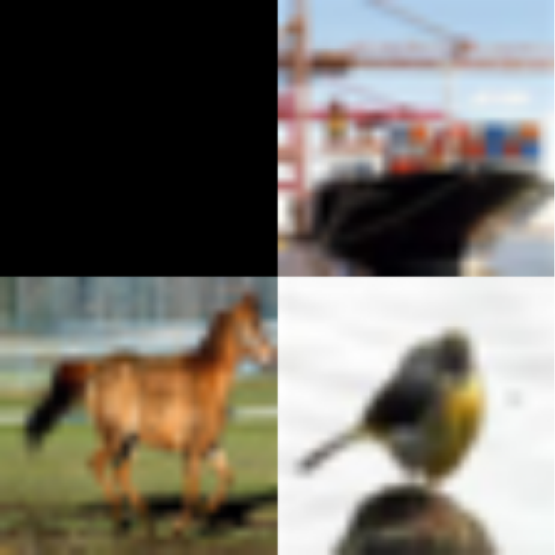}
        \caption{Black}
        \label{fig:pert_black}
    \end{subfigure}

    %\\[1ex]

    \begin{subfigure}[b]{0.15\textwidth}
        \centering
        \includegraphics[width=\textwidth]{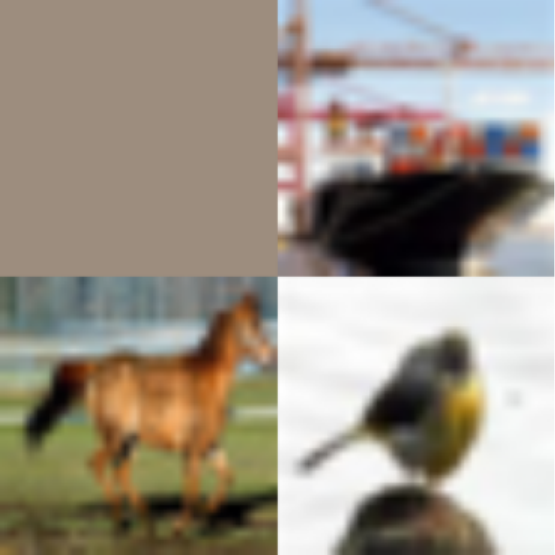}
        \caption{Q Avg.}
        \label{fig:pert_mean}
    \end{subfigure}
    \begin{subfigure}[b]{0.15\textwidth}
        \centering
        \includegraphics[width=\textwidth]{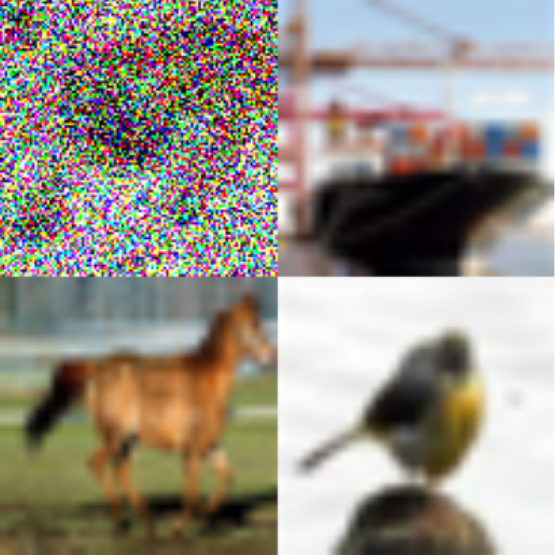}
        \caption{Random}
        \label{fig:pert_noise}
    \end{subfigure}
    \begin{subfigure}[b]{0.15\textwidth}
        \centering
        \includegraphics[width=\textwidth]{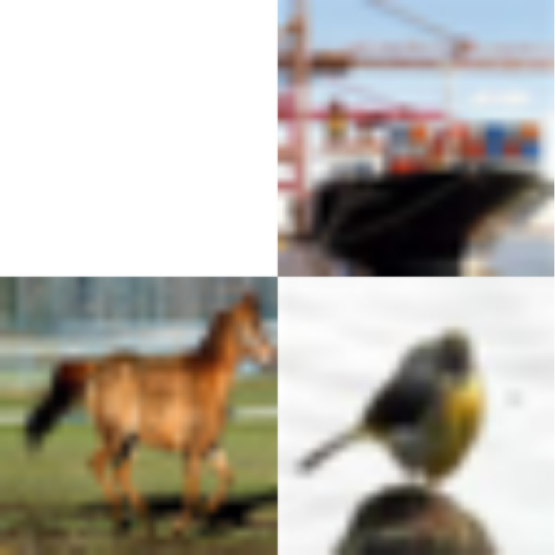}
        \caption{White}
        \label{fig:pert_white}
    \end{subfigure}
    
    \caption{Input perturbations on our synthetic
    dataset Multi-CIFAR-10 Collage applied to the top left quadrant.}
    \label{fig:all_perturbations}
\end{figure}

\compactSubSection{Visualizing the Resulting Concepts.} We visualize the concepts together with the concept importances: we depict the importances $\localRel(\mathbf{a}_i, \mathbf{a}_j)$ or $\groupRel$ in a barplot showing the most important ones; for illustration, we show a few image crops taken from the data which activate the according concept the strongest \cite{borowski2021exemplary} and additionally highlight which areas activate this concept the most, similarly as \cite{fel2025archetypal}. An example is given in Figure \ref{fig:sim_example} (c).

\section{Faithfulness Evaluation} 
\label{sec:experiments}

% we need to say here that first we show that our pertubations are faithful to the data distirbution and the metric. To show this we introduce a synthetic dataset. Our second experiment aims to show that our explanations indeed explain the similarity score. 

We validate our approach on synthetic and real-world data by examining the faithfulness of our latent perturbations and generated explanations. We first verify that the dictionaries successfully encode the target concepts. Next, we analyze the impact of perturbations on both the similarity function and the broader data distribution. Finally, we demonstrate that our explanations linearly recover the similarity function.

% To show the benefits of our approach, we examine the faithfulness of our latent perturbations and then the produced explanations using synthetic and real-world data. We begin by assessing how well the dictionaries capture the concepts. Then, we discuss the perturbation's effect on the similarity function. Next, we analyze the impact of various perturbations on the underlying data distribution.  Finally, we demonstrate that our produced explanations linearly recover the similarity function.
%first construct synthetic data to highlight the faithfulness of our latent perturbations with respect to the embeddings and a given similarity metric. 
%Next, we validate that the produced explanation vectors indeed explain the given metric. 

% \input{assets/collages}
\begin{figure}[t]
    \centering
    
    \begin{subfigure}[b]{0.22\textwidth}
        \centering
        \includegraphics[width=\textwidth]{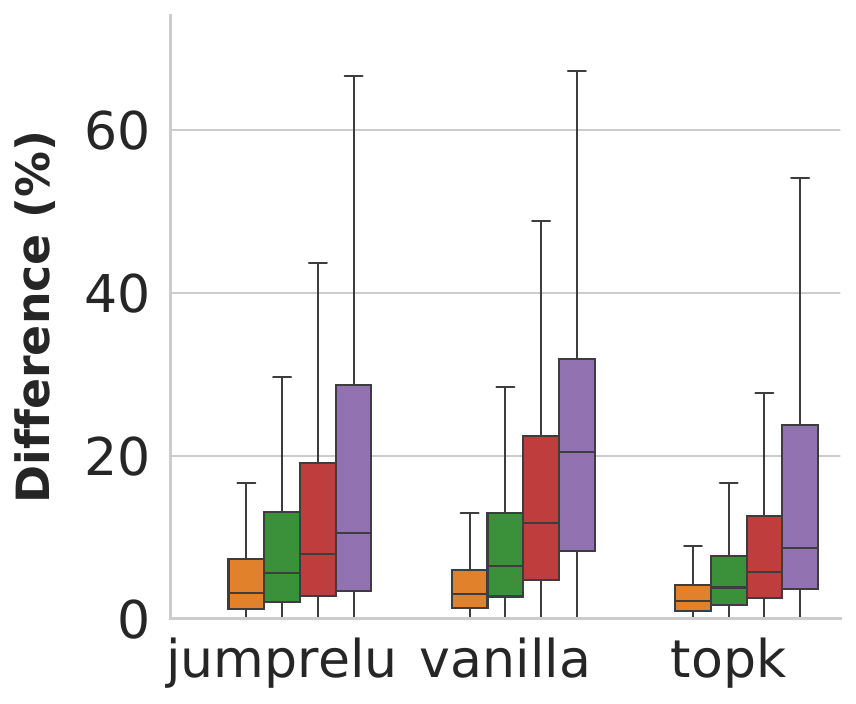}
        \caption{Cosine}
        \label{fig:cosine}
    \end{subfigure}
    %\begin{subfigure}[b]{0.15\textwidth}
    %    \centering
    %    \includegraphics[width=\textwidth]{assets/sae_pretrained_dot_product_activation_pct.pdf}
    %    \caption{Dot Product}
    %    \label{fig:dot_product}
    %\end{subfigure}
    \begin{subfigure}[b]{0.22\textwidth}
        \centering
        \includegraphics[width=\textwidth]{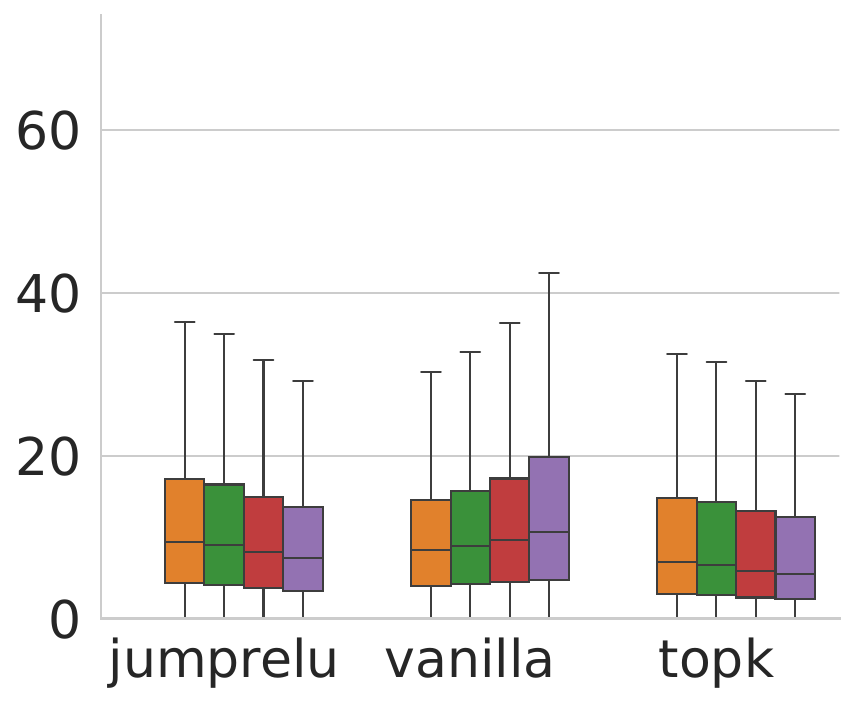}
        \caption{-Euclidean}
        \label{fig:euclidean}
    \end{subfigure}
   
    %%%%%%%%%%%%%%%%%%%%%%
    % Horizontal Legend
    %%%%%%%%%%%%%%%%%%%%%%
    \scalebox{0.8}{
    \begin{tikzpicture}
    \begin{axis}[
    hide axis,
    xmin=0, xmax=1,
    ymin=0, ymax=1,
    legend columns=5,
    legend cell align={left},
    legend style={
        font=\normalsize,
        draw=black!20,
        rounded corners=3pt,
        fill=white,
        nodes={inner sep=3pt},
        /tikz/every even column/.append style={column sep=0.6cm},
        title=No. of Image Difference(s) in Collage,
        title style={align=center},
        at={(0.5,1)},
    anchor=center
    },
]

    % --- Legend with Dots instead of Lines ---
    \definecolor{darkorange}{rgb}{1.0, 0.55, 0.0}
    \addlegendimage{darkorange, mark=*, only marks}
    \addlegendentry{2}

    \definecolor{darkspringgreen}{rgb}{0.09, 0.45, 0.27}
    \addlegendimage{darkspringgreen, mark=*, only marks}
    \addlegendentry{4}

    %\definecolor{amaranth}{rgb}{0.9, 0.17, 0.31}
    \definecolor{brightmaroon}{rgb}{0.76, 0.13, 0.28}
    %\definecolor{darkpastelred}{rgb}{0.76, 0.23, 0.13}
    %\definecolor{darkred}{rgb}{0.55, 0.0, 0.0}
    \addlegendimage{brightmaroon, mark=*, only marks}
    \addlegendentry{6}

    \definecolor{amethyst}{rgb}{0.6, 0.4, 0.8}
    \addlegendimage{amethyst, mark=*, only marks}
    \addlegendentry{8}

    \end{axis}
    \end{tikzpicture}}
  \caption{We report the average difference between the baseline similarity and the intervened similarity across hops and training scenarios.}
  \label{fig:sanity_check}
\end{figure}

\compactSubSection{Datasets.} We evaluate our perturbations and explanations performance on the real-world fashion (top body garment) \textit{VITON-HD}~\cite{Choi_2021_CVPR} dataset. Additionally, we construct the \textit{Multi-CIFAR-10 Collage} dataset (example in Figure \ref{fig:all_perturbations}), inspired by \cite{fel2025archetypal}. This dataset provides a ground truth for concept presence and mitigates the ambiguity often found in natural image datasets with poorly defined concept boundaries. Each sample $\mathbf{x} \in \mathcal{D}$ is generated by arranging four distinct images from the CIFAR-10 \cite{krizhevsky2009learning} dataset into a $2 \times 2$ grid. We define "concepts" in this context as the ten CIFAR-10 classes. Crucially, Multi-CIFAR-10 addresses limitations of image-space perturbations, where concepts may be co-localized, making a fair comparison to latent-space perturbations difficult. Furthermore, we intentionally limited intra-class variability by selecting one image per class, ensuring that similarity changes are cleanly attributable to input perturbations. %and that our synthetic ground truth behaves similarly to standard embedding space similarity metrics. 

\compactSubSection{Experiment Details.} Embeddings are obtained from six distinct vision backbones: DINOv2, DINOv3, ResNet50, ConvNeXt, ViT, and SigLIP. We repeat each experiment at least 10 times, recording from each dataset's test set. For all experiments involving Multi-CIFAR-10, we train Top-K, JumpReLU, and Vanilla SAEs on 2,000 randomly generated collages. On VITON-HD, we train on the default train split. More experimental details are available in the Appendix.  

%During training, the Vanilla and JumpReLU SAEs are optimized using a Mean Squared Error (MSE) loss combined with an $\ell_1$ sparsity penalty, whereas the Top-K SAE is trained solely using MSE. 

% For Experiment \ref{exp:ood_fid}, we construct a test set of 2,000 images and apply our perturbations to each image individually. on the complex dataset, we  In Experiment \ref{exp:metric_analysis}, we sample 5000 pairs and apply our perturbations.  To ensure statistical robustness, we repeat this entire process across 20 independent runs. Unless otherwise specified, the reported results represent an average across all evaluated SAE architectures and both embedding states (pre-trained and domain-adapted).

%\subsection{Faithfulness Evaluation} \label{exp:faithfulness}
% \input{assets/sim_metric_analysis}

\begin{table*}[]
    \centering
\begin{tabular}{
l l
S[table-format=2.2]@{\,/\,}S[table-format=2.2]
S[table-format=2.2]@{\,/\,}S[table-format=2.2]
S[table-format=2.2]@{\,/\,}S[table-format=2.2]
S[table-format=2.2]@{\,/\,}S[table-format=2.2]
S[table-format=2.2]@{\,/\,}S[table-format=2.2]
S[table-format=2.2]@{\,/\,}S[table-format=2.2]
}
\toprule

&
&
\multicolumn{12}{c}{$\mathcal{W}_1$ $\downarrow$ / OOD $\downarrow$}\\

\cmidrule(lr){3-14}

&
&
\multicolumn{2}{c}{\textbf{ConvNext}}
&
\multicolumn{2}{c}{\textbf{DINOv2}}
&
\multicolumn{2}{c}{\textbf{DINOv3}}
&
\multicolumn{2}{c}{\textbf{ResNet}}
&
\multicolumn{2}{c}{\textbf{SigLIP}}
&
\multicolumn{2}{c}{\textbf{ViT}}\\

\midrule

\multirow{7}{*}{\rotatebox{90}{Pretrained}}

& Black
&0.50&1.12
&0.45&1.39
&0.07&1.50
&0.04&1.32
&0.18&1.43
&0.59&1.48\\

& Gray
&0.50&1.08
&0.45&1.39
&0.07&1.48
&0.04&1.27
&0.18&1.43
&0.53&1.22\\

& Heavy Blur
&0.55&1.26
&0.36&1.06
&0.06&1.28
&0.04&1.27
&0.20&1.56
&0.61&1.52\\

& Quadrant Avg
&0.50&1.09
&0.44&1.35
&0.07&1.45
&0.04&1.26
&0.18&1.39
&0.54&1.29\\

& Random
&0.50&1.10
&0.55&1.71
&0.09&1.89
&0.06&2.08
&0.29&2.55
&0.53&1.22\\

& White
&0.50&1.10
&0.43&1.35
&0.07&1.43
&0.04&1.31
&0.18&1.43
&0.58&1.45\\

& Ours
&\textbf{0.46*}&\textbf{0.92*}
&\textbf{0.29*}&\textbf{0.65*}
&\textbf{0.06*}&\textbf{0.84*}
&\textbf{0.03*}&\textbf{0.86*}
&\textbf{0.14*}&\textbf{1.08*}
&\textbf{0.46*}&\textbf{0.76*}\\

\bottomrule
\end{tabular}
\caption{%\textbf{Synthetic Results.} 
    Average $\mathcal{W}_1$ and OOD results for 20 runs on Multi-CIFAR-10. Statistically significant results according to a one-sided Wilcoxon signed-rank test with Bonferroni correction for multiple comparisons are indicated by \textbf{*}.}
\label{tab:ood_perturbation_table}
\end{table*}

\begin{figure*}[h!]

\centering
\setlength{\tabcolsep}{2pt}
\renewcommand{\arraystretch}{1.2}

\newcommand{\myColW}{2.6}

% \begin{tabular}{c c c c c c c}
\begin{tabular}{m{0.75cm} >{\centering\arraybackslash} m{\myColW cm} >{\centering\arraybackslash} m{\myColW cm} >{\centering\arraybackslash} m{\myColW cm} >{\centering\arraybackslash} m{\myColW cm} >{\centering\arraybackslash} m{\myColW cm} >{\centering\arraybackslash} m{\myColW cm}}

% ---------------- Column Headers ----------------
 & \textbf{ConvNeXt} & \textbf{DINOv2} & \textbf{DINOv3} & \textbf{ResNet} & \textbf{SigLIP} & \textbf{ViT} \\%[0.5em]

% ---------------- FID Row ----------------
$\mathbf{\mathcal{W}_1}$
& \includegraphics[width=0.15\textwidth]{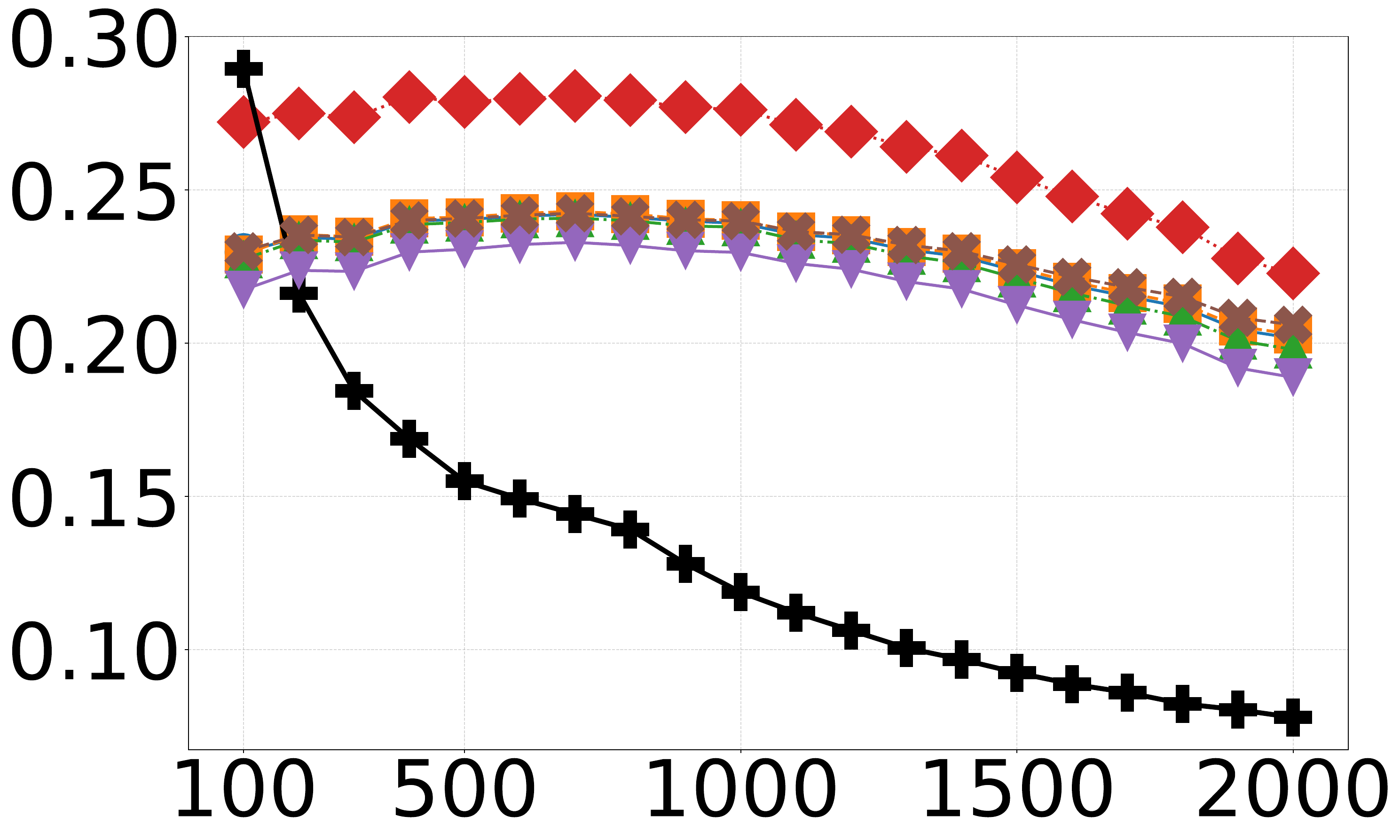}
& \includegraphics[width=0.15\textwidth]{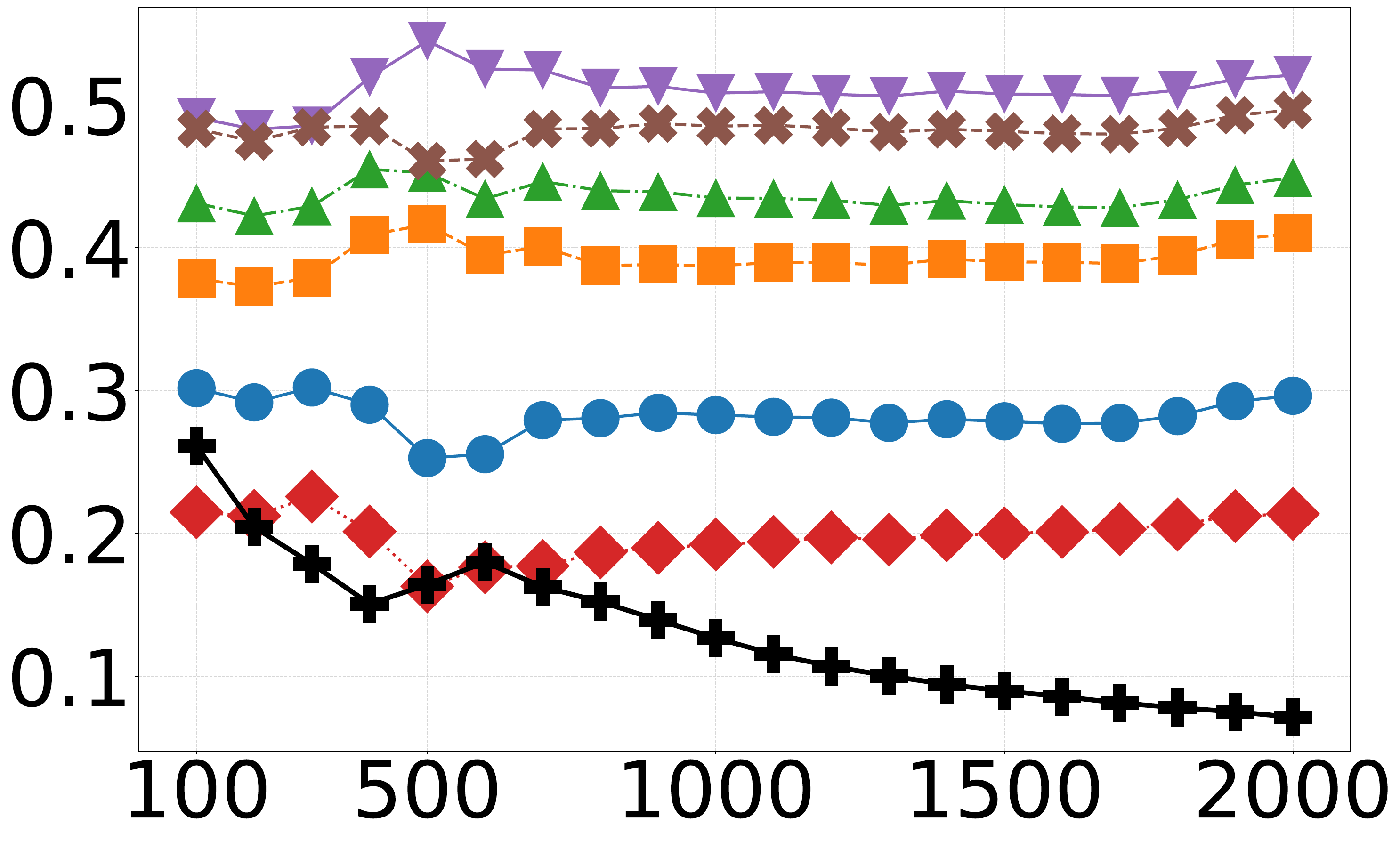}
& \includegraphics[width=0.15\textwidth]{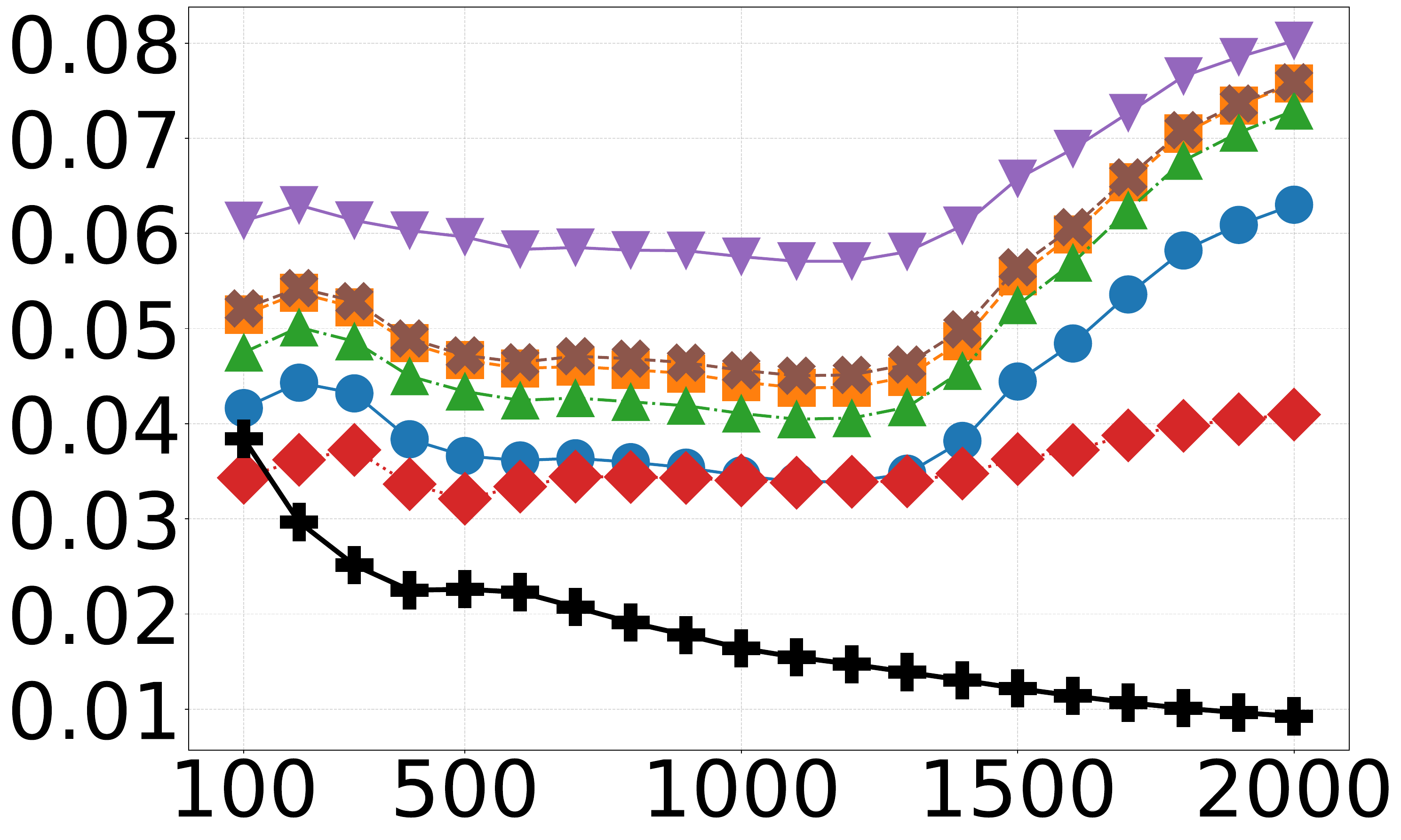}
& \includegraphics[width=0.15\textwidth]{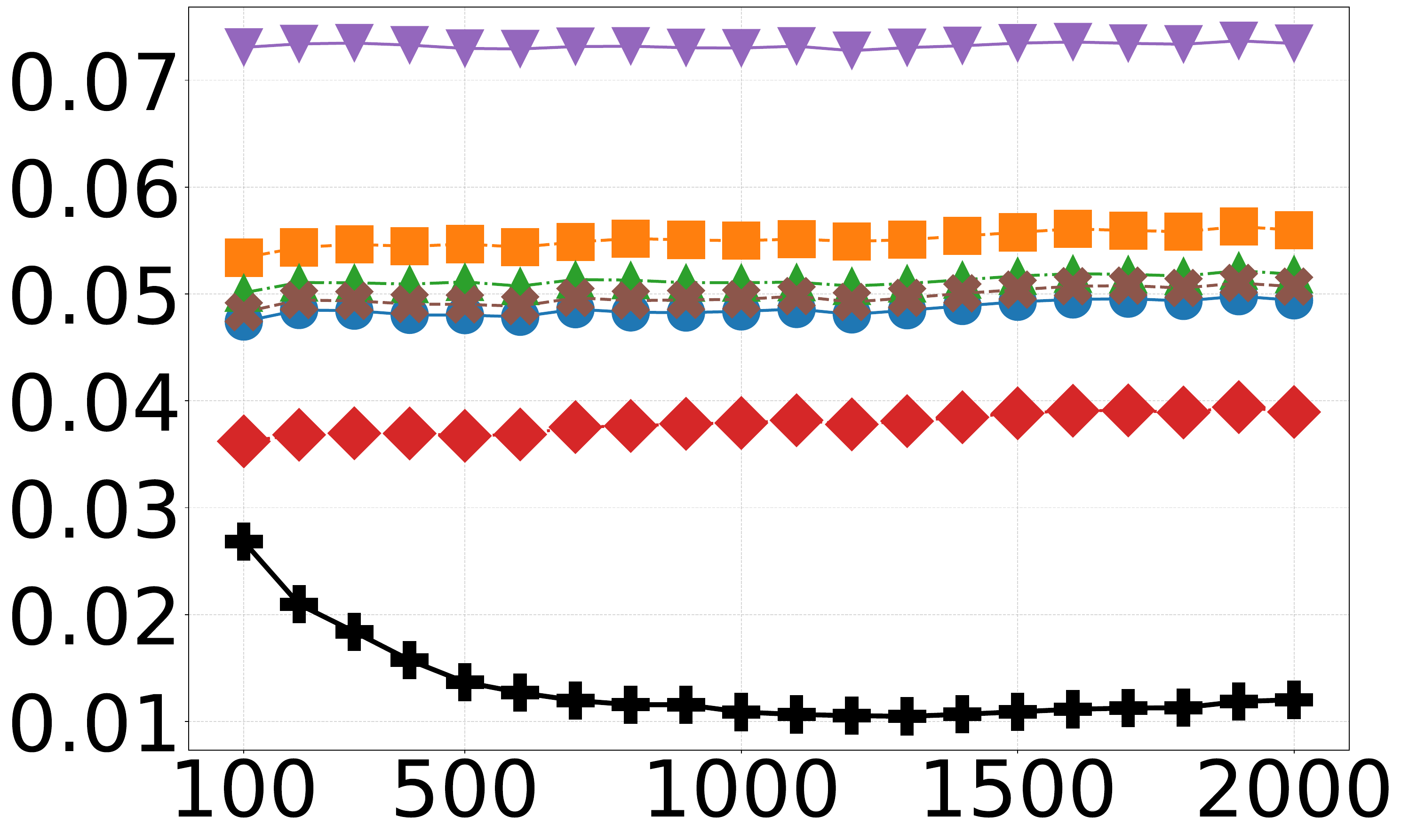}
& \includegraphics[width=0.15\textwidth]{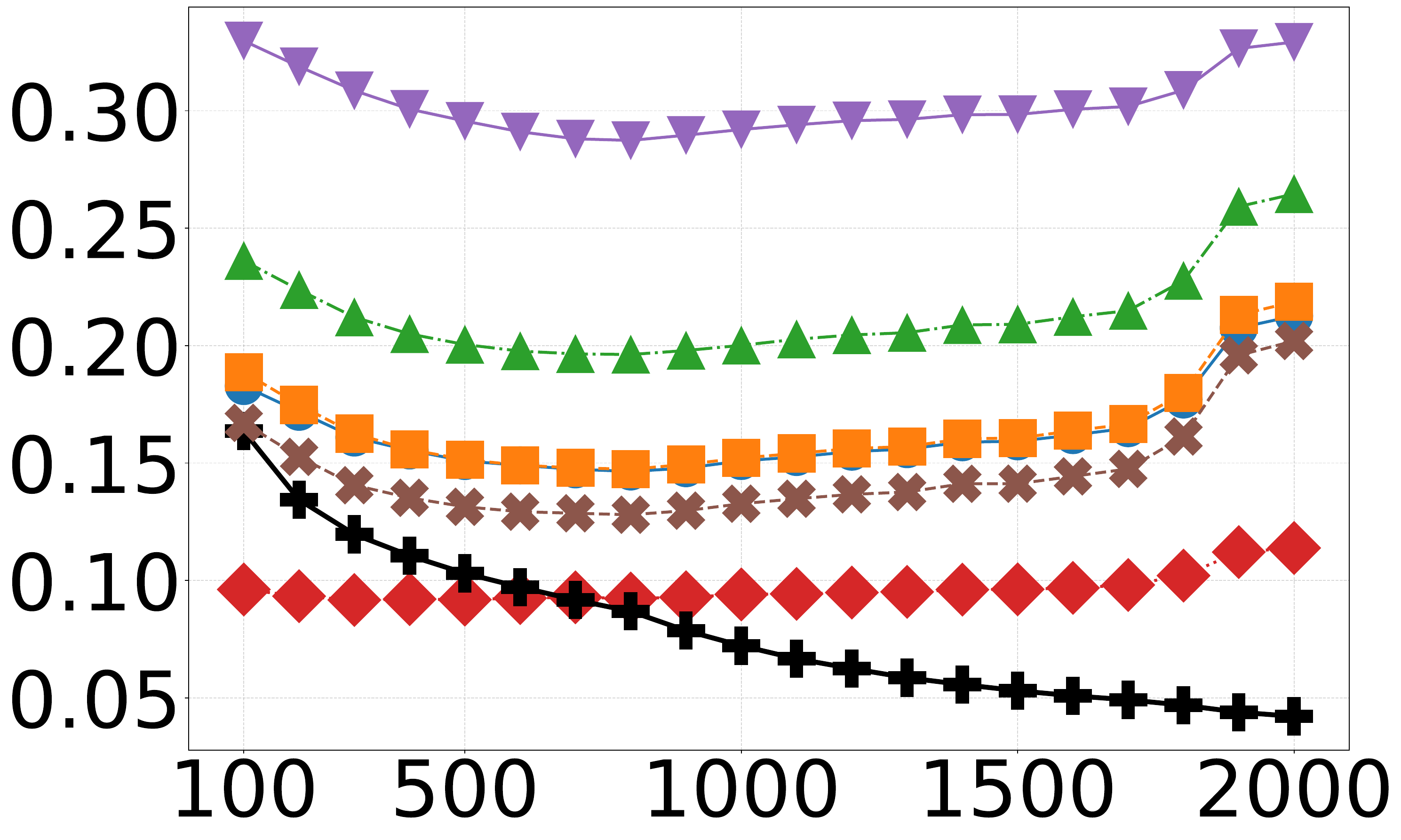}
& \includegraphics[width=0.15\textwidth]{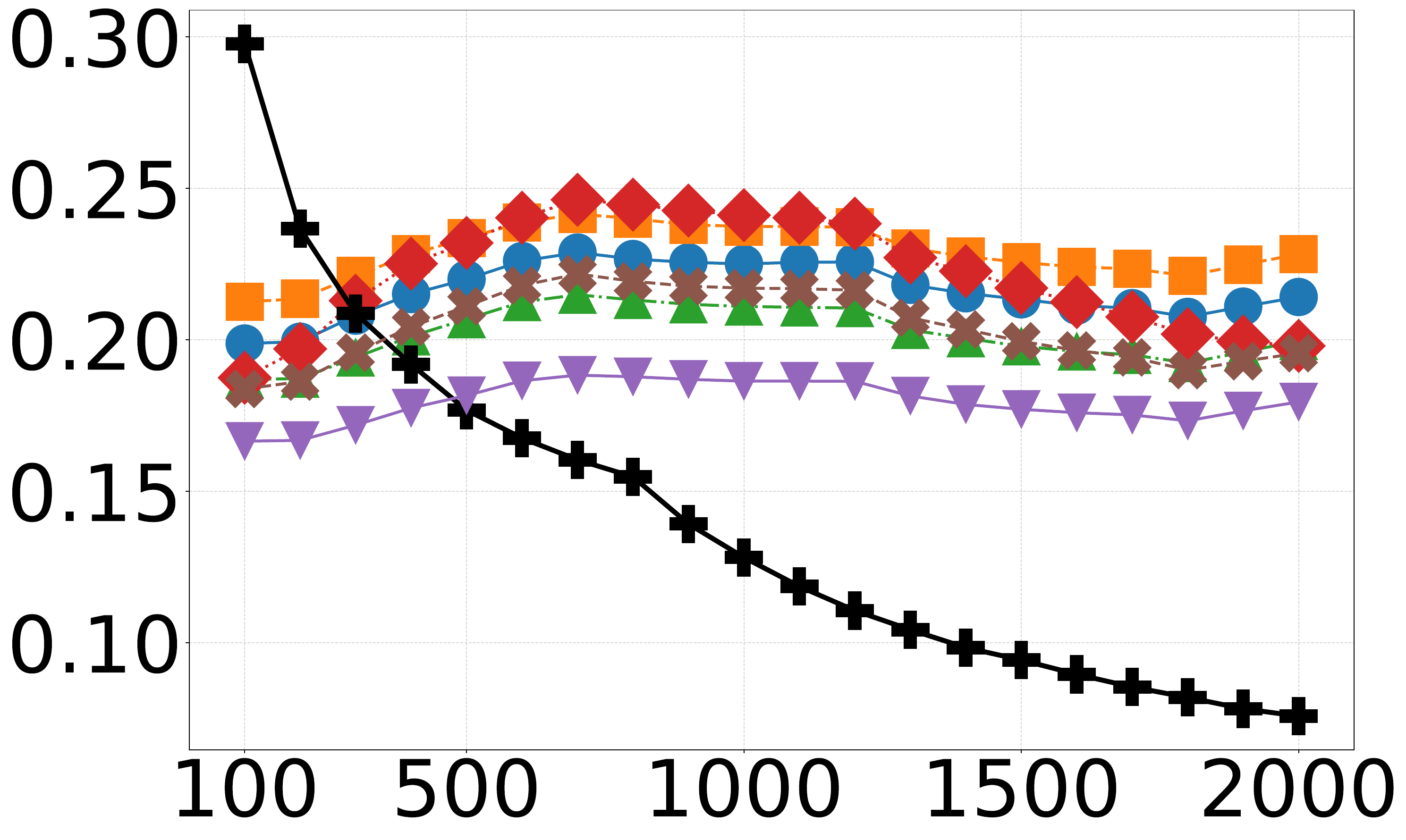}
\\%[1em]

% ---------------- OOD Row ----------------
OOD
& \includegraphics[width=0.15\textwidth]{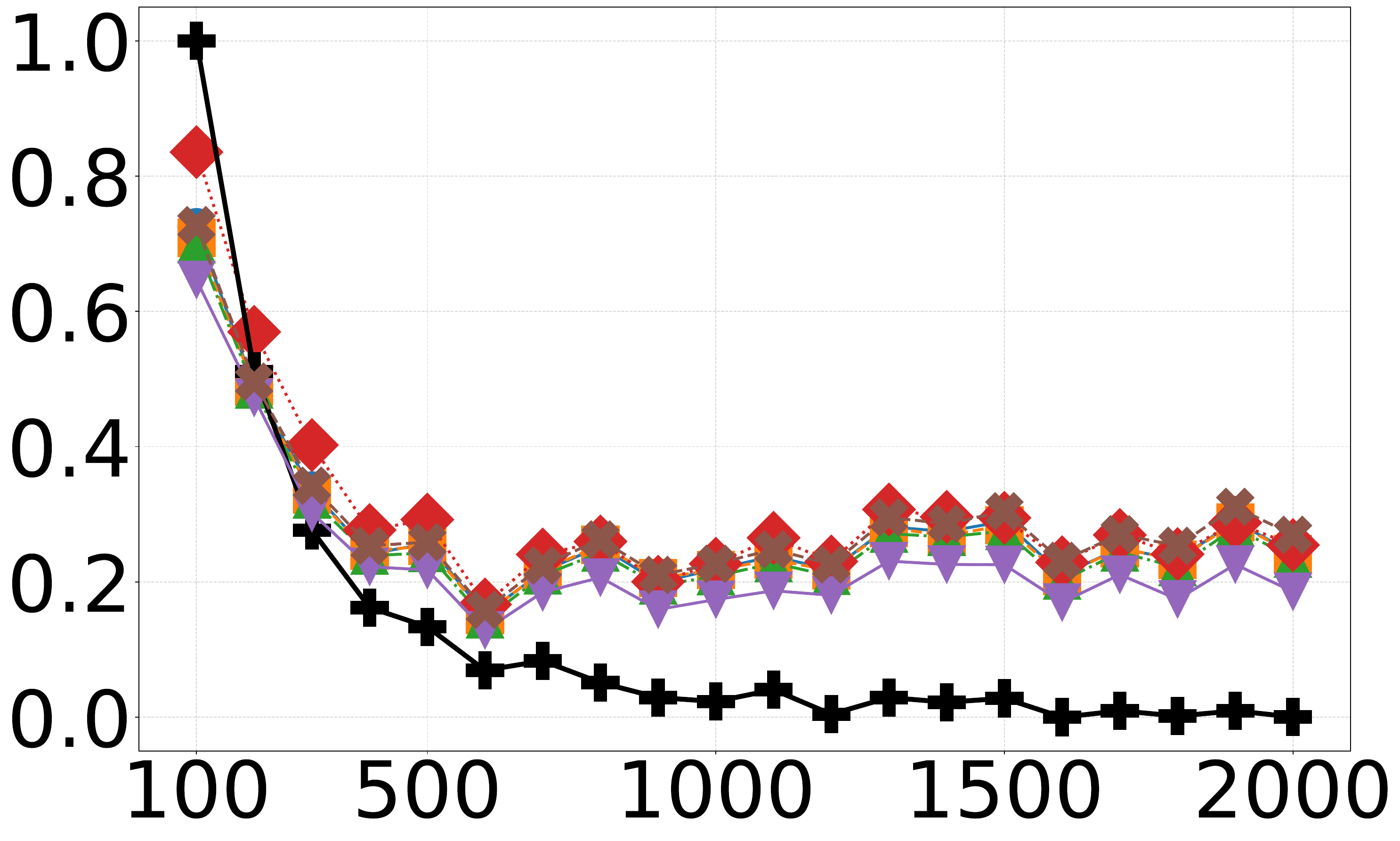}
& \includegraphics[width=0.15\textwidth]{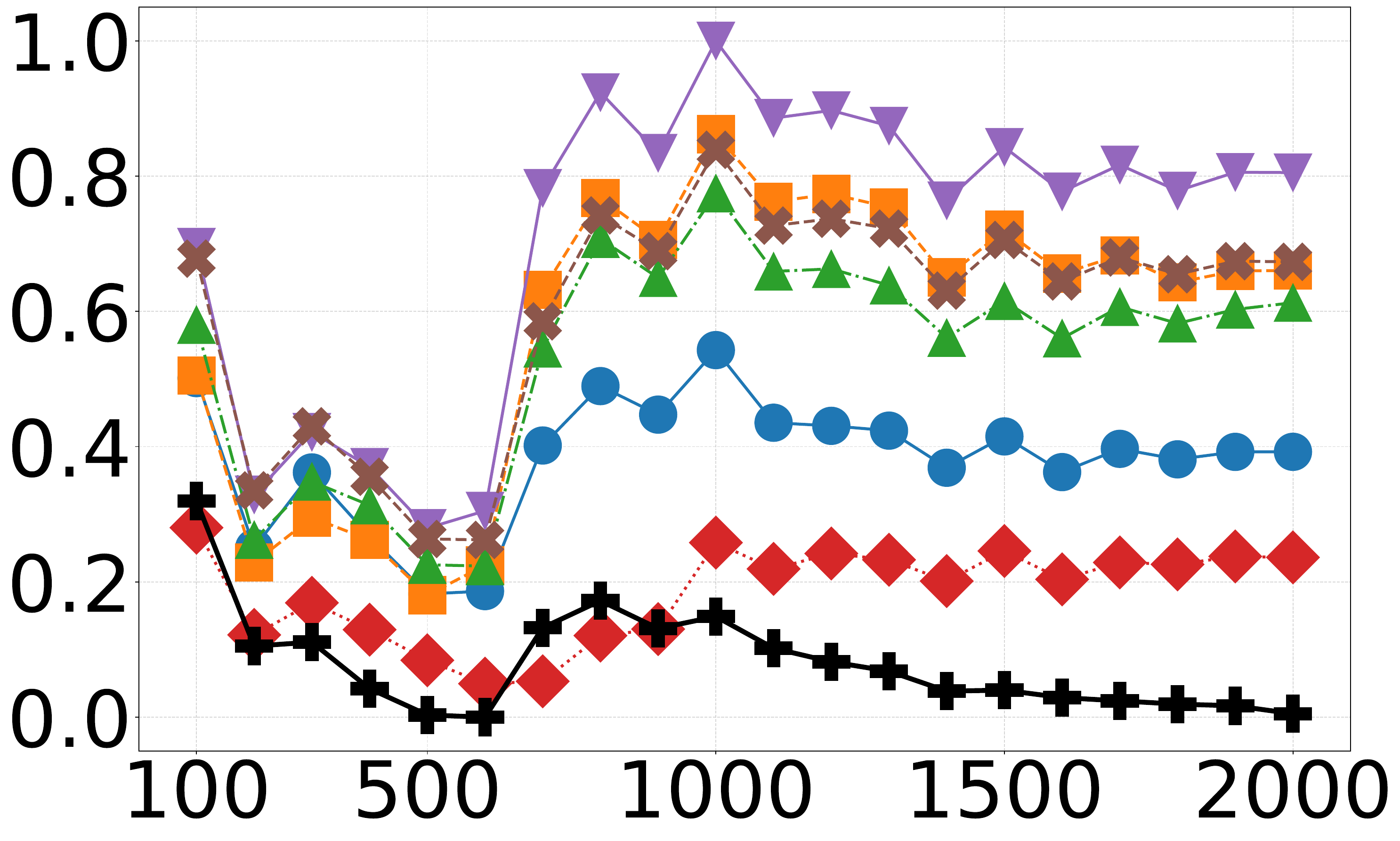}
& \includegraphics[width=0.15\textwidth]{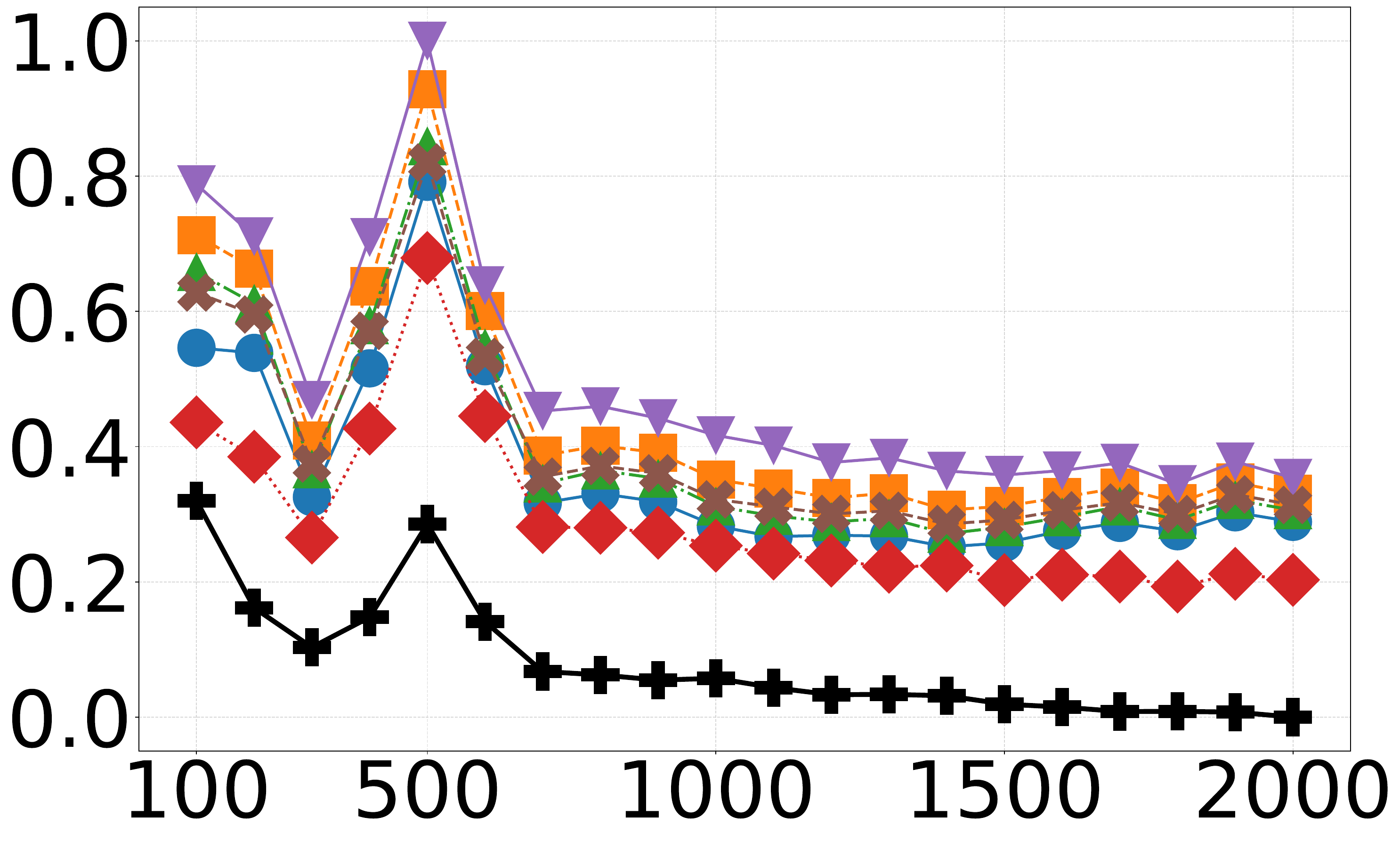}
& \includegraphics[width=0.15\textwidth]{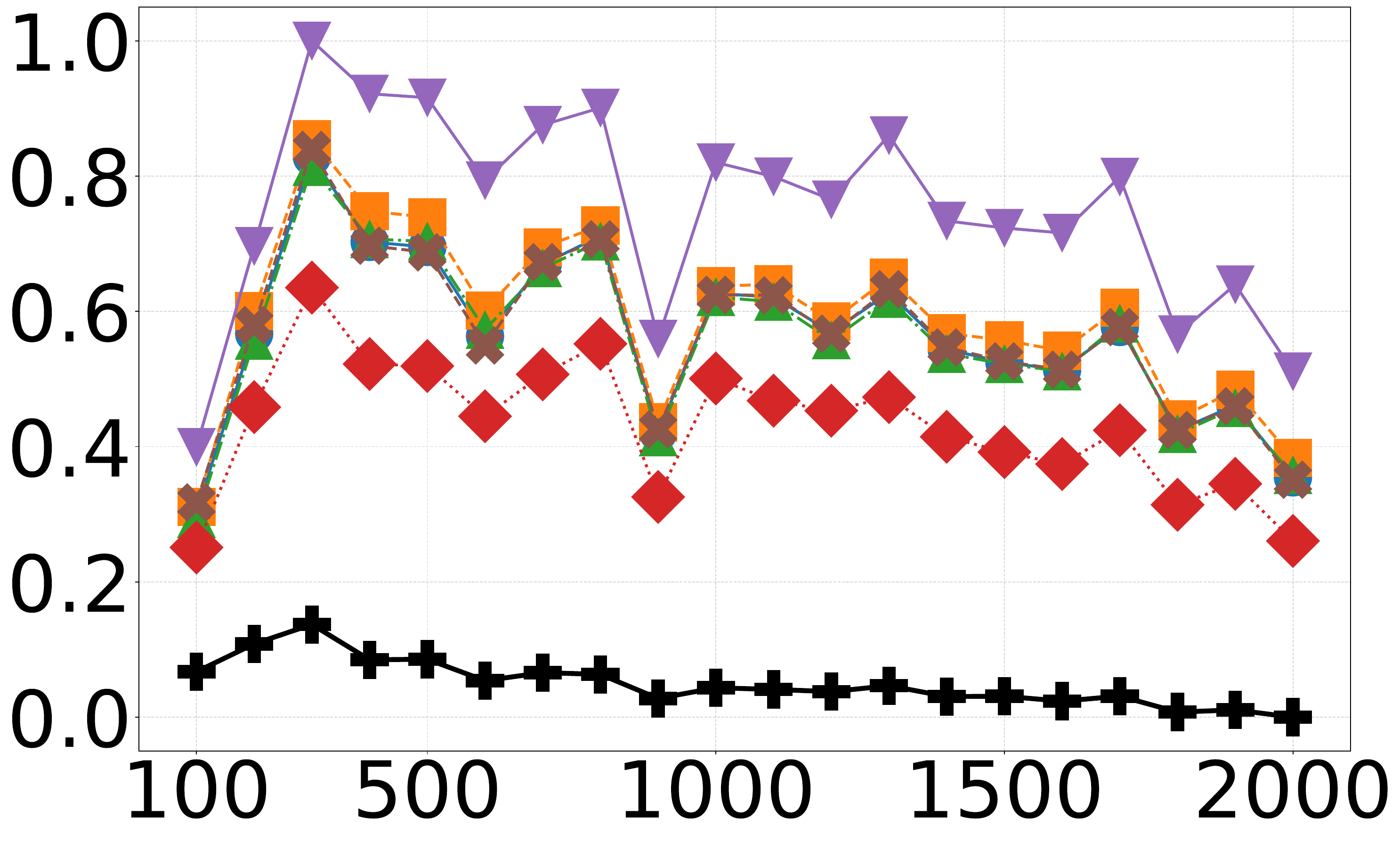}
& \includegraphics[width=0.15\textwidth]{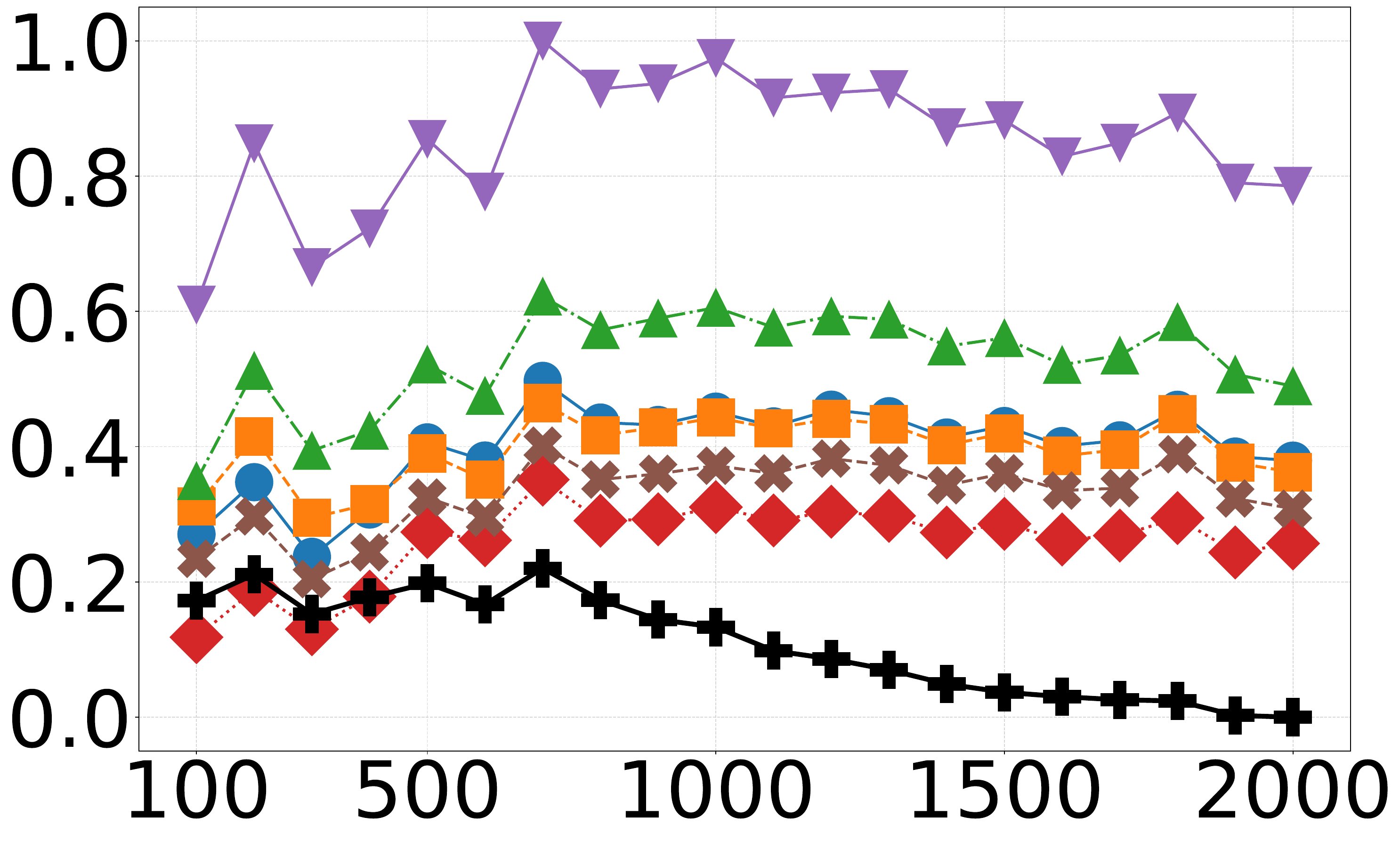}
& \includegraphics[width=0.15\textwidth]{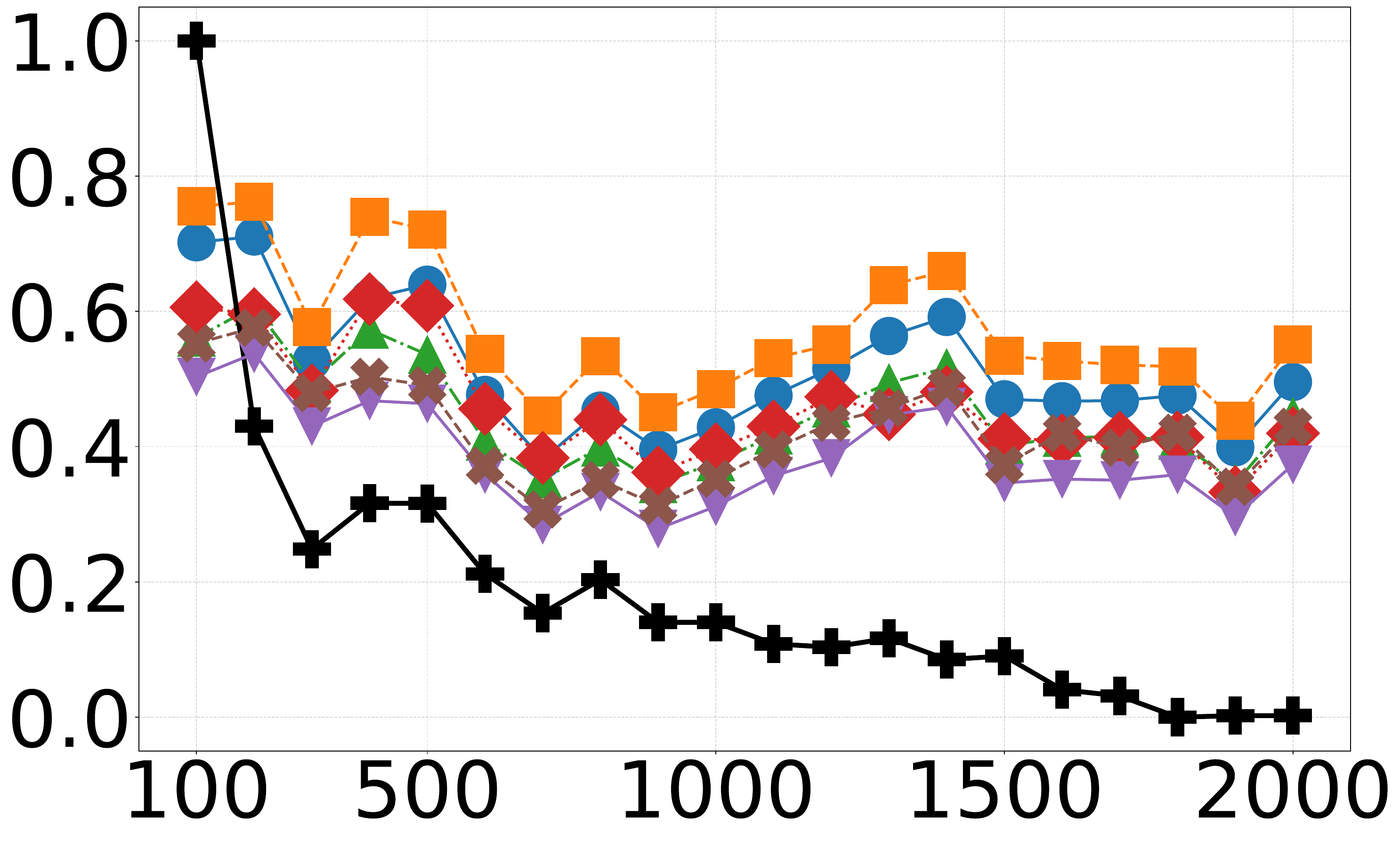}
\\

\end{tabular}

\vspace{0.5em}

\begin{tikzpicture}

% Title placed manually
% \node[anchor=east] at (-0.3,0.0) {\scriptsize \textbf{Perturbation Type}};

\begin{axis}[
    hide axis,
    xmin=0, xmax=1,
    ymin=0, ymax=1,
    legend columns=7,
    legend style={
        draw=black,
        fill=white,
        legend cell align=left,
        at={(0.5,0)},
        anchor=west
    }
]

\addlegendimage{only marks, mark=*, mark size=4pt, color=blue}
\addlegendentry{Avg}

\addlegendimage{only marks, mark=square*, mark size=4pt, color=orange}
\addlegendentry{Black}

\addlegendimage{only marks, mark=triangle*, mark size=5pt, color=green!60!black}
\addlegendentry{Gray}

\addlegendimage{only marks, mark=diamond*, mark size=5pt, color=red}
\addlegendentry{Blur}

\addlegendimage{
only marks,
mark=triangle*,
mark options={rotate=180},
mark size=5pt,
color=purple
}
\addlegendentry{Random}

\addlegendimage{only marks, mark=x, mark size=5pt, color=brown}
\addlegendentry{White}

\addlegendimage{only marks, mark=+, mark size=5pt, color=black}
\addlegendentry{Ours}

\end{axis}
\end{tikzpicture}

\caption{Average $\mathcal{W}_1$ and OOD scores (y-axis) over number of concepts (x-axis) for all pretrained models on VITON-HD. Curves illustrate various perturbation strategies. The y-axis is normalized to range from 0 to 1.
%\textbf{Complex Data Results} We report the effect of the number of concepts on the $\mathcal{W}_1$ and OOD scores averaged over all pretrained models, evaluated against 6 image perturbations. 
}
\label{fig:fid_ood_models}
\end{figure*}

\compactSubSection{Concept Verification.} \label{exp:metric_analysis} In this experiment, we verify that our latent perturbations preserve the symmetry of a similarity function on Multi-CIFAR-10, thereby encapsulating the present concepts. For two activations $\mathbf{a}_j, \mathbf{a}_k \in \mathbf{A}$, the similarity function $f$ is symmetric, i.e., $f(\mathbf{a}_j, \mathbf{a}_k) = f(\mathbf{a}_k, \mathbf{a}_j)$. Consider two collages, each differing by 1 concept. Let concept $c_1$ be the distinct concept present in $\mathbf{a}_j$ and $c_2$ be the distinct concept present in $\mathbf{a}_k$. We perform a semantic replacement to make $\mathbf{a}_j$ more similar to $\mathbf{a}_k$ by calculating:
\begin{equation}
\mathbf{a}_j' = \mathbf{a}_j - u_{j,1}\mathbf{v}_1 + u_{k,2}\mathbf{v}_2,
\quad
\mathbf{a}_k' = \mathbf{a}_k - u_{k,2}\mathbf{v}_2 + u_{j,1}\mathbf{v}_1, \nonumber
\end{equation}
for $\mathbf{a}_k$. If the learned concepts accurately capture the features, the similarity $f(\mathbf{a}_j', \mathbf{a}_k')$ should align closely with $f(\mathbf{a}_k, \mathbf{a}_j)$. We report the percent difference between
$f(\mathbf{a}_j', \mathbf{a}_k')$
and
$f(\mathbf{a}_k, \mathbf{a}_j)$ for two functions. Note: full score recovery is impossible since concepts do not encode spatial information.

The results in Figure~\ref{fig:sanity_check} indicate the trained dictionaries adequately represent the underlying concepts. The median percent difference across all SAE variants remains within $10\%$, that is $90\%$ recovery, of the original score for 2, 4, and 6 image differences for Cosine, and nearly all Euclidean evaluations. We also see that Top-$K$ SAEs consistently yield the lowest deviation across all swap counts. Furthermore, all models recover Cosine more effectively than Euclidean, suggesting the swapping preserves angular relationships better than distance. As expected, the deviation increases monotonically with the number of concept swaps for Cosine and for Vanilla SAEs for Euclidean. Overall, our results show that latent perturbations successfully preserve  \~{$90\%$} of a similarity function's symmetry, especially with low number of swaps and hence represent the underlying concepts.

\compactSubSection{Data Distributional Effect.} \label{exp:ood_fid} To evaluate the reliability of our latent perturbations, we measure their distributional impact by comparing input and latent perturbations with two complementary metrics: a 1-Wasserstein ($\mathcal{W}_1$) \cite{wasserstein} and an OOD score \cite{holistic_concepts,sun2022knnood}. These quantify: (i) the faithfulness of the perturbed representations relative to the original distribution, and (ii) the ability of the perturbation to generate points that remain within the data manifold, respectively. 
% this is only a repetition of the above
%A good perturbation strategy should yield low scores for both metrics, indicating the intervention more closely represents the original distribution and stays in the model's domain.

On our synthetic data, we apply a perturbation to a concept in a quadrant of the collage either in the input space or the latent space. For image space perturbations on VITON-HD, we find the top 10\% activating pixels based on the concept's heatmap and apply the perturbation only to them. 

Our results are summarized in Table \ref{tab:ood_perturbation_table} on Multi-CIFAR-10  and VITON-HD in Figure \ref{fig:fid_ood_models}. Notably, we observe in the table that, overall, for each pretrained model, our latent-space interventions consistently achieve lower OOD and $\mathcal{W}_1$ scores than raw input-space masking despite a favorable setting for the image-space perturbations. 
%The table also reveals interesting model properties, such as a heavy blur being more invasive than random noise for ConvNeXt and ViT, and vice versa, for all other models, calling for a model-specific selection of image-space perturbations. 
The charts display the averaged results over the pretrained models with the number of concepts on the x-axis. The charts demonstrate that latent perturbations starting at 200 concepts on average achieve lower $\mathcal{W}_1$ and OOD. The negative slope behavior is caused by greater concept granularity induced by the number of concepts, meaning that more concepts enable finer-grained features to be captured, while fewer induce more all-encompassing concepts. We do conclude that our latent concept perturbations represent the data distribution and produce fewer OOD embeddings than other common input perturbation methods.

\begin{table}[h]
\centering
\footnotesize
\setlength{\tabcolsep}{4pt}
%\caption{Average similarity drop ($\Delta$) and positive ratio across hops.}
\begin{tabular}{ll cccc}
\toprule
& & \multicolumn{4}{c}{\textbf{Number of image differences}} \\
\cmidrule(l){3-6}
\textbf{Metric} & & \textbf{0} & \textbf{2} & \textbf{4} & \textbf{6} \\
\midrule
\multirow{2}{*}{Cosine} & $\Delta$ & 0.067 & 0.097 & 0.140 & 0.209 \\
 & Ratio & 0.95$\pm${\scriptsize0.08} & 0.93$\pm${\scriptsize0.10} & 0.92$\pm${\scriptsize0.11} & 0.91$\pm${\scriptsize 0.13} \\
\midrule
\multirow{2}{*}{-Euclid.} & $\Delta$ & 0.543 & 0.177 & 0.106 & 0.076 \\
 & Ratio & 0.95$\pm${\scriptsize0.08} & 0.92$\pm${\scriptsize0.09} & 0.89$\pm${\scriptsize0.11} & 0.86$\pm${\scriptsize0.13} \\
\bottomrule
\end{tabular}
\caption{We report the average relative similarity drop $\Delta = \left( f(\mathbf{a}_i, \mathbf{a}_j) - f(\mathbf{a}_i/c_k, \mathbf{a}_j) \right) / f(\mathbf{a}_i, \mathbf{a}_j) $ and the number of times similarity decreases over the number of pairs for varying similarity levels (Ratio). Latent perturbations consistently induce a decrease in overall similarity.}
\label{tab:perturbation_effect}
\end{table}

% \begin{table}[h]
% \centering
% \footnotesize
% \setlength{\tabcolsep}{4pt}
% \begin{tabular}{ll cccc}
% \toprule
% & & \multicolumn{4}{c}{\textbf{Number of image differences}} \\
% \cmidrule(l){3-6}
% \textbf{Metric} & & \textbf{0} & \textbf{2} & \textbf{4} & \textbf{6} \\
% \midrule
% \multirow{2}{*}{Cosine} & $\Delta$ & 0.114 & 0.144 & 0.173 & 0.201 \\
%  & Ratio & 0.99$\pm$0.03 & 1.00$\pm$0.03 & 1.00$\pm$0.03 & 1.00$\pm$0.03 \\
% \midrule
% \multirow{2}{*}{Euclid.} & $\Delta$ & 8.692 & 5.286 & 4.088 & 3.453 \\
%  & Ratio & 0.99$\pm$0.03 & 0.99$\pm$0.04 & 0.98$\pm$0.05 & 0.98$\pm$0.05 \\
% \bottomrule
% \end{tabular}
% \caption{We report the average relative similarity drop ($\Delta$) and the number of times similarity decreases over the number of pairs for varying similarity levels. Latent perturbations consistently induce a decrease in overall similarity.}
% \label{tab:perturbation_effect}
% \end{table}

\compactSubSection{Sim.~Function Effect.} To assess whether perturbations carry meaningful similarity information, we use Multi-Cifar-10 to construct 3000 image pairs spanning varying similarity levels, then perturb only the shared concepts in one image of each pair. Table~\ref{tab:perturbation_effect} shows a consistent relative drop in similarity across no.\ of image differences, with similarity decreasing in nearly all pairs, confirming that our perturbations meaningfully impact the score and thus validating our choice to use score changes to estimate concept importance.

\compactSubSection{Linearly Recoverable.}
To quantify how well our explanation vectors encode the underlying similarity, we use an evaluation setting inspired by \cite{NEURIPS2024_56ed2bd1}: we
train a regression model given an explanation to predict the similarity score $f(\mathbf{a}_j, \mathbf{a}_k)$.
In particular, we sample 3,000 random pairs of embeddings ($80/20$ train/test split) and compute their similarity scores. For each pair, we generate an explanation vector $\localRel \in\mathbb{R}^{1500}$ and fit a linear regression to predict the corresponding score from this representation. Here, we use 1500 concepts based on their minimal distributional effect and only use TopK SAEs due to the Verification results. Table~\ref{tab:perf_baseline_1000_random} reports the average performance across all test pairs. We compare to other explanation vector baselines in the same setting,
%For comparison, we repeat the same regression protocol using explanation vectors produced by several baseline methods, 
including CSIM~\cite{csim} created by multiplying the concept activations, blur-based explanation vectors \cite{chen2023sim2word} analogous to ours except done by blurring, and vanilla gradient-based image attribution method, computed by adding the gradients of each image.
%taking the gradient with respect to each image and adding them to obtain one vector.

Our results show that our explanation vectors consistently recover the underlying similarity metrics more accurately than all competing methods. In particular, our approach achieves mean $R^2$ values above $0.87$ for both Cosine and Euclidean. These results demonstrate that our explanations preserve the information required to reconstruct the similarity between image pairs and faithfully encode the behavior of the underlying similarity function.

% To assess how well our explanations explains the underlying metric, we sample random pairs of image activations and record their similarity score for three different metrics. Then, we produce our explanation vectors $\localRel$ of each pair with 1000 concepts. Then we train a linear regression model on the explanation vectors with the similarity score as the target. We report the average results for 3000 pairs in Table \ref{tab:recovery_table}. Additionally, we train 4 more linear regressions  and compare against other baseline explantions methods including CAV arithmetic operations \cite{concept sim}, explanations vectors produced by blurring, and then a vanilla gradient-based image attribution. 

% Our results confirm that our produced explanations vectors recover the similarty metrics results in the activation space and out perform all of the baselines. Notably, our method obtains a perfect $R^2$ value for the Dot Product metric with means over .95 for the Euclidean and Cosine metrics. Hence, we can conclude that our explanation method correctly explains the similarity of image pairs. 

\begin{figure*}[t]
    \centering
    \includegraphics[trim={4.5cm 0 4.5cm 0}, width=.90\textwidth]{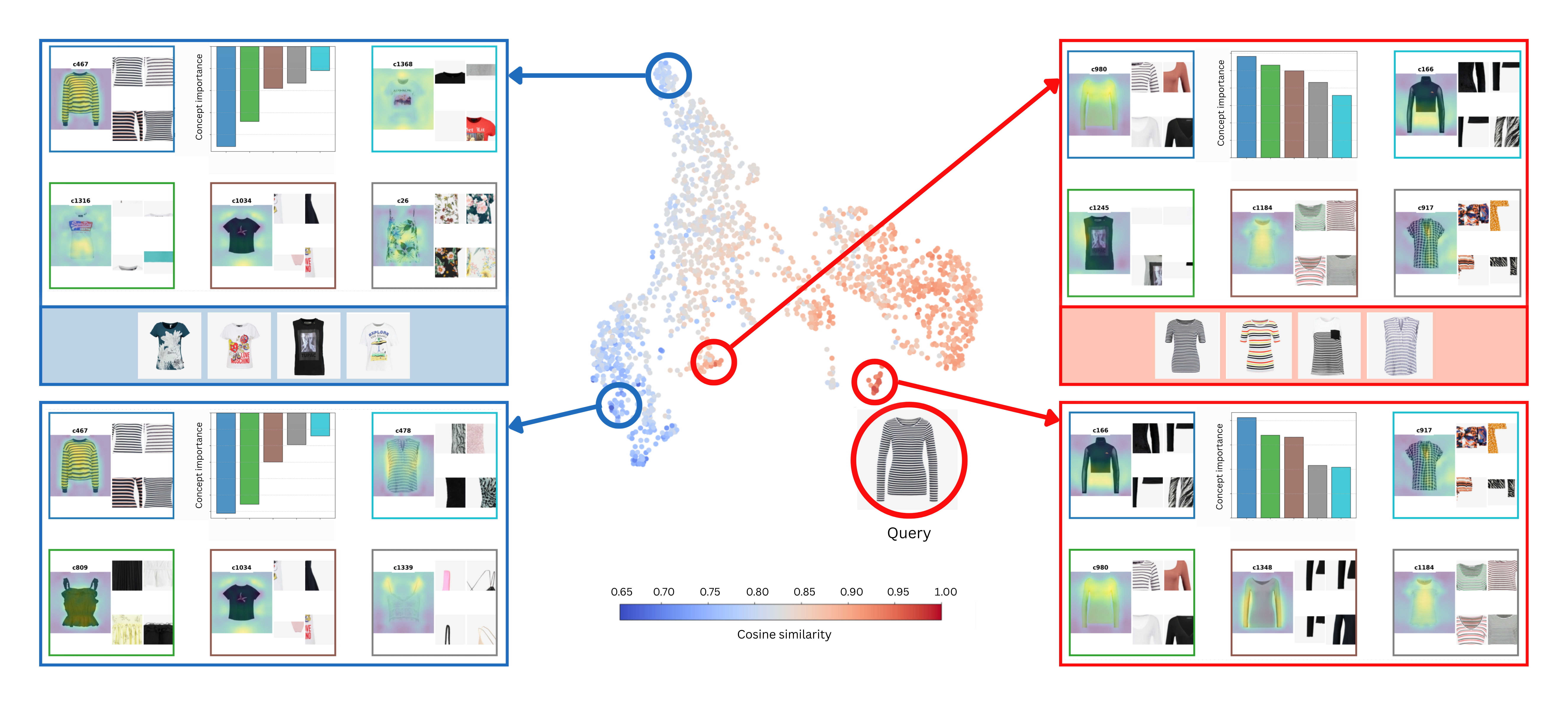} % Replace with your image file
    \caption{UMAP of the explanation vectors produced with respect to Query image, with red indicating high-similarity samples and blue low-similarity samples. 
    The boxes indicate an explanation for groups of images according to eq.~\eqref{eq:group_eq}. For similarity, we illustrate "similarity concepts" contributing to similarity because they are present in both the query and group images. For dissimilarity, we illustrate "dissimilarity concepts" which are not shared between the query and the group images.
    }
    \label{fig:group_q_example}
\end{figure*}

\begin{figure}[t]
    \centering
    \includegraphics[width=.45\textwidth]{assets/exemplar_retrieval.pdf}    
    \caption{Exemplar Retrieval Example (selecting images similar for the same reason as a given image pair). First column: query--reference pairs; second column: selection via close cosine similarity as a baseline; third column: selecting via our explanation.}
    \label{fig:exemplar_retrieval}
\end{figure}

\section{Case Study of Explanations}

In this section, we present our generated explanations, explore a group explanation, and demonstrate their usefulness. We begin by comparing our approach to local explanations found in the literature. Next, we present a novel application of our local explanations in the context of retrieval. Finally, we generate group explanations first by analyzing a retrieval setting and secondly by visualizing embeddings associated with a specific concept and explaining the formed groupings. 

\begin{table}[ht]
\footnotesize
  \centering
  \setlength{\tabcolsep}{2.5pt}
  \begin{tabular}{lcc}
    \toprule
    \textbf{Method} & \textbf{Cosine}  & \textbf{-Euclidean}\\
    \midrule\textbf{}
    CSIM
    & 0.88 $\pm$ {\scriptsize 0.06} / 0.04 $\pm$ {\scriptsize 0.01} & 0.85 $\pm$ {\scriptsize 0.07} / 1.65 $\pm$ {\scriptsize 0.42} \\
    % CAV Subtraction & -0.29 $\pm$ {\scriptsize 0.07} / 0.02 $\pm$ {\scriptsize 0.00} & -0.29 $\pm$ {\scriptsize 0.06} / 35005.82 $\pm$ {\scriptsize 1543.78} & -0.27 $\pm$ {\scriptsize 0.07} / 36.08 $\pm$ {\scriptsize 2.06} \\
    Blur
    & 0.73 $\pm$ {\scriptsize 0.04} / 0.05 $\pm$ {\scriptsize 0.00} & 0.82 $\pm$ {\scriptsize 0.02} / 1.66 $\pm$ {\scriptsize 0.09} \\

    Gradient 
    & 0.74 $\pm$ {\scriptsize 0.03} / 0.05 $\pm$ {\scriptsize 0.00} & 0.63 $\pm$ {\scriptsize 0.04} / 2.51 $\pm$ {\scriptsize 0.15} \\
    
    Ours
    & \textbf{0.94*} $\pm$ {\scriptsize 0.03} / \textbf{0.02*} $\pm$ {\scriptsize 0.00} & \textbf{0.87*} $\pm$ {\scriptsize 0.07} / \textbf{1.59} $\pm$ {\scriptsize 0.38} \\
  
    \bottomrule
  \end{tabular}
  \caption{ We report the averaged performance ($R^2 \uparrow$ / RMSE $\downarrow$) over all vision models over 10 runs. We employ 1500 concepts for our method and compare against 4 other explanation techniques. Statistically significant results according to a one-sided, non-paired Wilcoxon rank-sum test with Bonferroni correction for multiple tests are indicated by \textbf{*}.}
  \label{tab:perf_baseline_1000_random}
\end{table}

\compactSubSection{Setup.}
We use DINOv3~\cite{2025dinov3} as the frozen backbone and train a TopK SAE~\cite{gao2025scaling} on activations extracted from the test set of VITON-HD~\cite{Choi_2021_CVPR}. For the explanations, we utilize a dictionary of $1500$ concepts. Notably, we do not use any label information from this dataset; therefore, all insights regarding the formed groups are derived entirely from our explanations.

\compactSubSection{Local Explanation Comparison.}\label{sec:local_exp} 
Here, we focus on explaining \textit{why} a given query and reference image are similar. As shown in Figure \ref{fig:sim_example} (top), our query is a striped shirt, and the reference is a striped blouse. On the left, we display an explanation inspired by previous work~\cite{plummer2020these,chen2023sim2word}, consisting of a heatmap and an attribute label. This baseline may capture the primary similarity driver between the two garments (assuming initial human definition of the concept) but cannot account for all factors of similarity. 

Our approach calculates importance scores across all concepts present in either image (Figure \ref{fig:sim_example} displays the top 5). The most prominent is Concept c1184, highlighted in the red box, where a heatmap localizes the concept on the query, 
%(Note: in the other explanations, the top-activating image will be displayed), 
while the right side displays the four patches that most activate this concept. Because these patches depict stripes and the heatmap aligns with this pattern, we designate Concept c1184 as the "stripe concept" for this dataset. Other noteworthy concepts include c980 (the neckline), c917 (shirt patterns), and 713 (curves). Concept c1245 appears to represent a contrast between the background and a given garment. Since DinoV3 is not fine-tuned in the fashion domain, we expect to find such concepts, which may affect similarity. By combining similarity importance scores with precise heatmaps, our local approach provides a comprehensive explanation: it pinpoints \textit{where} the images are similar, identifies \textit{what} causes the similarity, and quantifies the importance of each factor.

Notably, our explanations enable designer action based on the provided insights, such as selecting data for fine-tuning to reduce the influence of concept c1245, or removing its contribution to the similarity entirely.

\compactSubSection{Exemplar Retrieval.} 
Inspired by conditional similarity learning~\cite{lim2026clay, hsieh2025focallensinstructiontuningenables}, we introduce a new task that we call \emph{Exemplar Retrieval}. The goal is to retrieve images that are similar for the same underlying reasons as a given query--reference pair. Unlike conditional similarity, where the conditioning attribute is explicitly provided (e.g., color or shape), our setting assumes no such supervision. Instead, we are given a pair of images similar in some way and seek additional samples that share the concepts responsible for that similarity.
A natural baseline is to retrieve image pairs with close similarity scores. However, a score does not provide information about \emph{why} two images are similar. To address this, we retrieve exemplars using our explanation vectors. Given a pair, we compute its explanation vector and rank candidate pairs according to the dot product between their explanation vectors and that of the original pair. 
% Since the explanation vectors encode the concepts responsible for similarity, this retrieval procedure returns samples that share the same semantic factors rather than merely exhibiting comparable similarity scores.

Figure~\ref{fig:exemplar_retrieval} demonstrates the advantage of this approach for four example pairs. Across all these, retrieval based on similar cosine similarity (baseline) often matches images with similar overall appearance while failing to preserve the semantic reason for similarity. In contrast, retrieval using our explanation vectors consistently recovers images with the same concepts. For example, our method retrieves striped shirts rather than plain (Row~1), dark-colored shirts instead of lighter alternatives (Row~2), shirts with similar necklines (Row~3), and shirts with comparable sleeve styles (Row~4). These results suggest that explanation vectors capture the semantic factors driving similarity, enabling exemplar retrieval conditioned by concepts.
Additionally, we highlight this as a new actionable task derived directly from our explanations.
% This is also an example of a new actionable task that can be approached using our proposed explanations.

\begin{figure*}[t]
    \centering
    \includegraphics[width=0.9\textwidth]{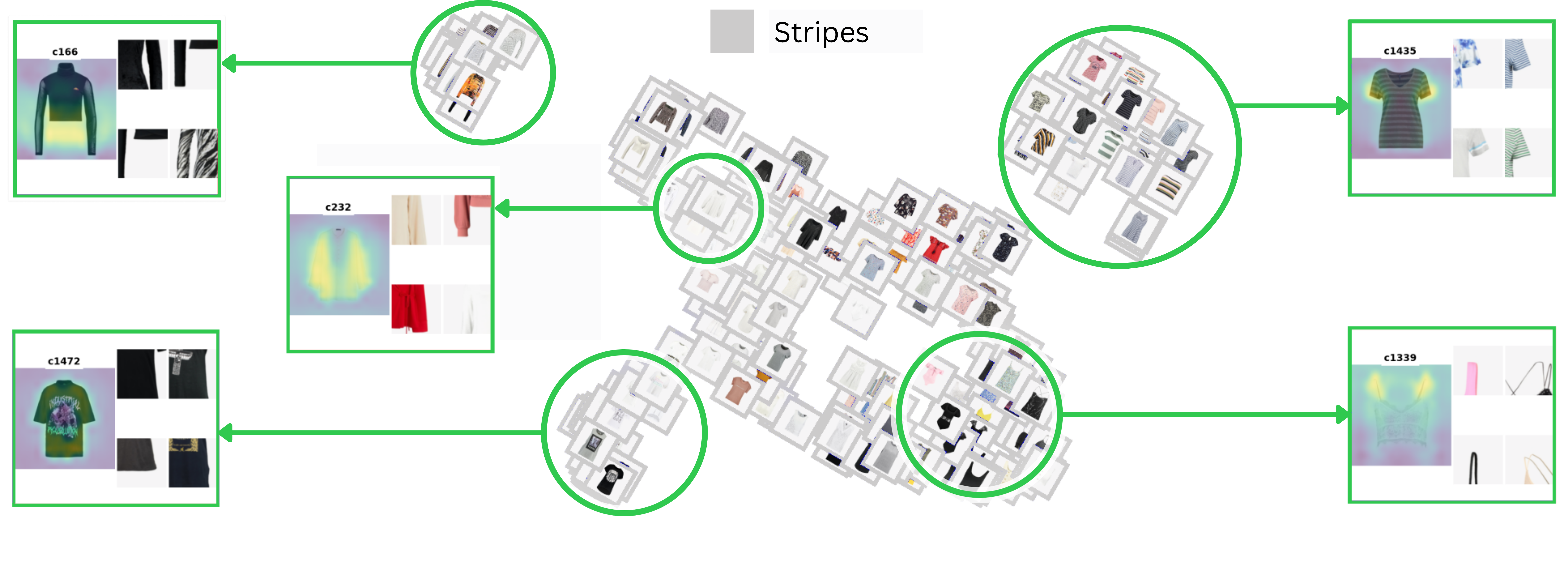}    
    \caption{UMAP of DINOv3 embeddings with respect to cosine distance for 250 images that activate the stripes concept the most.
    For particular clusters we compute pairwise 
    similarity explanations and identify the most dominant concept with respect to cosine similarity and display it in boxes.}
    \label{fig:group_example}
\end{figure*}

\compactSubSection{Investigating Groups with Respect to a Query.}
We consider a retrieval setting where we rank our dataset for a given query. We generate explanation vectors for each reference image with respect to the query, and then apply UMAP~\cite{umap} to them to identify emergent groups. In Figure \ref{fig:group_q_example}, the color in the UMAP projection represents a spectrum: red indicates more similar references, blue more dissimilar references, and white/gray neutral cases. 

The visualization reveals that similar explanations naturally congregate. We highlight a few particularly interesting clusters. For instance, Group 1 contains similar samples.% yet it resides surprisingly close to dissimilar samples in the projection. 
A closer look at our generated explanations reveals that the neckline, contrast, and stripe concepts drive the similarity for this cluster relative to the query. To verify this, we draw a random sample from this group (bottom left of Figure \ref{fig:group_q_example}), which indeed consists of various striped t-shirts, confirming the validity of our explanation. 

We apply the same analysis to Group 2, but focus on \textit{dissimilarity}. We argue that for dissimilar samples, understanding \textit{why} they differ is more informative than forcing a similarity comparison. The concepts contributing most to this group's dissimilarity are Concept 467 (striped long-sleeve shirt) and Concept 1316 (t-shirt with a central print). Given that our query is a striped long-sleeve shirt, and the random sample from Group 2 (top right) contains only printed t-shirts, this explanation is logical. This demonstrates our method's capacity to elucidate group relations relative to a specific query.
Accordingly, our explanations enable the retrieval user to retrieve groups with similar characteristics.

% Inspired from conditional similarity \cite{lim2026clay, hsieh2025focallensinstructiontuningenables}, we create a task which we call Exemplar-Conditioned Retrieval, which aims to find samples that reflect the same reasons for similarity. In conditional similarity, a known property is conditioned such that similar samples with respect to this property is returned. Our task is different since we do not have explicit properties to condition on; all we know is that a pair of images are similar for some reason and we seek to find other samples which reflect the reason for similarity. Normally, one might search for samples with a similar score in order to find more samples that have the same reasons for similarity. 

% As seen in Figure \ref{fig:exemplar_retrieval}, the samples retreived with cosine do not always reflect the reasons for similarity. Using our explanation vectors, we compute the dot product the vector produced by the given query and reference and the other explanation vectors and select the maximum. In the first row of Figure \ref{fig:exemplar_retrieval}, we see that our method returns shirts with stripes while cosine returns solid shirts. Row 2 shows that our exp;lanation vectors return only dark shirts like the query and reference. Row 3 returns shirts with a similar necklines and 4 returns samples with sleeves.
 
\compactSubSection{Similar Groups within a Concept.}\label{sec:group_exploration} 
Finally, we highlight the ability to interpret similarity within the DINOv3 embedding space. To do this, we select 250 samples that most strongly activate the "stripe" concept and project them using UMAP (Figure \ref{fig:group_example}). Several distinct groups immediately emerge. To validate, we isolate each group, compute all pairwise explanations for its members, and display the dominant concept.% found. 

Interestingly, this process uncovers distinct sub-categories, such as printed t-shirts (Concept 1472) and garments with straps (Concept 1339). We also observe a clear structural divide: short-sleeve striped shirts congregate in one area, while long-sleeve variants cluster on the opposite side of the UMAP space. This illustrates how our explanation framework can disentangle and map out the nuanced variations within our data, despite the absence of human-annotated labels.
Also, it enables the model designer to take actions to, e.g., remove unfavorable clustering.

\section{Limitations and Future Work}\label{sec:lim}
Although our concept-based explanations improve upon the baselines and provide useful insights, our method is not without limitations. First, 
our method implicitly assumes an embedding space can be reconstructed linearly using CAVs \cite{Linear_hypothesis}, which has recently been called into question \cite{hindupur2025projectingassumptionsdualitysparse,bhalla2026sparseautoencoderscaptureconcept}. Hence, concepts may miss more abstract or relational similarities. We view the linearity of concepts as a first-order approximation \cite{fel2026into} rather than a claim of completeness. SAE stability is of additional concern; we address it by using penultimate-layer activations \cite{holistic_concepts}. Also, we include our hyperparameter results in our Appendix, which report average recovery accuracy; most achieve stable recovery. Additionally, CAV extraction is inherently data-dependent but can generalize to explain similarities in alternative datasets in clear transfer learning scenarios \cite{csim}. Furthermore, for foundation models lacking targeted domain adaptation, ubiquitous "background" concepts (e.g., Concept 1245 in Figure \ref{fig:sim_example}) do appear prominently in the importance rankings, reflecting potentially unfavorable downstream task behavior.
%This may reflect the underlying biases of the pre-trained model, 
Mitigation of these concepts could enhance human interpretability, e.g., with an additional normalization based on concept uniformity.% could effectively filter them out.

% Additionally, CAV extraction is inherently data-dependent; the resulting concepts may only generalize to explain similarities in alternative datasets in clear transfer learning scenarios. 
%Additionally, we inherit the projection assumptions of the SAEs included in the above experiments; specifically, the Vanilla and JumpRelu make linear separability assumptions of the concept, while the TopK relies on angular separability\cite{hindupur2025projectingassumptionsdualitysparse}, meaning the extracted concepts may miss more abstract or relational similarities. 
Regarding human interpretability, while a result in the Appendix, offers an intuition for the invasiveness of our conceptual interventions, the pixel representation of these embedding shifts remains unknown. A promising extension would be to integrate diffusion models \cite{yang2023diffusurvey} to visualize the effects of our perturbations directly in the image space. 
Finally, while the usefulness of concepts for users has been evaluated in user studies (e.g., \cite{fel2023craft}) and how well objects can be recovered \cite{fel2025archetypal}, additional user studies can further assess human alignment with such explanations, in particular for similarities. %We similarly defer to future work.
%Finally, since human interpretable concepts are not a guarantee, a crucial next step for validating our explainability framework is a comprehensive user study to assess human alignment of explanations, which we similarly defer to future work.

\section{Conclusion}\label{sec:conclusion}
In this work, we propose a novel explainability framework for image similarity based on automatically extracted Concept Activation Vectors (CAVs). By decomposing a network's embeddings into semantic concepts, we can perturb these interpretable components directly within the embedding space to systematically study their effect on a given model and an associated similarity metric. To isolate these effects without confounding other visual features, we introduce the Multi-CIFAR-10 dataset. First, we show that the SAE dictionaries represent concepts by preserving similarity symmetry. Then, we demonstrate that latent concept perturbations remain closer to the original embedding distribution and yield fewer OOD samples compared to standard input-space perturbations. Further, we confirm the predictive power associated with our explanations enables linear recovery of the original similarity scores. Finally, we present a case study utilizing our framework to generate diverse explanations, ranging from pairwise concept importances to similarity localization via heatmaps, introduce the novel Exemplar Retrieval task, and produce valid, actionable group explanations.

\bibliography{aaai2027}

@article{burkart2021XAIsurvey,
  title={A survey on the explainability of supervised machine learning},
  author={Burkart, Nadia and others},
  journal={Journal of Artificial Intelligence Research},
  volume={70},
  pages={245--317},
  year={2021}
}

@article{poeta2023concept,
  title={Concept-based explainable artificial intelligence: A survey},
  author={Poeta, Eleonora and Ciravegna, Gabriele and Pastor, Eliana and Cerquitelli, Tania and Baralis, Elena},
  journal={ACM Computing Surveys},
  year={2023},
  publisher={ACM New York, NY}
}

@InProceedings{Choi_2021_CVPR,
    author    = {Choi, Seunghwan and Park, Sunghyun and Lee, Minsoo and Choo, Jaegul},
    title     = {VITON-HD: High-Resolution Virtual Try-On via Misalignment-Aware Normalization},
    booktitle = {CVPR},
    year      = {2021},
}

@misc{hsieh2025focallensinstructiontuningenables,
      title={FocalLens: Instruction Tuning Enables Zero-Shot Conditional Image Representations}, 
      author={Cheng-Yu Hsieh and Pavan Kumar Anasosalu Vasu and Fartash Faghri and Raviteja Vemulapalli and Chun-Liang Li and Ranjay Krishna and Oncel Tuzel and Hadi Pouransari},
      year={2025},
      eprint={2504.08368},
      archivePrefix={arXiv},
      primaryClass={cs.CV},
      url={https://arxiv.org/abs/2504.08368}, 
}

@inproceedings{lim2026clay,
  title     = {CLAY: Conditional Visual Similarity Modulation in Vision-Language Embedding Space},
  author    = {Lim, Sohwi and Hyoseok, Lee and Park, Jungjoon and Oh, Tae-Hyun},  
  booktitle = {Proceedings of the IEEE/CVF Conference on Computer Vision and Pattern Recognition (CVPR)},
  year      = {2026}
  }

@article{yang2023diffusurvey,
  title={Diffusion models: A comprehensive survey of methods and applications},
  author={Yang, Ling and Zhang, Zhilong and Song, Yang and Hong, Shenda and Xu, Runsheng and Zhao, Yue and Zhang, Wentao and Cui, Bin and Yang, Ming-Hsuan},
  journal={ACM Computing Surveys},
  volume={56},
  number={4},
  pages={1--39},
  year={2023},
  publisher={ACM New York, NY, USA}
}

@inproceedings{
gao2025scaling,
title={Scaling and evaluating sparse autoencoders},
author={Leo Gao and Tom Dupre la Tour and Henk Tillman and Gabriel Goh and Rajan Troll and Alec Radford and Ilya Sutskever and Jan Leike and Jeffrey Wu},
booktitle={The Thirteenth International Conference on Learning Representations},
year={2025},
url={https://openreview.net/forum?id=tcsZt9ZNKD}
}

@inproceedings{10.1145/3726302.3729971,
author = {Heuss, Maria and others},
title = {RankingSHAP - Faithful Listwise Feature Attribution Explanations for Ranking Models},
year = {2025},
isbn = {9798400715921},
publisher = {Association for Computing Machinery},
address = {New York, NY, USA},
url = {https://doi.org/10.1145/3726302.3729971},
doi = {10.1145/3726302.3729971},
booktitle = {Proceedings of the 48th International ACM SIGIR Conference on Research and Development in Information Retrieval},
pages = {381–391},
numpages = {11},
location = {Padua, Italy},
series = {SIGIR '25}
}

@inproceedings{fel2023craft,
  title={Craft: Concept recursive activation factorization for explainability},
  author={Fel, Thomas and Picard, Agustin and Bethune, Louis and Boissin, Thibaut and Vigouroux, David and Colin, Julien and Cad{\`e}ne, R{\'e}mi and Serre, Thomas},
  booktitle={Proceedings of the IEEE/CVF Conference on Computer Vision and Pattern Recognition},
  pages={2711--2721},
  year={2023}
}

@inproceedings{plummer2020these,
  title={Why do these match? explaining the behavior of image similarity models},
  author={Plummer, Bryan A and Vasileva, Mariya I and Petsiuk, Vitali and Saenko, Kate and Forsyth, David},
  booktitle={European Conference on Computer Vision},
  pages={652--669},
  year={2020},
  organization={Springer}
}

@article{eberle2020building,
  title={Building and interpreting deep similarity models},
  author={Eberle, Oliver and B{\"u}ttner, Jochen and Kr{\"a}utli, Florian and M{\"u}ller, Klaus-Robert and Valleriani, Matteo and Montavon, Gr{\'e}goire},
  journal={IEEE Transactions on Pattern Analysis and Machine Intelligence},
  volume={44},
  number={3},
  pages={1149--1161},
  year={2020},
  publisher={IEEE}
}

@article{lin2021xcos,
  title={xcos: An explainable cosine metric for face verification task},
  author={Lin, Yu-Sheng and Liu, Zhe-Yu and Chen, Yu-An and Wang, Yu-Siang and Chang, Ya-Liang and Hsu, Winston H},
  journal={ACM Transactions on Multimedia Computing, Communications, and Applications (TOMM)},
  volume={17},
  number={3s},
  pages={1--16},
  year={2021},
  publisher={ACM New York, NY}
}

@inproceedings{williford2020explainable,
  title={Explainable face recognition},
  author={Williford, Jonathan R and others},
  booktitle={European conference on computer vision},
  pages={248--263},
  year={2020},
  organization={Springer}
}

@misc{Linear_hypothesis,
      title={From Flat to Hierarchical: Extracting Sparse Representations with Matching Pursuit}, 
      author={Valérie Costa  and others},
      year={2025},
      eprint={2506.03093},
      archivePrefix={arXiv},
      primaryClass={cs.LG},
}

@inproceedings{
fel2026into,
title={Into the Rabbit Hull: From Task-Relevant Concepts in {DINO} to Minkowski Geometry},
author={Thomas Fel and Binxu Wang and Michael A. Lepori and Matthew Kowal and Andrew Lee and Randall Balestriero and Sonia Joseph and Ekdeep Singh Lubana and Talia Konkle and Demba E. Ba and Martin Wattenberg},
booktitle={The Fourteenth International Conference on Learning Representations},
year={2026},
url={https://openreview.net/forum?id=pbbRmnFkjG}
}

@misc{bhalla2026sparseautoencoderscaptureconcept,
      title={Do Sparse Autoencoders Capture Concept Manifolds?}, 
      author={Usha Bhalla and Thomas Fel and Can Rager and Sheridan Feucht and Tal Haklay and Daniel Wurgaft and Siddharth Boppana and Matthew Kowal and Vasudev Shyam and Jack Merullo and Atticus Geiger and Ekdeep Singh Lubana},
      year={2026},
      eprint={2604.28119},
      archivePrefix={arXiv},
      primaryClass={cs.LG},
      url={https://arxiv.org/abs/2604.28119}, 
}

@inproceedings{gam1,
author = {Barkan, Oren and Armstrong, Omri and Hertz, Amir and Caciularu, Avi and Katz, Ori and Malkiel, Itzik and Koenigstein, Noam},
title = {GAM: Explainable Visual Similarity and Classification via Gradient Activation Maps},
year = {2021},
isbn = {9781450384469},
publisher = {Association for Computing Machinery},
address = {New York, NY, USA},
url = {https://doi.org/10.1145/3459637.3482430},
doi = {10.1145/3459637.3482430},
booktitle = {Proceedings of the 30th ACM International Conference on Information \& Knowledge Management},
pages = {68–77},
numpages = {10},
location = {Virtual Event, Queensland, Australia},
series = {CIKM '21}
}

@techreport{krizhevsky2009learning,
  title={Learning multiple layers of features from tiny images},
  author={Krizhevsky, Alex and Nair, Vinod and Hinton, Geoffrey},
  year={2009},
  institution={University of Toronto}
}

@article{chen2023sim2word,
  title={Sim2word: Explaining similarity with representative attribute words via counterfactual explanations},
  author={Chen, Ruoyu and Li, Jingzhi and Zhang, Hua and Sheng, Changchong and Liu, Li and Cao, Xiaochun},
  journal={ACM Transactions on Multimedia Computing, Communications and Applications},
  volume={19},
  number={6},
  pages={1--22},
  year={2023},
  publisher={ACM New York, NY}
}

@misc{umap,
      title={UMAP: Uniform Manifold Approximation and Projection for Dimension Reduction}, 
      author={Leland McInnes and others},
      year={2020},
      eprint={1802.03426},
      archivePrefix={arXiv},
      primaryClass={stat.ML}
}

@inproceedings{ood_exp_paper,
 author = {Hase, Peter and others},
 booktitle = {Advances in Neural Information Processing Systems},
 editor = {M. Ranzato and A. Beygelzimer and Y. Dauphin and P.S. Liang and J. Wortman Vaughan},
 pages = {3650--3666},
 publisher = {Curran Associates, Inc.},
 title = {The Out-of-Distribution Problem in Explainability and Search Methods for Feature Importance Explanations},
 volume = {34},
 year = {2021}
}

@inproceedings{holistic_concepts,
 author = {Fel, Thomas and Boutin, Victor and B\'{e}thune, Louis and Cadene, Remi and Moayeri, Mazda and And\'{e}ol, L\'{e}o and Chalvidal, Mathieu and Serre, Thomas},
 booktitle = {Advances in Neural Information Processing Systems},
 editor = {A. Oh and T. Naumann and A. Globerson and K. Saenko and M. Hardt and S. Levine},
 pages = {54805--54818},
 publisher = {Curran Associates, Inc.},
 title = {A Holistic Approach to Unifying Automatic Concept Extraction and Concept Importance Estimation},
 volume = {36},
 year = {2023}
}

@article{sun2022knnood,
  title={Out-of-distribution Detection with Deep Nearest Neighbors},
  author={Sun, Yiyou and Ming, Yifei and Zhu, Xiaojin and Li, Yixuan},
  journal={ICML},
  year={2022}
}

@book{wasserstein,
year = {2009},
publisher = {Springer},
author = {Villani, Cédric},
address = {Berlin},
booktitle = {Optimal transport old and new},
series = {Grundlehren der mathematischen Wissenschaften = A series of comprehensive studies in mathematics, 338},
isbn = {9783540710509},
language = {eng},
title = {Optimal transport old and new },
url = {http://search.ebscohost.com/login.aspx?direct=true&scope=site&db=nlebk&db=nlabk&AN=261958},
}

@inproceedings{malkiel2022interpreting,
  title={Interpreting bert-based text similarity via activation and saliency maps},
  author={Malkiel, Itzik and Ginzburg, Dvir and Barkan, Oren and Caciularu, Avi and Weill, Jonathan and Koenigstein, Noam},
  booktitle={Proceedings of the ACM Web Conference 2022},
  pages={3259--3268},
  year={2022}
}

@inproceedings{
  borowski2021exemplary,
  title={Exemplary Natural Images Explain {CNN} Activations Better than State-of-the-Art Feature Visualization},
  author={Judy Borowski and Roland Simon Zimmermann and Judith Schepers and Robert Geirhos and Thomas S. A. Wallis and Matthias Bethge and Wieland Brendel},
  booktitle={International Conference on Learning Representations},
  year={2021},
  url={https://openreview.net/forum?id=QO9-y8also-}
}

@inproceedings{moeller2024approximate,
  title={Approximate attributions for off-the-shelf Siamese transformers},
  author={Moeller, Lucas and others},
  booktitle={Proceedings of the 18th Conference of the European Chapter of the Association for Computational Linguistics (Volume 1: Long Papers)},
  pages={2059--2071},
  year={2024}
}

@inproceedings{moeller2023attribution,
  title={An attribution method for Siamese encoders},
  author={Moeller, Lucas and others},
  booktitle={Proceedings of the 2023 Conference on Empirical Methods in Natural Language Processing},
  pages={15818--15827},
  year={2023}
}

@inproceedings{opitz2025interpretable,
  title={Interpretable text embeddings and text similarity explanation: A survey},
  author={Opitz, Juri and Moeller, Lucas and Michail, Andrianos and Pad{\'o}, Sebastian and Clematide, Simon},
  booktitle={Proceedings of the 2025 Conference on Empirical Methods in Natural Language Processing},
  pages={22314--22330},
  year={2025}
}

@article{mohan2023deep,
  title={Deep metric learning for computer vision: A brief overview},
  author={Mohan, Deen Dayal and Jawade, Bhavin and Setlur, Srirangaraj and Govindaraju, Venu},
  journal={Handbook of Statistics},
  volume={48},
  pages={59--79},
  year={2023},
  publisher={Elsevier}
}

@inproceedings{tan2019learning,
  title={Learning similarity conditions without explicit supervision},
  author={Tan, Reuben and Vasileva, Mariya I and Saenko, Kate and Plummer, Bryan A},
  booktitle={Proceedings of the IEEE/CVF international conference on computer vision},
  pages={10373--10382},
  year={2019}
}

@InProceedings{Hsiao_2018_CVPR,
author = {Hsiao, Wei-Lin and Grauman, Kristen},
title = {Creating Capsule Wardrobes From Fashion Images},
booktitle = {Proceedings of the IEEE Conference on Computer Vision and Pattern Recognition (CVPR)},
month = {June},
year = {2018}
}

@inproceedings{radenovic2018revisiting,
  title={Revisiting oxford and paris: Large-scale image retrieval benchmarking},
  author={Radenovi{\'c}, Filip and Iscen, Ahmet and Tolias, Giorgos and Avrithis, Yannis and Chum, Ond{\v{r}}ej},
  booktitle={Proceedings of the IEEE conference on computer vision and pattern recognition},
  pages={5706--5715},
  year={2018}
}

@article{chen2022deep,
  title={Deep learning for instance retrieval: A survey},
  author={Chen, Wei and Liu, Yu and Wang, Weiping and Bakker, Erwin M and Georgiou, Theodoros and Fieguth, Paul and Liu, Li and Lew, Michael S},
  journal={IEEE Transactions on Pattern Analysis and Machine Intelligence},
  volume={45},
  number={6},
  pages={7270--7292},
  year={2022},
  publisher={IEEE}
}

@article{sun2014deep,
  title={Deep learning face representation by joint identification-verification},
  author={Sun, Yi and Chen, Yuheng and Wang, Xiaogang and Tang, Xiaoou},
  journal={Advances in neural information processing systems},
  volume={27},
  year={2014}
}

@article{gururaj2024comprehensive,
  title={A comprehensive review of face recognition techniques, trends, and challenges},
  author={Gururaj, HL and Soundarya, BC and Priya, S and Shreyas, J and Flammini, Francesco},
  journal={IEEE Access},
  volume={12},
  pages={107903--107926},
  year={2024},
  publisher={IEEE}
}

@misc{hindupur2025projectingassumptionsdualitysparse,
      title={Projecting Assumptions: The Duality Between Sparse Autoencoders and Concept Geometry}, 
      author={Sai Sumedh R. Hindupur and Ekdeep Singh Lubana and Thomas Fel and Demba Ba},
      year={2025},
      eprint={2503.01822},
      archivePrefix={arXiv},
      primaryClass={cs.LG},
      url={https://arxiv.org/abs/2503.01822}, 
}

@misc{2025dinov3,
      title={DINOv3}, 
      author={Sim{\'e}oni, Oriane and Vo, Huy V and Seitzer, Maximilian and Baldassarre, Federico and Oquab, Maxime and others},
      year={2025},
      eprint={2508.10104},
      archivePrefix={arXiv},
      primaryClass={cs.CV},
      note={\url{https://doi.org/qpjq}}, 
}

@inproceedings{csim, 
author={Roberts, Isaac and Velioglu, Riza and Ashram, Inaam and Hermes, Luca and Hammer, Barabara},booktitle = {ESANN 2026 proceedings}, location = {Bruges (Belgium) and online}, publisher = {Ciaco - i6doc.com}, title = {{But Are These Images Conceptually Similar?}}, year = {2026}, }

@article{nieradzik2025reliable,
  title={Reliable evaluation of attribution maps in cnns: A perturbation-based approach},
  author={Nieradzik, Lars and others},
  journal={International journal of computer vision},
  volume={133},
  number={5},
  pages={2392--2409},
  year={2025},
  publisher={Springer}
}

@inproceedings{fel2025archetypal,
  title={Archetypal SAE: Adaptive and Stable Dictionary Learning for Concept Extraction in Large Vision Models},
  author={Fel, Thomas and Lubana, Ekdeep Singh and Prince, Jacob S and Kowal, Matthew and Boutin, Victor and Papadimitriou, Isabel and Wang, Binxu and Wattenberg, Martin and Ba, Demba E and Konkle, Talia},
  booktitle={International Conference on Machine Learning},
  pages={16543--16572},
  year={2025},
  organization={PMLR}
}

@article{rajamanoharan2024jumping,
  title={Jumping ahead: Improving reconstruction fidelity with jumprelu sparse autoencoders},
  author={Rajamanoharan, Senthooran and Lieberum, Tom and Sonnerat, Nicolas and Conmy, Arthur and Varma, Vikrant and Kram{\'a}r, J{\'a}nos and Nanda, Neel},
  journal={arXiv preprint arXiv:2407.14435},
  year={2024}
}

@article{bricken2023monosemanticity,
       title={Towards Monosemanticity: Decomposing Language Models With Dictionary Learning},
       author={Bricken, Trenton and Templeton, Adly and Batson, Joshua and Chen, Brian and Jermyn, Adam and Conerly, Tom and Turner, Nick and Anil, Cem and Denison, Carson and Askell, Amanda and Lasenby, Robert and Wu, Yifan and Kravec, Shauna and Schiefer, Nicholas and Maxwell, Tim and Joseph, Nicholas and Hatfield-Dodds, Zac and Tamkin, Alex and Nguyen, Karina and McLean, Brayden and Burke, Josiah E and Hume, Tristan and Carter, Shan and Henighan, Tom and Olah, Christopher},
       year={2023},
       journal={Transformer Circuits Thread},
       note={https://transformer-circuits.pub/2023/monosemantic-features/index.html}
    }

@article{longo2024explainable,
  title={Explainable Artificial Intelligence (XAI) 2.0: A manifesto of open challenges and interdisciplinary research directions},
  author={Longo, Luca and Brcic, Mario and Cabitza, Federico and Choi, Jaesik and Confalonieri, Roberto and Del Ser, Javier and Guidotti, Riccardo and Hayashi, Yoichi and Herrera, Francisco and Holzinger, Andreas and others},
  journal={Information Fusion},
  volume={106},
  pages={102301},
  year={2024},
  publisher={Elsevier}
}

@inproceedings{
moshkovitz2026position,
title={Position: Explainability Research Must Prioritize Foundations over Ad-hoc Methods},
author={Michal Moshkovitz and Suraj Srinivas and Lesia Semenova and Nave Frost and Cyrus Rashtchian and Valentyn Boreiko and Shichang Zhang and Himabindu Lakkaraju and Cynthia Rudin and Jennifer Wortman Vaughan},
booktitle={Forty-third International Conference on Machine Learning Position Paper Track},
year={2026},
url={https://openreview.net/forum?id=HaduBh7EaL}
}

@inproceedings{NEURIPS2024_56ed2bd1,
 author = {Zimmermann, Roland S. and others},
 booktitle = {Advances in Neural Information Processing Systems},
 doi = {10.52202/079017-1535},
 editor = {A. Globerson and L. Mackey and D. Belgrave and A. Fan and U. Paquet and J. Tomczak and C. Zhang},
 pages = {48448--48483},
 publisher = {Curran Associates, Inc.},
 title = {Measuring Per-Unit Interpretability at Scale Without Humans},
 url = {https://proceedings.neurips.cc/paper_files/paper/2024/file/56ed2bd15b66f709cd81cb1aaa0496b9-Paper-Conference.pdf},
 volume = {37},
 year = {2024}
}

% Check whether the conference requires a reproducibility checklist to be included in the paper.
% If so, you can uncomment the following line and ajust the path to include it.
% \input{ReproducibilityChecklist.tex}

\clearpage
\FloatBarrier
\appendix

\section{Supplementary Materials}

\subsection{Hardware}
For all necessary training in the experiments, we used NVIDIA A40 (46 GB) GPUs and 120 CPUs. The installed CUDA version is 12.6 with NVIDIA driver 560.35.05. All experiments were run using Python 3.11. Our implementation is built on PyTorch 2.10 and torchvision 0.25, with
concept extraction performed via the \texttt{overcomplete} library (0.3.0) and pretrained backbones loaded through \texttt{timm} (1.0.24). Evaluation and analysis rely on scikit-learn (1.8.0), SciPy (1.17.0), NumPy (2.3.5), POT (0.9.6) for the Wasserstein distance, and \texttt{umap-learn} (0.5.11) for dimensionality reduction. A complete list of pinned dependencies is provided in the accompanying code.

\section{Theory}

We demonstrate that under the Linear Representation Hypothesis \cite{Linear_hypothesis} and when $f$ is a linear dot product, the sum of the concept activations is exactly twice the computed dot product.
\begin{theorem}[Completeness Theorem]
Let \( f(\mathbf{x}, \mathbf{y}) = \mathbf{x}^\top \mathbf{y} \) denote the similarity function. 
Assume that each activation vector \( \mathbf{a}_k \in \mathbb{R}^d \) admits an exact reconstruction from a dictionary 
\( \mathbf{V} = [\mathbf{v}_1, \dots, \mathbf{v}_c] \in \mathbb{R}^{d \times c} \) with coefficients 
\( \mathbf{u}_k \in \mathbb{R}^c \), i.e.,
\[
\mathbf{a}_k = \sum_{i=1}^{c} u_{k,i}\,\mathbf{v}_i , \qquad k \in \{1,2\}.
\]
For a given concept \( i \), define the perturbed activation obtained by removing its contribution as
\[
\mathbf{a}_k^{(-i)} \coloneqq \mathbf{a}_k - u_{k,i}\,\mathbf{v}_i .
\]
Define the importance of concept \( i \) for the similarity between \( \mathbf{a}_1 \) and \( \mathbf{a}_2 \) as
\[
\Delta c_i
=
\Big( f(\mathbf{a}_1,\mathbf{a}_2) - f(\mathbf{a}_1^{(-i)},\mathbf{a}_2) \Big)
+
\Big( f(\mathbf{a}_1,\mathbf{a}_2) - f(\mathbf{a}_1,\mathbf{a}_2^{(-i)}) \Big).
\]
Then the importances satisfy the completeness property
\[
\sum_{i=1}^{c} \Delta c_i
=
2\,f(\mathbf{a}_1,\mathbf{a}_2).
\]
\end{theorem}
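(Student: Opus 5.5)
The plan is to use bilinearity of the dot product to write each one-sided difference in closed form, and then to sum over concepts using the exact reconstruction hypothesis.

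\textbf{Step 1: closed form for each term.} Since $f(\mathbf{x},\mathbf{y})=\mathbf{x}^\top\mathbf{y}$ is linear in its first argument,
\[
f(\mathbf{a}_1,\mathbf{a}_2)-f(\mathbf{a}_1^{(-i)},\mathbf{a}_2)=(\mathbf{a}_1-\mathbf{a}_1^{(-i)})^\top\mathbf{a}_2=u_{1,i}\,\mathbf{v}_i^\top\mathbf{a}_2 .
\]
By linearity in the second argument, the other term becomes $u_{2,i}\,\mathbf{a}_1^\top\mathbf{v}_i$. Hence
\[
\Delta c_i=u_{1,i}\,\mathbf{v}_i^\top\mathbf{a}_2+u_{2,i}\,\mathbf{a}_1^\top\mathbf{v}_i .
\]
This step uses no assumption beyond bilinearity. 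In particular, it does not depend on how the other concepts interact, so no orthogonality of the dictionary is required.

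\textbf{Step 2: summing over concepts.} Summing over $i=1,\dots,c$ and pulling the fixed vector $\mathbf{a}_2$ (respectively $\mathbf{a}_1$) out of the sum gives
\[
\sum_{i=1}^c\Delta c_i=\Big(\sum_{i=1}^c u_{1,i}\mathbf{v}_i\Big)^\top\mathbf{a}_2+\mathbf{a}_1^\top\Big(\sum_{i=1}^c u_{2,i}\mathbf{v}_i\Big).
\]
Applying the exact reconstruction $\mathbf{a}_k=\sum_i u_{k,i}\mathbf{v}_i$ for $k=1,2$, the right-hand side equals $\mathbf{a}_1^\top\mathbf{a}_2+\mathbf{a}_1^\top\mathbf{a}_2=2f(\mathbf{a}_1,\mathbf{a}_2)$. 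This is the claimed completeness property.

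\textbf{Main obstacle.} Once the one-sided differences are written via bilinearity, the argument is essentially a telescoping identity, so there is no real technical difficulty. The substantive point is that exactness of the reconstruction is used in Step 2. If the dictionary only approximates $\mathbf{a}_k$ with residual $\mathbf{r}_k$, the same computation gives
\[
\sum_{i=1}^c\Delta c_i=2f(\mathbf{a}_1,\mathbf{a}_2)-\mathbf{r}_1^\top\mathbf{a}_2-\mathbf{a}_1^\top\mathbf{r}_2 .
\]
This version could be stated as a remark quantifying the defect under SAE reconstruction error.
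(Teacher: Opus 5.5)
Your proof is correct and follows the paper's argument essentially line by line: bilinearity gives $\Delta c_i = u_{1,i}\mathbf{v}_i^\top\mathbf{a}_2 + u_{2,i}\mathbf{a}_1^\top\mathbf{v}_i$, and summing over $i$ with the exact reconstruction yields $2f(\mathbf{a}_1,\mathbf{a}_2)$. Your residual remark, $\sum_i \Delta c_i = 2f(\mathbf{a}_1,\mathbf{a}_2) - \mathbf{r}_1^\top\mathbf{a}_2 - \mathbf{a}_1^\top\mathbf{r}_2$, is also correct and is a useful addition that the paper does not state.
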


\begin{proof}
Using bilinearity of the dot product, the first term becomes
\[
\begin{aligned}
f(\mathbf{a}_1,\mathbf{a}_2)
-
f(\mathbf{a}_1^{(-i)},\mathbf{a}_2)
&=
\mathbf{a}_1^\top \mathbf{a}_2
-
(\mathbf{a}_1 - u_{1,i}\mathbf{v}_i)^\top \mathbf{a}_2 \\
&=
u_{1,i}\,\mathbf{v}_i^\top \mathbf{a}_2 .
\end{aligned}
\]
Similarly,
\[
\begin{aligned}
f(\mathbf{a}_1,\mathbf{a}_2)
-
f(\mathbf{a}_1,\mathbf{a}_2^{(-i)})
&=
\mathbf{a}_1^\top \mathbf{a}_2
-
\mathbf{a}_1^\top (\mathbf{a}_2 - u_{2,i}\mathbf{v}_i) \\
&=
u_{2,i}\,\mathbf{a}_1^\top \mathbf{v}_i .
\end{aligned}
\]
Hence
\[
\Delta c_i
=
u_{1,i}\,\mathbf{v}_i^\top \mathbf{a}_2
+
u_{2,i}\,\mathbf{v}_i^\top \mathbf{a}_1 .
\]
Summing over all concepts,
\[
\begin{aligned}
\sum_{i=1}^{c} \Delta c_i
&=
\sum_{i=1}^{c}
\Big(
u_{1,i}\,\mathbf{v}_i^\top \mathbf{a}_2
+
u_{2,i}\,\mathbf{v}_i^\top \mathbf{a}_1
\Big) \\
&=
\left( \sum_{i=1}^{c} u_{1,i}\mathbf{v}_i \right)^\top \mathbf{a}_2
+
\left( \sum_{i=1}^{c} u_{2,i}\mathbf{v}_i \right)^\top \mathbf{a}_1 .
\end{aligned}
\]
By the reconstruction identities \( \sum_{i=1}^{c} u_{k,i}\mathbf{v}_i = \mathbf{a}_k \), we obtain
\[
\sum_{i=1}^{c} \Delta c_i
=
\mathbf{a}_1^\top \mathbf{a}_2
+
\mathbf{a}_2^\top \mathbf{a}_1
=
2\,\mathbf{a}_1^\top \mathbf{a}_2
=
2\,f(\mathbf{a}_1,\mathbf{a}_2),
\]
which completes the proof.
\end{proof}

\begin{figure*}[htbp]
    \centering
    
    % --- Top Row: Images ---
    \begin{subfigure}[b]{0.32\textwidth}
        \centering
        \includegraphics[width=\textwidth]{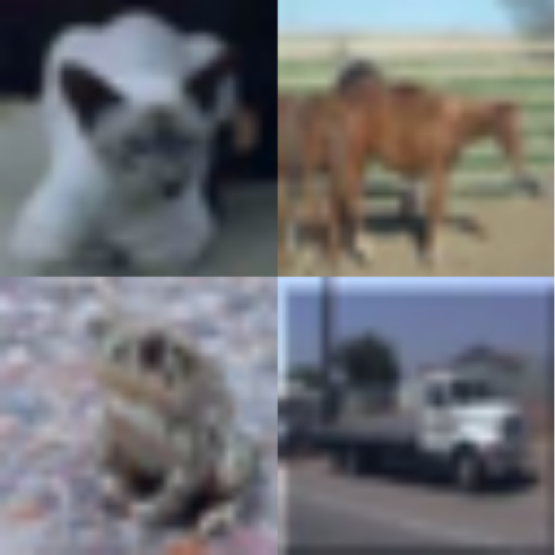}
        \caption{Query}
        \label{fig:discord_query}
    \end{subfigure}
    \hfill
    \begin{subfigure}[b]{0.32\textwidth}
        \centering
        \includegraphics[width=\textwidth]{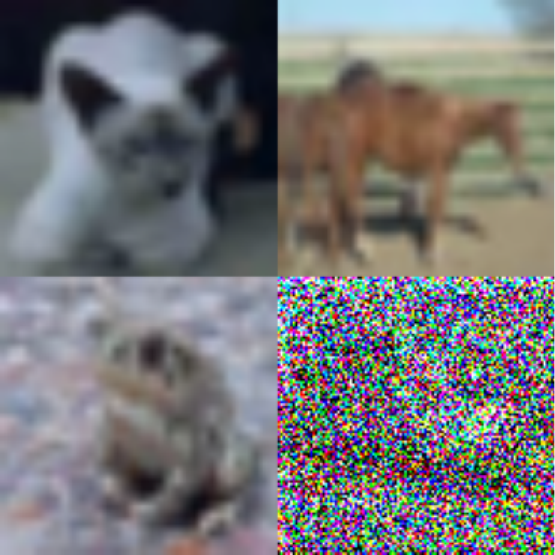}
        \caption{Masked Query}
        \label{fig:discord_masked}
    \end{subfigure}
    \hfill
    \begin{subfigure}[b]{0.32\textwidth}
        \centering
        \includegraphics[width=\textwidth]{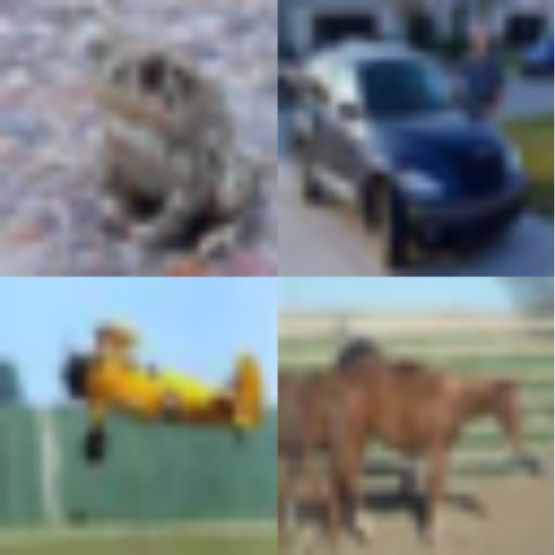}
        \caption{Reference}
        \label{fig:discord_ref}
    \end{subfigure}
    
    % \vspace{1.5em} % Spacing between images and table
    
    % % --- Bottom Row: Table ---
    % \resizebox{\textwidth}{!}{
    %     \begin{tabular}{|l|c|c|c|c|c|c|c|c|c|}
    %         \hline
    %         & \textbf{concept} & \textbf{activation} & \textbf{reconst.} & \textbf{Quad Avg} & \textbf{Blur} & \textbf{Black} & \textbf{White} & \textbf{Random} & \textbf{Gray} \\
    %         \hline
    %         \textbf{$\Delta c$} & 
    %         \cellcolor{red!20}-0.133 & 
    %         \cellcolor{green!20}0.096 & 
    %         \cellcolor{red!20}0.035 & 
    %         \cellcolor{red!20}0.114 & 
    %         \cellcolor{green!20}0.063 & 
    %         \cellcolor{green!20}0.140 & 
    %         \cellcolor{green!20}0.130 & 
    %         \cellcolor{green!20}0.190 & 
    %         \cellcolor{green!20}0.124 \\
    %         \hline
    %     \end{tabular}
    % }/
    
    \caption{Dissimilarity is difficult to track due to underlying similarities.}
    \label{fig:discordant_example_1}
\end{figure*}
\begin{table*}[t]
\centering
\scriptsize
\setlength{\tabcolsep}{4pt}
\renewcommand{\arraystretch}{1.1}

\begin{tabular}{l l l c c c c c c}
\toprule
 &  &  & Batch & Epochs & LR & Top-$k$ & Avg Accuracy $\pm$ std\\
Backbone & SAE & Setting & Size &  &  &  &  & \\
\midrule

\multirow{6}{*}{\rotatebox{90}{ConvNeXt}} & Vanilla & Pretrained & 4 & 20 & 0.005 & -- & $0.99775 \pm 0.00134$ \\
 & Vanilla & Domain-adapted & 64 & 20 & 0.005 & -- & $0.99338 \pm 0.01505$ \\
 & JumpReLU & Pretrained & 4 & 10 & 0.005 & -- & $0.95239 \pm 0.04034$ \\
 & JumpReLU & Domain-adapted & 4 & 10 & 0.005 & -- & $0.97501 \pm 0.03001$ \\
 & TopK & Pretrained & 32 & 20 & 0.005 & 4 & $0.99472 \pm 0.01466$ \\
 & TopK & Domain-adapted & 64 & 20 & 0.005 & 4 & $1.00000 \pm 0.00000$ \\
\midrule
\multirow{6}{*}{\rotatebox{90}{DINOv2}} & Vanilla & Pretrained & 32 & 20 & 0.005 & -- & $0.99407 \pm 0.01084$ \\
 & Vanilla & Domain-adapted & 32 & 20 & 0.005 & -- & $0.99586 \pm 0.01131$ \\
 & JumpReLU & Pretrained & 4 & 10 & 0.005 & -- & $0.96998 \pm 0.03184$ \\
 & JumpReLU & Domain-adapted & 4 & 10 & 0.005 & -- & $0.96027 \pm 0.04334$ \\
 & TopK & Pretrained & 4 & 20 & 0.0005 & 4 & $0.98352 \pm 0.03601$ \\
 & TopK & Domain-adapted & 8 & 20 & 0.005 & 4 & $1.00000 \pm 0.00000$ \\
\midrule
\multirow{6}{*}{\rotatebox{90}{DINOv3}} & Vanilla & Pretrained & 8 & 20 & 0.005 & -- & $0.99658 \pm 0.00596$ \\
 & Vanilla & Domain-adapted & 16 & 20 & 0.005 & -- & $0.99995 \pm 0.00008$ \\
 & JumpReLU & Pretrained & 4 & 20 & 0.005 & -- & $0.95282 \pm 0.03881$ \\
 & JumpReLU & Domain-adapted & 8 & 10 & 0.001 & -- & $0.94698 \pm 0.03874$ \\
 & TopK & Pretrained & 4 & 20 & 0.005 & 4 & $0.98793 \pm 0.01168$ \\
 & TopK & Domain-adapted & 16 & 10 & 0.005 & 4 & $1.00000 \pm 0.00000$ \\
\midrule
\multirow{6}{*}{\rotatebox{90}{ResNet}} & Vanilla & Pretrained & 4 & 10 & 0.005 & -- & $0.97063 \pm 0.03295$ \\
 & Vanilla & Domain-adapted & 8 & 20 & 0.0005 & -- & $0.99790 \pm 0.00593$ \\
 & JumpReLU & Pretrained & 4 & 10 & 0.005 & -- & $0.97268 \pm 0.02713$ \\
 & JumpReLU & Domain-adapted & 8 & 20 & 0.005 & -- & $0.95580 \pm 0.03753$ \\
 & TopK & Pretrained & 16 & 20 & 0.005 & 4 & $0.98214 \pm 0.03399$ \\
 & TopK & Domain-adapted & 64 & 10 & 0.005 & 4 & $1.00000 \pm 0.00000$ \\
\midrule
\multirow{6}{*}{\rotatebox{90}{SigLIP}} & Vanilla & Pretrained & 8 & 20 & 0.005 & -- & $0.97924 \pm 0.02433$ \\
 & Vanilla & Domain-adapted & 32 & 20 & 0.005 & -- & $0.99681 \pm 0.01019$ \\
 & JumpReLU & Pretrained & 4 & 20 & 0.005 & -- & $0.95403 \pm 0.04056$ \\
 & JumpReLU & Domain-adapted & 4 & 20 & 0.005 & -- & $0.95005 \pm 0.03885$ \\
 & TopK & Pretrained & 4 & 20 & 0.001 & 4 & $0.90137 \pm 0.05037$ \\
 & TopK & Domain-adapted & 16 & 20 & 0.0005 & 4 & $1.00000 \pm 0.00000$ \\
\midrule
\multirow{6}{*}{\rotatebox{90}{ViT}} & Vanilla & Pretrained & 4 & 20 & 0.005 & -- & $0.99522 \pm 0.00874$ \\
 & Vanilla & Domain-adapted & 64 & 20 & 0.005 & -- & $0.99805 \pm 0.00869$ \\
 & JumpReLU & Pretrained & 8 & 10 & 0.005 & -- & $0.96600 \pm 0.03460$ \\
 & JumpReLU & Domain-adapted & 4 & 10 & 0.005 & -- & $0.96841 \pm 0.03069$ \\
 & TopK & Pretrained & 32 & 20 & 0.005 & 4 & $0.99805 \pm 0.00925$ \\
 & TopK & Domain-adapted & 32 & 10 & 0.005 & 4 & $1.00000 \pm 0.00000$ \\
\bottomrule
\end{tabular}
\caption{Best SAE hyperparameters for each backbone.}
\label{tab:hyper_results}
\end{table*}
\begin{figure*}[t]
\centering
\setlength{\tabcolsep}{2pt}
\renewcommand{\arraystretch}{1.2}

\newcommand{\myColW}{2.6}

%\begin{tabular}{c c c c c c c}
\begin{tabular}{m{0.7cm} >{\centering\arraybackslash} m{\myColW cm} >{\centering\arraybackslash} m{\myColW cm} >{\centering\arraybackslash} m{\myColW cm} >{\centering\arraybackslash} m{\myColW cm} >{\centering\arraybackslash} m{\myColW cm} >{\centering\arraybackslash} m{\myColW cm}}

% ---------------- Column Headers ----------------
 & \textbf{ConvNeXt} & \textbf{DINOv2} & \textbf{DINOv3} & \textbf{ResNet} & \textbf{SigLIP} & \textbf{ViT} \\%[0.5em]

% ---------------- FID Row ----------------
\textbf{$\mathcal{W}_1$}
& \includegraphics[width=0.15\textwidth]{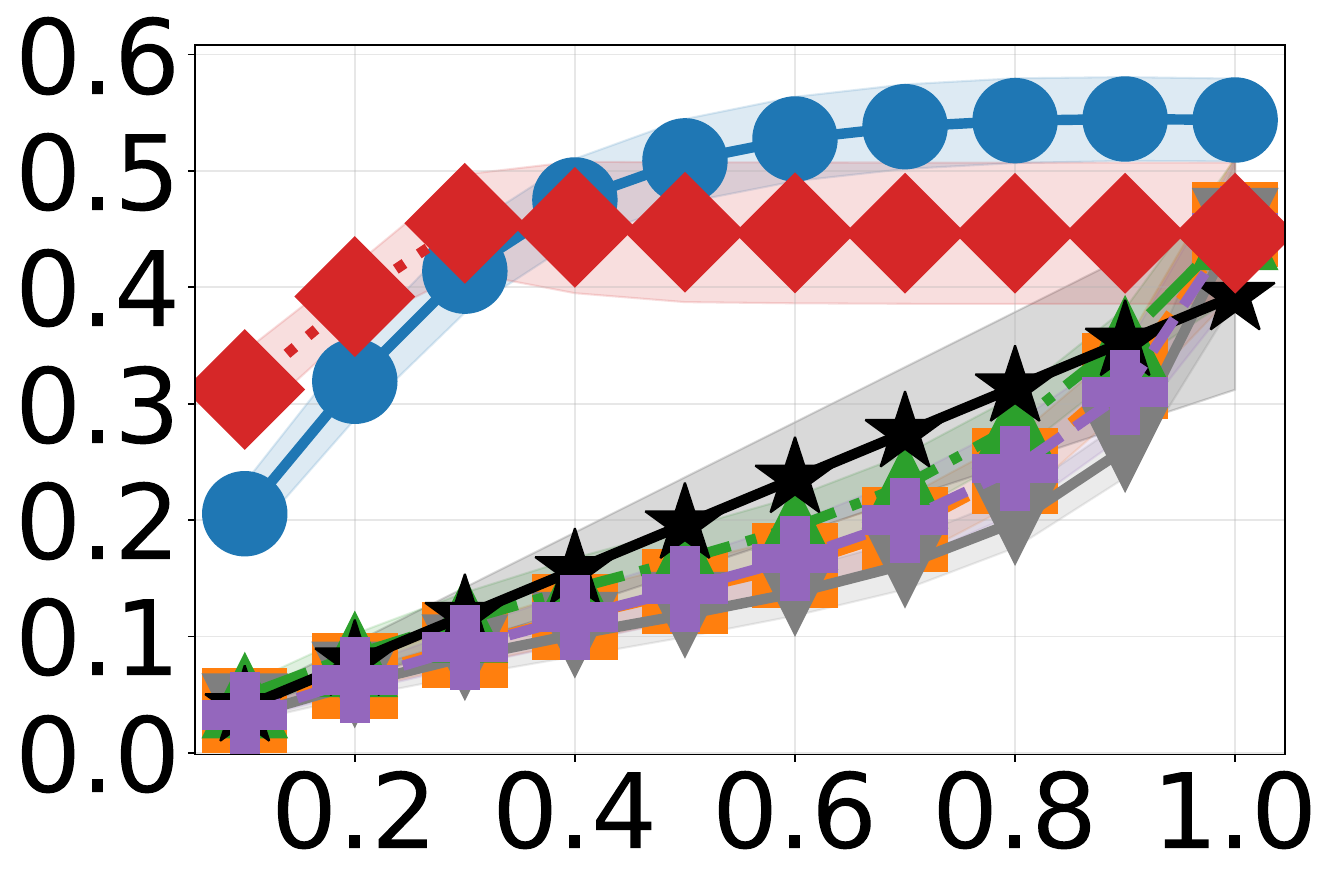}
& \includegraphics[width=0.15\textwidth]{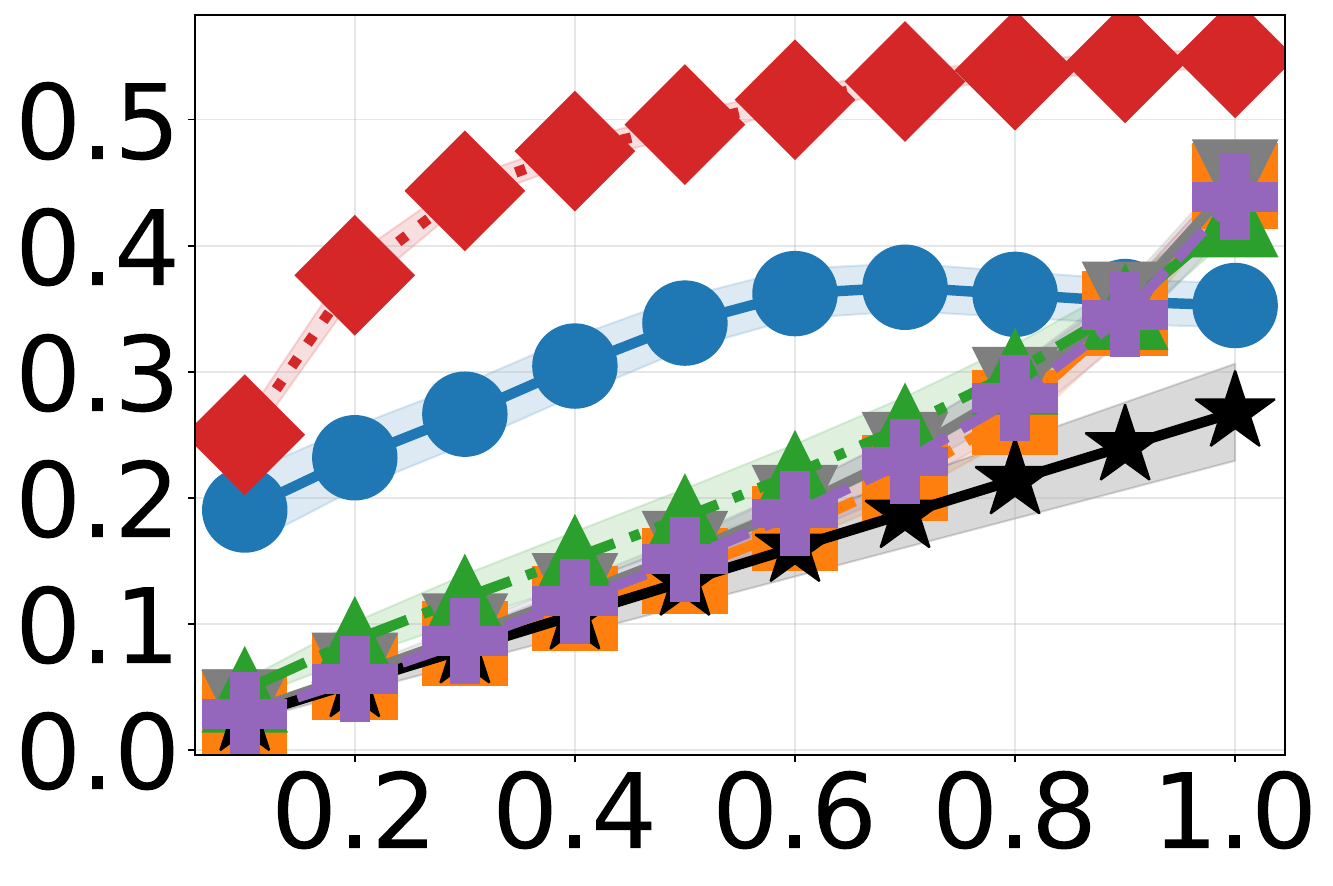}
& \includegraphics[width=0.15\textwidth]{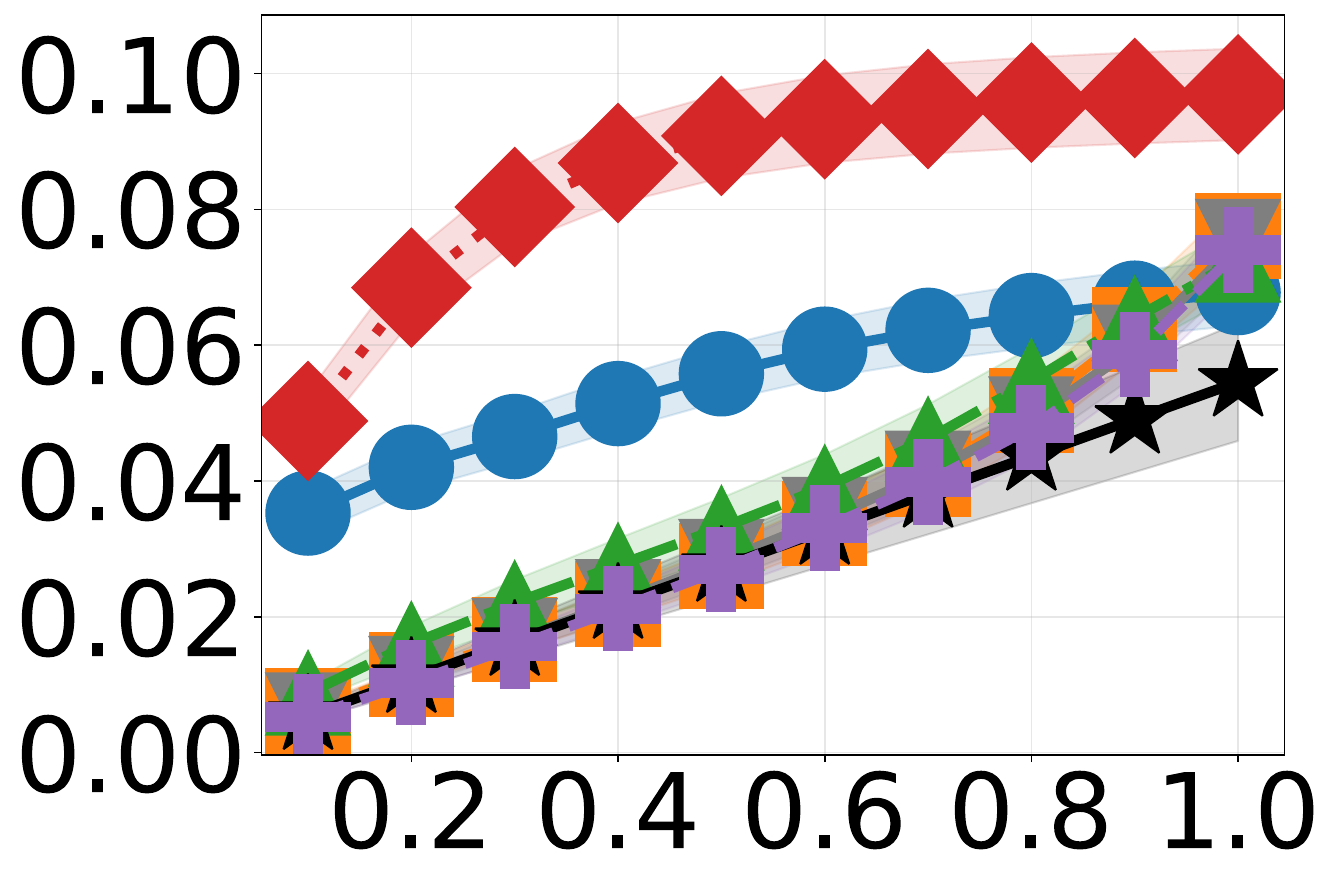}
& \includegraphics[width=0.15\textwidth]{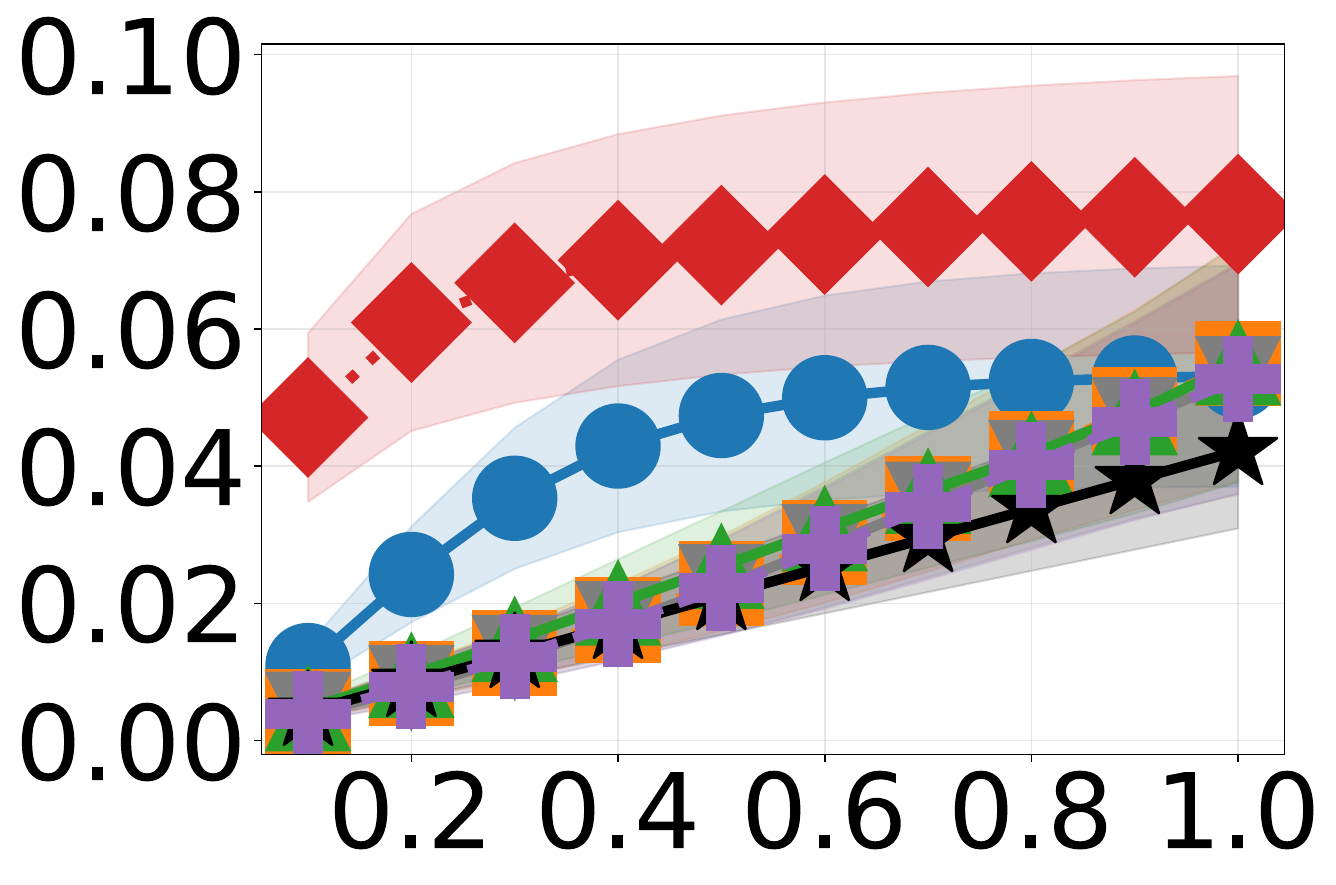}
& \includegraphics[width=0.15\textwidth]{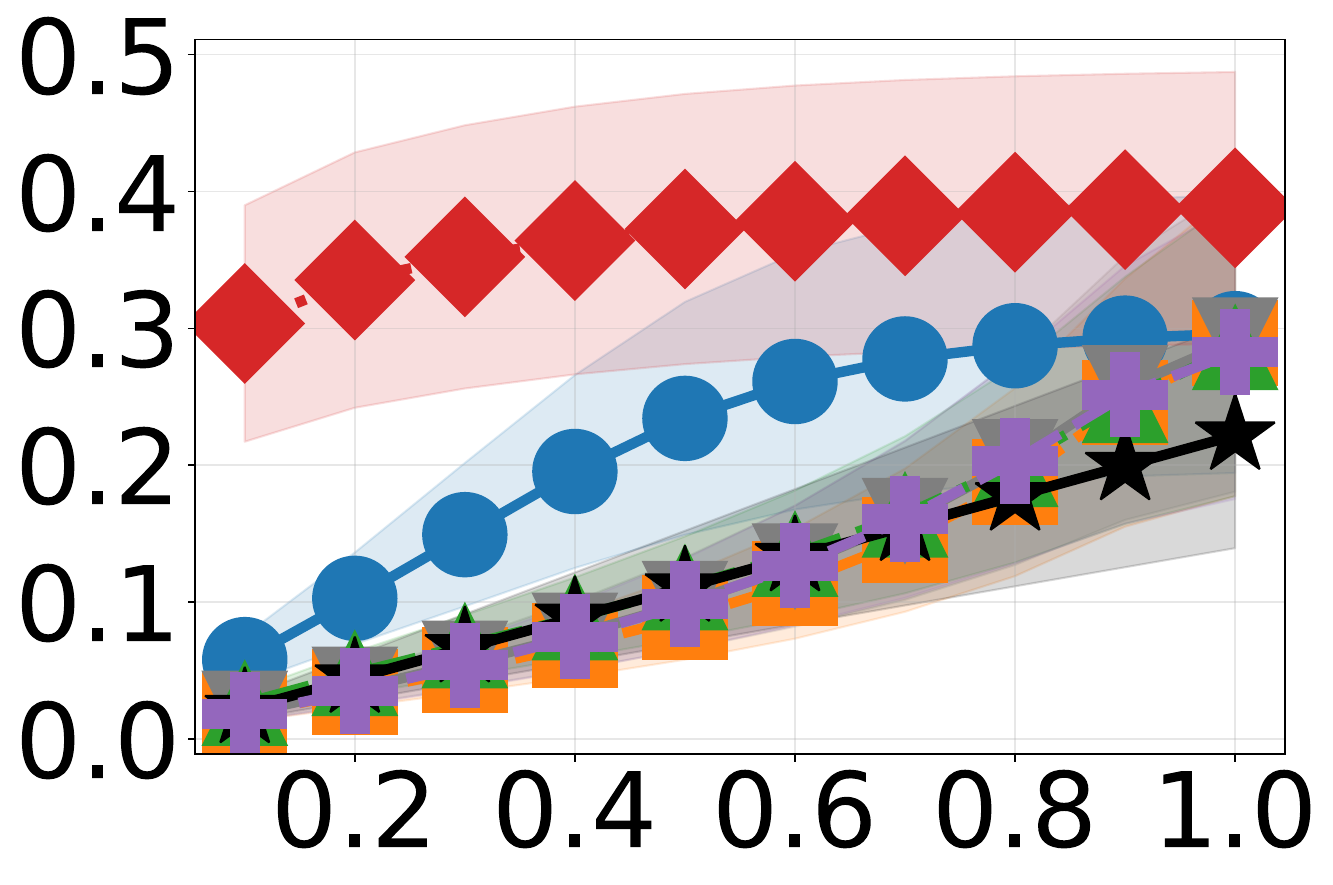}
& \includegraphics[width=0.15\textwidth]{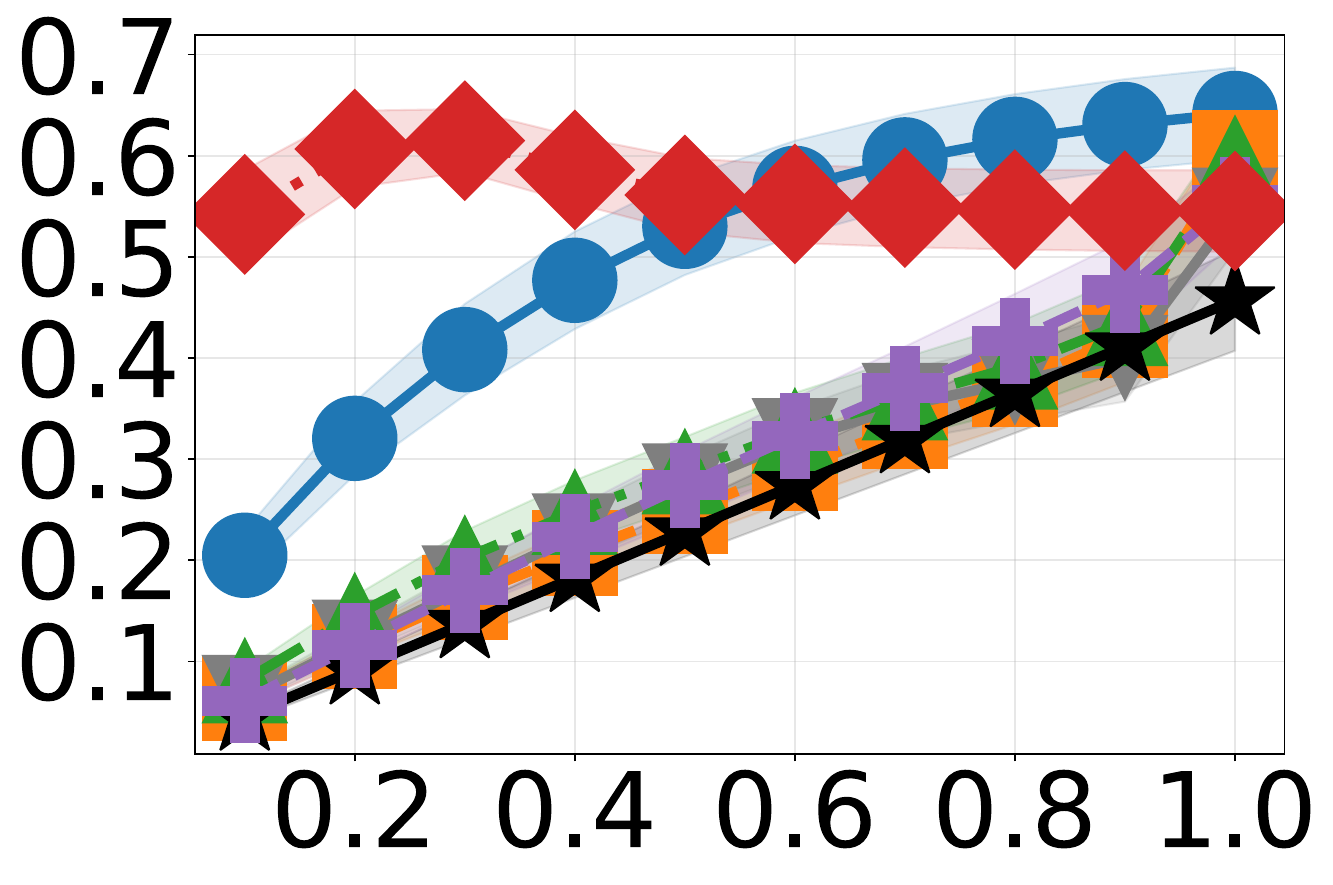}
\\%[1em]

% ---------------- OOD Row ----------------
\textbf{OOD}
& \includegraphics[width=0.15\textwidth]{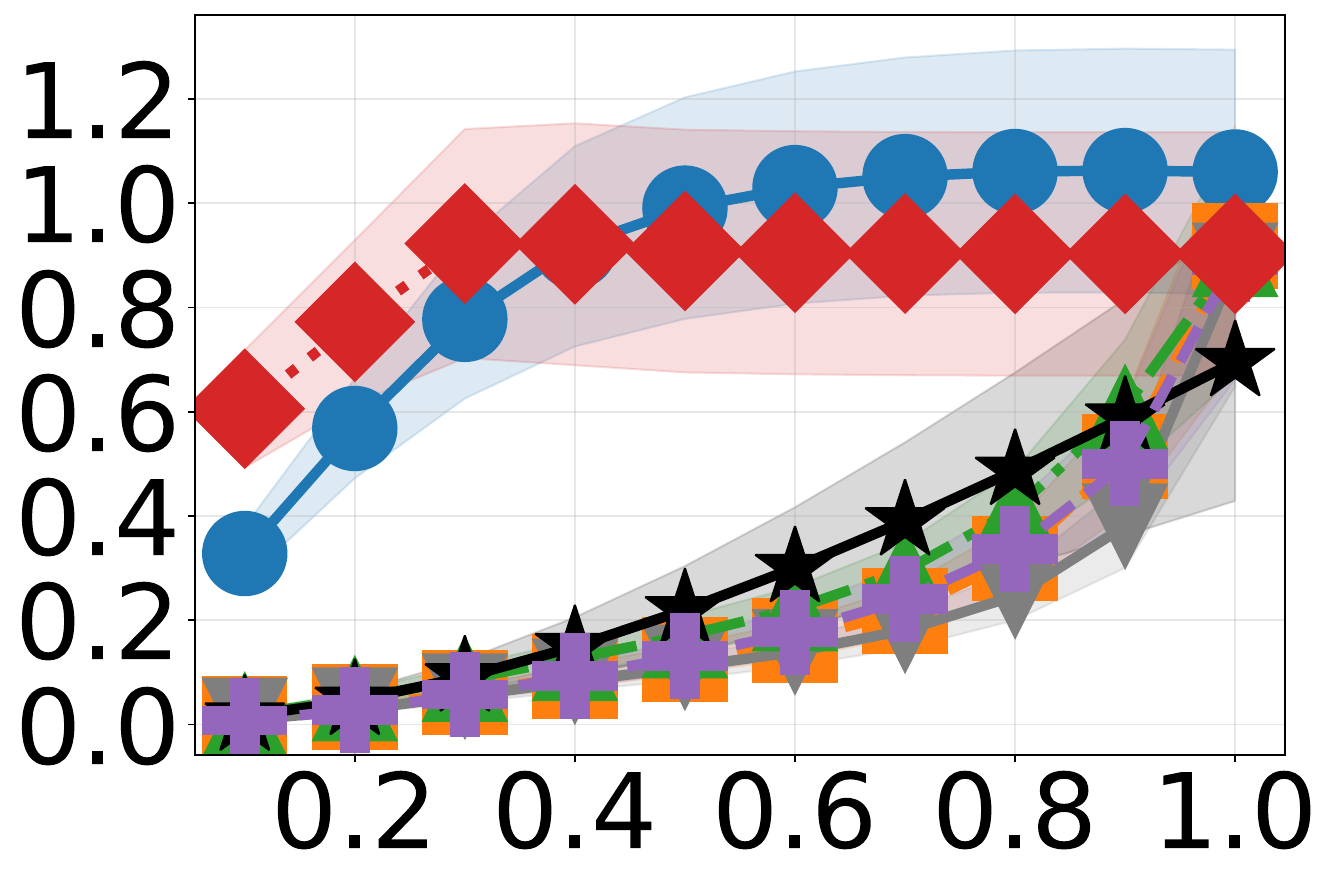}
& \includegraphics[width=0.15\textwidth]{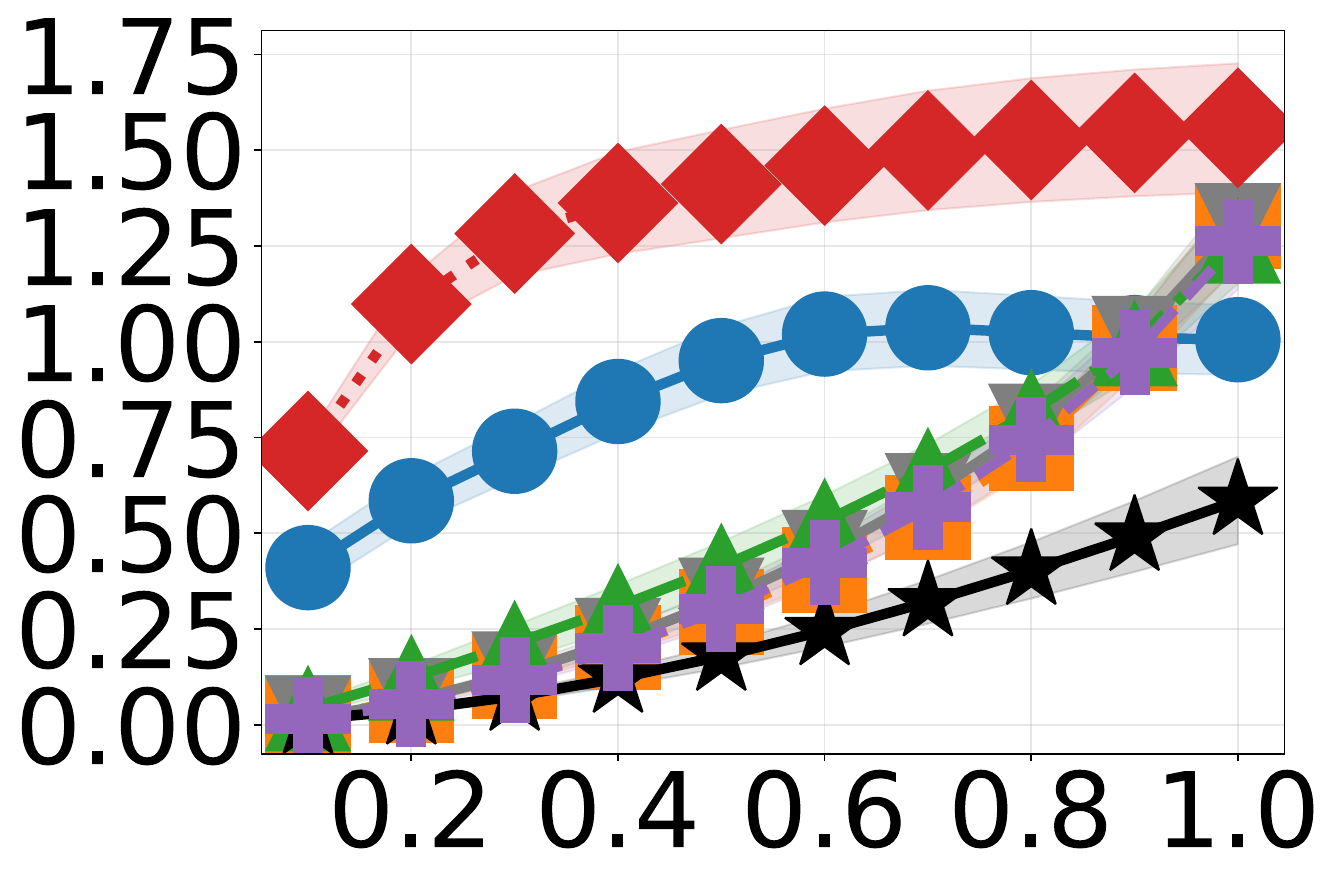}
& \includegraphics[width=0.15\textwidth]{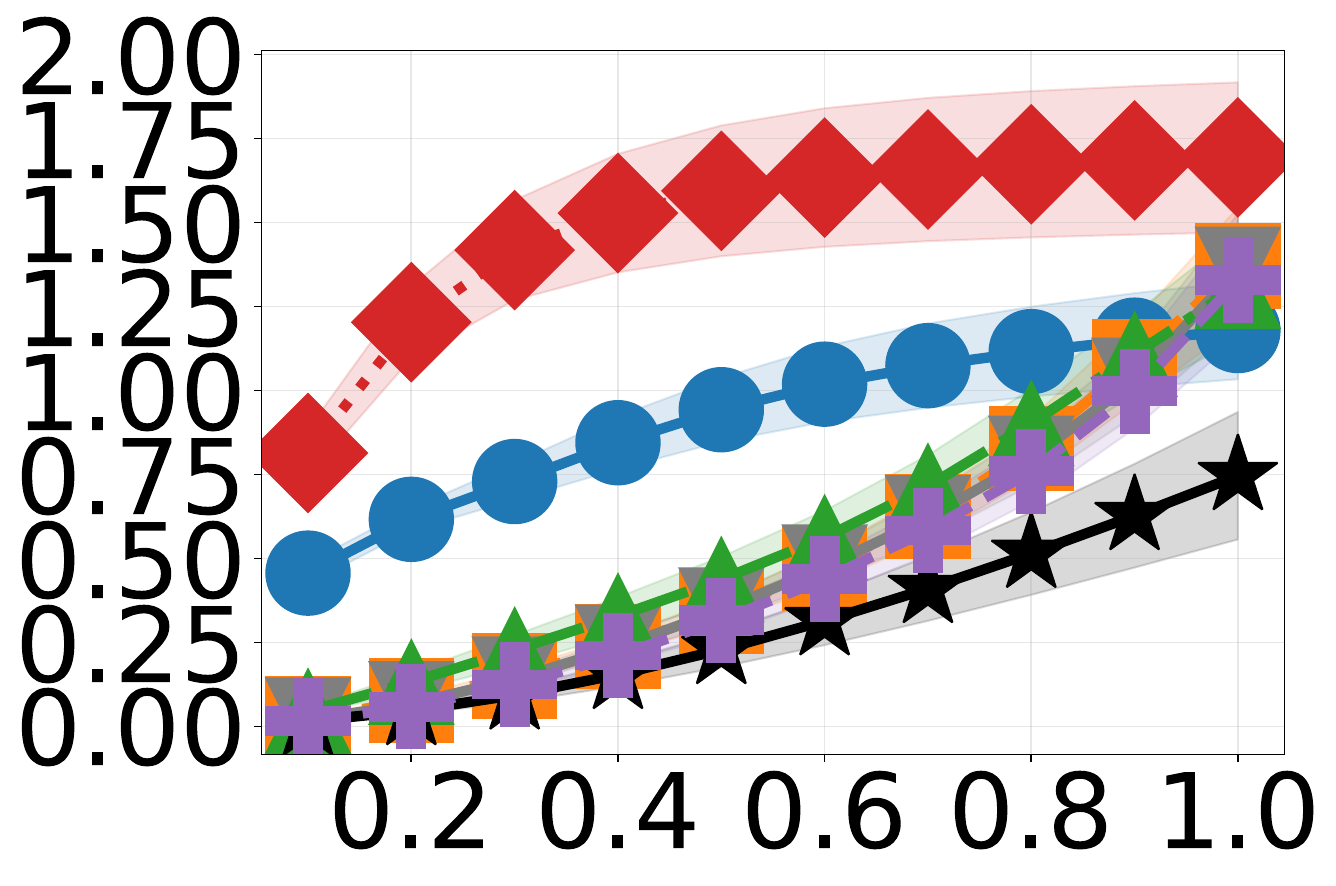}
& \includegraphics[width=0.15\textwidth]{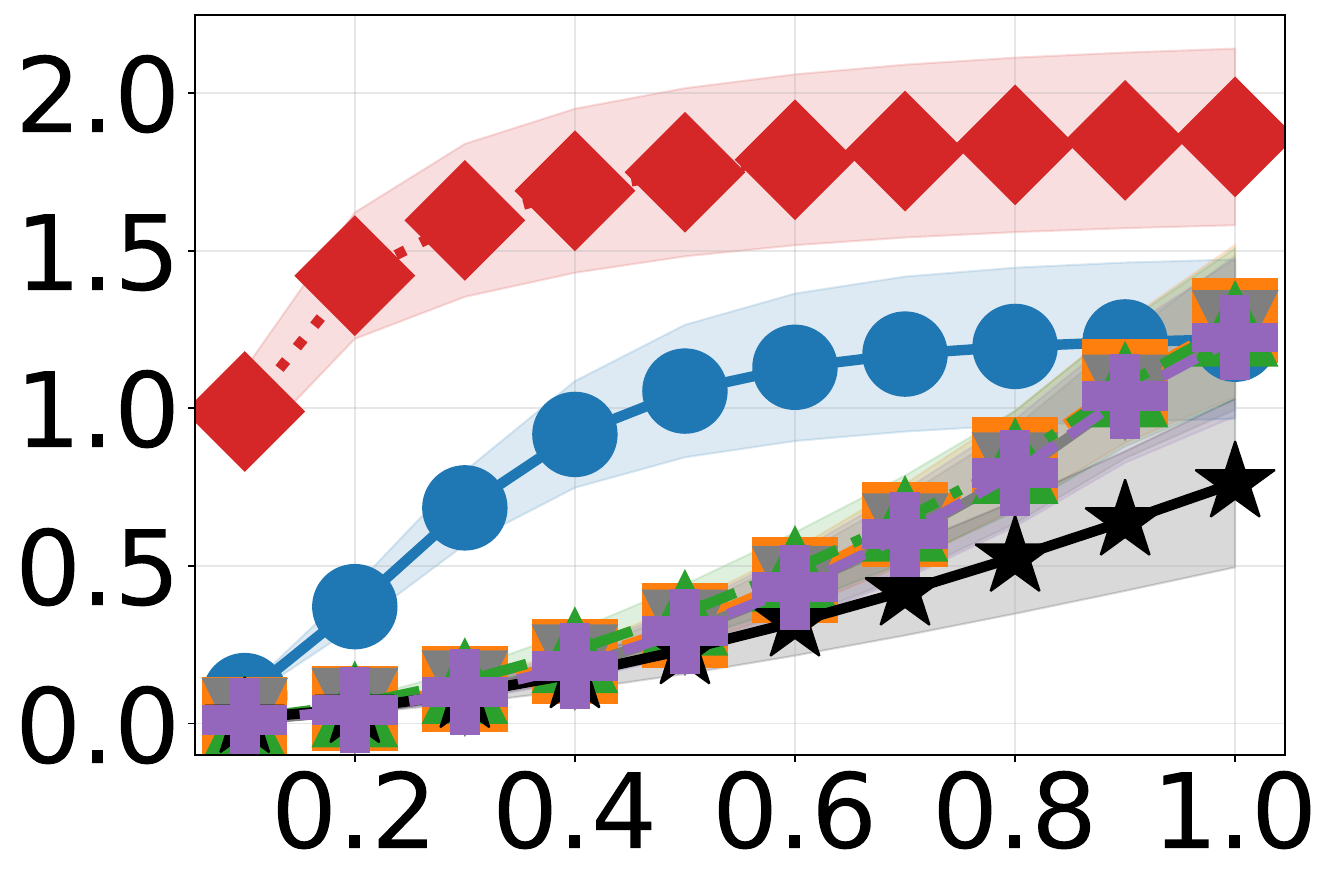}
& \includegraphics[width=0.15\textwidth]{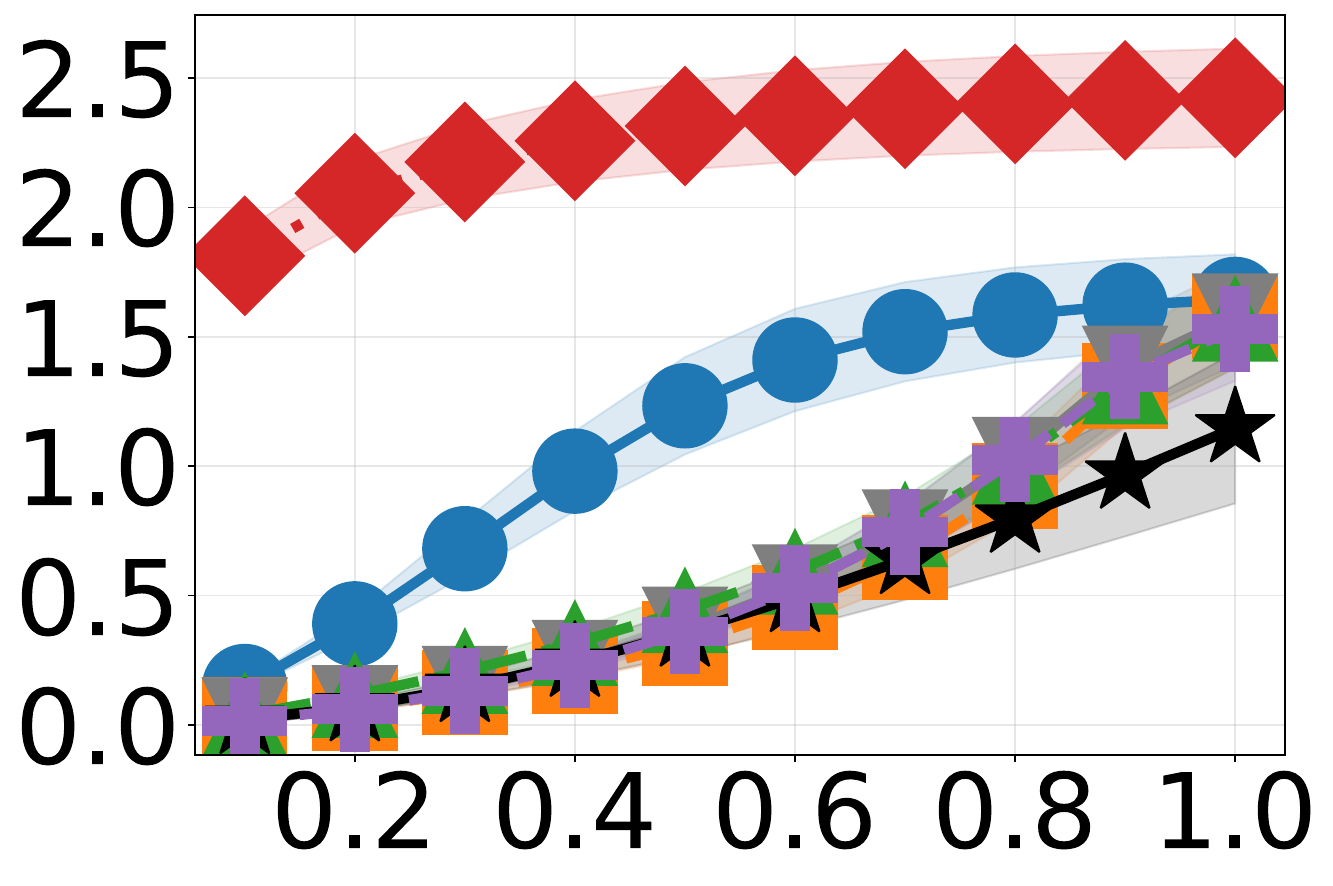}
& \includegraphics[width=0.15\textwidth]{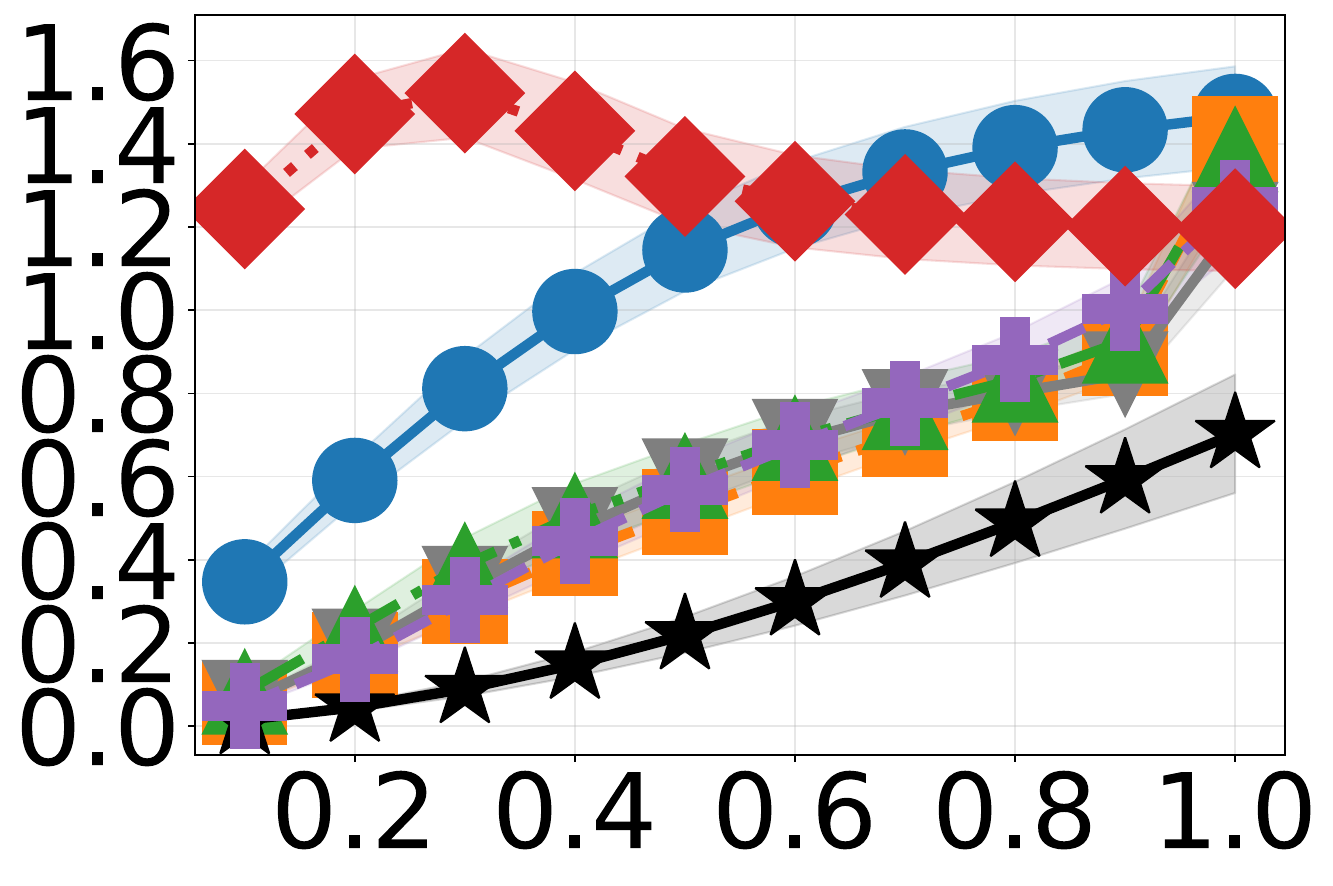}
\\

\end{tabular}

\vspace{0.5em}

% ---------------- Legend ----------------
\begin{tikzpicture}
\begin{axis}[
    hide axis,
    xmin=0, xmax=1,
    ymin=0, ymax=1,
    legend columns=7, % Adjusted to 4 so the 7 items wrap nicely into two rows
    legend style={
        draw=black,
        fill=white,
        legend cell align=left,
        title={Perturbation Type},
        at={(0.5,1.0)},
        anchor=center
    }
]

\addlegendimage{only marks, mark=*, mark size=4pt, color=blue}
\addlegendentry{Blur}

\addlegendimage{only marks, mark=square*, mark size=4pt, color=orange}
\addlegendentry{Darken}

\addlegendimage{only marks, mark=triangle*, mark size=5pt, color=green!60!black}
\addlegendentry{Lighten}

\addlegendimage{only marks, mark=diamond*, mark size=5pt, color=red}
\addlegendentry{Random}

% Gray with a downward pointing triangle (rotated 180 degrees)
\addlegendimage{only marks, mark=triangle*, mark options={rotate=180}, mark size=5pt, color=gray}
\addlegendentry{Gray}

% Quad_Avg with a thick plus sign
\addlegendimage{only marks, mark=+, mark options={line width=2pt}, mark size=5pt, color=purple}
\addlegendentry{Quad\_Avg}

% Ours updated to have both the line and the star marker
\addlegendimage{mark=star, mark size=5pt, line width=2pt, color=black}
\addlegendentry{Ours}

\end{axis}
\end{tikzpicture}

\caption{We illustrate the impact of perturbation severity on $\mathcal{W}_1$ and OOD scores across architectures. The score for our perturbation approach in the embedding space using concepts is indicated by the black line. Rows correspond to fidelity metrics, and columns to model architecture.}
\label{fig:fid_ood_models_ablation}
\end{figure*}
\begin{table*}[t]
\centering
\scriptsize
\setlength{\tabcolsep}{2pt}
\renewcommand{\arraystretch}{1.1}

\begin{tabular}{
l l
S[table-format=2.2]@{\,/\,}S[table-format=2.2]
S[table-format=2.2]@{\,/\,}S[table-format=2.2]
S[table-format=2.2]@{\,/\,}S[table-format=2.2]
S[table-format=2.2]@{\,/\,}S[table-format=2.2]
S[table-format=2.2]@{\,/\,}S[table-format=2.2]
S[table-format=2.2]@{\,/\,}S[table-format=2.2]
}
\toprule
 &  & \multicolumn{12}{c}{$\mathcal{W}_1$ $\downarrow$ / OOD $\downarrow$} \\
\cmidrule(lr){3-14}
 & 
 & \multicolumn{2}{c}{Convnext} & \multicolumn{2}{c}{Dino2} & \multicolumn{2}{c}{Dino3} & \multicolumn{2}{c}{Resnet} & \multicolumn{2}{c}{Siglip} & \multicolumn{2}{c}{Vit} \\
\midrule

\multirow{7}{*}{\rotatebox{90}{Domain-adapted}} & Black & 0.41 & 0.72 & 0.45 & 1.22 & 0.08 & 1.24 & 0.07 & 1.23 & 0.40 & 1.72 & 0.62 & 1.35 \\
 & Gray & 0.40 & 0.68 & 0.45 & 1.21 & 0.08 & 1.24 & 0.07 & 1.21 & 0.40 & 1.73 & 0.56 & 1.19 \\
 & Heavy Blur & 0.54 & 0.86 & 0.34 & 0.95 & 0.07 & 1.08 & 0.07 & 1.17 & 0.40 & 1.73 & 0.67 & 1.41 \\
 & Quadrant Avg & 0.40 & 0.69 & 0.44 & 1.18 & 0.08 & 1.21 & 0.07 & 1.19 & 0.39 & 1.67 & 0.57 & 1.23 \\
 & Random & 0.40 & 0.70 & 0.55 & 1.40 & 0.10 & 1.50 & 0.10 & 1.65 & 0.49 & 2.29 & 0.56 & 1.17 \\
 & White & 0.40 & 0.70 & 0.42 & 1.18 & 0.08 & 1.19 & 0.07 & 1.23 & 0.39 & 1.71 & 0.62 & 1.32 \\
 & Ours & \textbf{0.32} & \textbf{0.47} & \textbf{0.24} & \textbf{0.52} & \textbf{0.05} & \textbf{0.65} & \textbf{0.05} & \textbf{0.68} & \textbf{0.30} & \textbf{1.21} & \textbf{0.45} & \textbf{0.64} \\

\bottomrule
\end{tabular}
\caption{Comparison of $\mathcal{W}_1$ and OOD scores in Activation Space for a Domain-Adapted Embedding.}
\label{tab:ood_perturbations_DA}
\end{table*}

\section{Dataset Details}
\subsection{Dataset} The Multi-CIFAR-10 dataset provides a ground truth for concept presence and overlap, mitigating the ambiguity often found in natural image datasets where concept boundaries are poorly defined. Each sample $\mathbf{x} \in \mathcal{D}$ is generated by arranging four distinct images from the CIFAR-10 dataset into a $2 \times 2$ grid. To ensure compatibility with standard pre-trained vision backbones, we upscale each $32 \times 32$ CIFAR-10 sample to $112 \times 112$ pixels, resulting in a final collage of $224 \times 224$ pixels. 

We define "concepts" in this context as the ten CIFAR-10 object classes (e.g., airplane, bird, car). During generation, we select ten images from each class and randomly sample 4 images to be placed into the quadrants. We enforce that no image is repeated within a single collage to ensure concept uniqueness. This ensures that similarity changes are directly attributable to the input perturbations and not due to other sources such as class variance.

\subsection{Controlling Similarity} A critical component of our setup is the controlled generation of image pairs $(\mathbf{x}_i, \mathbf{x}_j)$ to simulate varying degrees of semantic similarity. A pair of collages is maximally dissimilar if they share no images and maximally similar if they share all images. On this note, we do assume \textbf{permutation invariance}, considering only the "bag of concepts" present regardless of their spatial quadrant. This controlled variation allows us to quantitatively assess the effect of latent concept perturbations against common input perturbations: heavy blur, black pixels, average of the area, random pixels, and white pixels; see Figure \ref{fig:all_perturbations} for an example of the resulting data set and these perturbations.

\subsection{SAE Concept Fixing}\label{subsec:details}
To ensure that no \textbf{polysemanticity} occurs, we restrict our experiments to SAEs where each component is responsible for a single specific concept. Specifically, we identify an atom as representing a concept if its activation yields the maximum classification accuracy for that concept. To measure the stability of this process, we record the average accuracy. These results are located in Table 2 of the Appendix. To select the best models for each SAE type, a grid search is conducted where we vary batch size, learning rate, and number of epochs. This allows us to directly compare the effects of concept intervention since each atom maps to a unique semantic class.

\subsection{Domain-Adapted Embedding}
We evaluate a domain-adapted variant of each embedding model via metric learning. Given the original embeddings $\mathbf{A} \in \mathbb{R}^{N \times d}$, we learn a reweighting matrix $\mathbf{M} \in \mathbb{R}^{d \times d}$ to obtain transformed embeddings $\tilde{\mathbf{A}} = \mathbf{A}\mathbf{M}$. After applying row-wise $\ell_2$ normalization to yield $\hat{\mathbf{A}}$, we compute the induced cosine similarity matrix $\mathbf{S} = \hat{\mathbf{A}}\hat{\mathbf{A}}^\top$. 

To establish a semantic ground truth, we define a target similarity matrix $\mathbf{T}$ based on the Jaccard index over the multi-hot concept matrix $\mathbf{Y}$:
$$
\mathbf{T}_{ij} = \frac{(\mathbf{Y}\mathbf{Y}^\top)_{ij}}{\|\mathbf{Y}_i\|_1 + \|\mathbf{Y}_j\|_1 - (\mathbf{Y}\mathbf{Y}^\top)_{ij}}.
$$
Alignment quality is then evaluated over all unordered pairs $i<j$ using Mean Squared Error (MSE) and the Pearson correlation coefficient ($\rho$):
$$
\mathrm{MSE} = \mathbb{E}_{i<j}\!\left[(\mathbf{S}_{ij}-\mathbf{T}_{ij})^2\right], \quad \rho = \mathrm{corr}(\mathbf{S}_{ij}, \mathbf{T}_{ij}).
$$
We train for 10 epochs with a learning rate of $10^{-3}$ and a batch size of 32.

\section{SAE Hyperparameter Selection}
To identify optimal training configurations for each SAE variant, we performed a grid search over key hyperparameters. We evaluated three SAE architectures, TopK SAE, JumpReLU SAE, and Vanilla SAE, across six pretrained embedding models: ResNet, SigLIP, DINOv2, DINOv3, ViT, and ConvNeXt. The number of learned concepts was fixed at 10 (with top-$k$ set to 4 for the TopK SAE), and the search space covered batch sizes in \{4, 8, 16, 32, 64\}, training epochs in \{10, 20\}, and learning rates in \{$10^{-4}$, $5 \times 10^{-4}$, $10^{-3}$, $5 \times 10^{-3}$\}. All SAEs were trained with the Adam optimizer; TopK SAEs used a mean-squared error loss augmented with a dead-code reactivation penalty, while Vanilla SAEs used mean squared error with an $\ell_1$ sparsity penalty ($\lambda = 10^{-4}$). For each configuration, we generated 3,000 composite grid images from CIFAR-10, split into 2,000 training and 1,000 test samples, and repeated the entire procedure 50 times with distinct random seeds to account for dataset variability. Each repetition was run in both the domain-adapted and pre-trained embedding settings. Configurations were evaluated using the average class identifiability accuracy, which measures, for each of the 10 classes, how well a single learned concept can distinguish that class via a threshold on the concept activation. The best hyperparameters can be observed in Table \ref{tab:hyper_results}.

\section{Experimental Details}
For all experiments involving the collage dataset, we train Top-K, JumpReLU, and Vanilla Sparse Autoencoders (SAEs) on 2,000 randomly generated collages. Activations for these autoencoders are extracted from six distinct vision backbones: DINOv2, DINOv3, ResNet50, ConvNeXt, ViT, and SigLIP. During training, SAE variants used the same loss terms as in the hyperparameter tuning and used the best parameters obtained over the search.

For the first three experiments of Faithfulness Eval, we construct a test set of 2,000 images and apply our perturbations to each image individually. We repeat this entire process across 20 independent runs. In the case of OOD, the reported results represent an average across all evaluated SAE architectures and only the pretrained embeddings. For completeness, we also include the Domain-Adapted results in Table \ref{tab:ood_perturbations_DA}. We observe that when we explicitly arrange our embeddings by our synthetic concepts, the overall invasiveness of each perturbation reduces, especially that of our latent space perturbations. 

In the Linear Recoverable Experiment, we repeat the experiment process across 10 independent runs for 100, 500, 1000, 1500, and 2000 concepts and only for the Top-k SAE given its superior performance in the Concept Verification section. We report only the results for our method with 1500 concepts, since in the OOD Analysis, performance was relatively stable thereafter. We report the full results for each method (those that depended on the number of concepts) in Tables \ref{tab:sim_recover_appendix1} and \ref{tab:sim_recover_appendix2}. Our method performed the best for all numbers of concepts except 2000. We do note that the results obtained for this experiment were tests over different pairings, hence our choice to use the Wilcoxon rank-sum test. The final version will unify them.

\section{Dissimilarity Behavior}

In Figure \ref{fig:discordant_example_1}, our query (Fig. \ref{fig:discord_query}) and reference (Fig. \ref{fig:discord_ref}) share two concepts and differ by two concepts. We perturb the "truck" concept (shown masked in Fig. \ref{fig:discord_masked}) and record the similar change across various perturbation methods. A truck and a car share many lower-level visual features (e.g., wheels, windows, chassis). We noticed that removing the truck causes the overall similarity between the two collages to drop, despite being distinct, dissimilar high-level concepts; the truck and the car contribute to the pair's similarity. Because high-level concepts can be viewed as collections of lower-level features, isolating a concept's exact contribution, pertaining to dissimilar concepts, becomes more complex due to this underlying feature overlap. For this reason, we did not include analysis of concepts contributing to dissimilarity in our Perturbation Effect on Similarity experiment.

It is important to note that this pronounced feature overlap is partly an artifact of our synthetic and controlled setup, where we manually constrained the dictionary to only 10 atoms. In practice, the dictionary size would typically be set much higher, allowing the learned concepts to represent more disentangled, granular features rather than broad, overlapping classes. However, determining the optimal number of concepts remains a notoriously difficult hyperparameter to tune. Consequently, the feature entanglement observed in this example—where distinct high-level concepts share underlying visual foundations—is a realistic challenge likely to manifest in broader applications.

\section{OOD Full Experimental Results}

\subsection{OOD Metric Details}
Formally, we report the 1-Wasserstein \cite{wasserstein} distance, $\mathcal{W}_1$, between the empirical distribution of original activations $\mathbf{A}$ (denoted $\mu_a$) and the empirical distribution of the perturbed activations (denoted $\mu_p$), written as $ \mathcal{W}_1(\mu_a, \mu_p)$. Complementary to this, the OOD score measures the local plausibility of individual perturbed points. We adapt Deep-KNN \cite{sun2022knnood} by comparing $k$-nearest neighbor distances in the original activations to those of perturbed embeddings after $\ell_2$ normalization. Specifically, we compute the mean closest neighbor distance within the original embeddings and its standard deviation, apply our perturbation, and recompute the mean closest neighbor distance to the original neighbors.

\subsection{Latent Perturbation Intuition}\label{exp:perturbation_intuition}

During our analysis, we observed that latent concept perturbations are less invasive to a foundation model's embedding space than standard input space perturbations. However, the interventions applied to the collages rendered them completely unidentifiable, resulting in minimal feature preservation. This extreme degradation is not strictly necessary; many input perturbations, such as blurring, can be calibrated to retain some degree of the underlying visual features. To address this disparity, we introduce an intensity parameter, $\alpha \in [0, 1]$, which explicitly controls the strength of the perturbation. At $\alpha = 0.1$, the majority of the original features are preserved, whereas $\alpha = 1.0$ recovers the setting evaluated in our initial experiments. By systematically varying $\alpha$ in increments of $0.1$, we aim to analyze and contextualize the behavior of our concept perturbations relative to standard input space perturbations. Figure \ref{fig:fid_ood_models} illustrates the impact of these scaled perturbations, with the top row detailing the $\mathcal{W}_1$ and the bottom row displaying the OOD scores across all models.

\begin{table*}[ht]
  \centering
  
  % --- FIRST TABLE ---
  \begin{tabular}{lcccc}
    \toprule
    Explanation Method & 100 & 500 & 1000 & 1500 \\
    \midrule
    CAV Multiplication & 0.86 / 0.84 & 0.88 / 0.85 & -866.35 / -898.69 & -6778.55 / -5838.94 \\
    CAV Subtraction & -0.05 / -0.05 & -0.28 / -0.26 & -0.87 / -0.84 & -3.01 / -3.10 \\
    Blur (ours) & 0.81 / 0.87 & 0.83 / 0.89 & 0.80 / 0.87 & 0.73 / 0.82 \\
    Blur Q Only & 0.66 / 0.73 & 0.62 / 0.71 & 0.43 / 0.57 & -0.33 / 0.04 \\
    Blur R Only & 0.66 / 0.73 & 0.62 / 0.71 & 0.45 / 0.58 & -0.48 / -0.01 \\
    Ours & 0.90 / 0.93 & 0.96 / 0.97 & 0.97 / 0.95 & 0.94 / 0.87 \\
    Ours Q only & 0.76 / 0.81 & 0.90 / 0.91 & 0.88 / 0.86 & 0.30 / -0.41 \\
    Ours R only & 0.76 / 0.81 & 0.90 / 0.90 & 0.90 / 0.86 & 0.36 / -0.09 \\
    Ours (both) & 0.93 / 0.85 & 0.96 / 0.97 & 0.96 / 0.98 & 0.91 / 0.94 \\
    \bottomrule
  \end{tabular}
  \caption{Averaged $R^2$ performance on Linear Recoverability experiment (Excluding 2000 Concepts). Values reported as Cosine / Euclidean.}
  \label{tab:sim_recover_appendix1}

  \vspace{2em} % Adds vertical space between the two tables

  % --- SECOND TABLE ---
  \begin{tabular}{lc}
    \toprule
    Explanation Method & 2000 \\
    \midrule
    CAV Multiplication & -419202.28 / -268497.29 \\
    CAV Subtraction & -175.95 / -183.79 \\
    Blur (ours) & 0.66 / 0.77 \\
    Blur Q Only & -0.66 / -0.23 \\
    Blur R Only & -0.71 / -0.31 \\
    Ours & -14.81 / -230.06 \\
    Ours Q only & -113.48 / -561.86 \\
    Ours R only & -24.75 / -93.08 \\
    Ours (both) & -0.92 / -23.95 \\
    \bottomrule
  \end{tabular}
  \caption{Averaged $R^2$ performance on Linear Recoverability experiment (2000 Concepts Only). Values reported as Cosine / Euclidean.}
  \label{tab:sim_recover_appendix2}
  
\end{table*}

\end{document}